\def\Figref#1{Figure~\ref{#1}}
\def\Secref#1{Section~\ref{#1}}
\def\Apxref#1{Appendix~\ref{#1}}
\def\twosecrefs#1#2{sections \ref{#1} and \ref{#2}}
\def\eqref#1{equation~\ref{#1}}
\def\Eqref#1{Equation~\ref{#1}}
\def\Thmref#1{Theorem~\ref{#1}}
\def\Lemref#1{Lemma~\ref{#1}}
\def\1{\bm{1}}
\def\vzero{{\bm{0}}}
\def\vb{{\bm{b}}}
\def\ve{{\bm{e}}}
\def\vv{{\bm{v}}}
\def\vw{{\bm{w}}}
\def\vx{{\bm{x}}}
\def\vy{{\bm{y}}}
\def\mA{{\bm{A}}}
\def\mB{{\bm{B}}}
\def\mC{{\bm{C}}}
\def\mM{{\bm{M}}}
\def\mP{{\bm{P}}}
\def\mS{{\bm{S}}}
\def\mU{{\bm{U}}}
\def\mV{{\bm{V}}}
\def\mW{{\bm{W}}}
\def\mX{{\bm{X}}}
\DeclareMathAlphabet{\mathsfit}{\encodingdefault}{\sfdefault}{m}{sl}
\SetMathAlphabet{\mathsfit}{bold}{\encodingdefault}{\sfdefault}{bx}{n}
\def\gE{{\mathcal{E}}}
\def\gG{{\mathcal{G}}}
\def\gL{{\mathcal{L}}}
\def\gN{{\mathcal{N}}}
\def\gO{{\mathcal{O}}}
\def\gV{{\mathcal{V}}}
\def\gZ{{\mathcal{Z}}}
\newcommand{\E}{\mathbb{E}}
\newcommand{\R}{\mathbb{R}}
\newtheorem{theorem}{Theorem}[section]
\newtheorem{lemma}[theorem]{Lemma}
\newtheorem{lemmano}{}{}
\newtheorem{prop}[theorem]{Proposition}
\newtheorem{definition}[theorem]{Definition}
\newcommand{\poly}{\mathrm{poly}}
\newcommand{\diag}{\mat{diag}}
\newcommand{\tr}{\mathrm{tr}}
\def\vw{\mathbf{w}}
\newcommand{\mat}[1]{#1}
\newcommand{\ex}{\mathop{\mathbb{E}}\limits}
\newcommand{\revise}[1]{\textcolor{red}{#1}}
\newenvironment{itemize*}{\begin{itemize}[leftmargin=*,topsep=0pt]\setlength{\itemsep}{0pt}\setlength{\parskip}{0pt}}{\end{itemize}}
\newenvironment{enumerate*}{\begin{enumerate}[leftmargin=*,topsep=0pt]\setlength{\itemsep}{0pt}\setlength{\parskip}{0pt}}{\end{enumerate}} \usepackage{amssymb}
\newcommand{\gd}{\texttt{GD}}
\newcommand{\gdalg}{\texttt{GD-Alg}}
\newcommand{\gdt}{\texttt{GD2}}
\newcommand{\gdstep}{\gd_{step}^{\eta,t_0}}
\newcommand{\gdreg}{\gd_{reg}^{\lambda}}
\newcommand{\gdpop}{\gd_{pop}}
\newcommand{\gdtworeg}{\gdt^{\lambda}_{reg}}
\newcommand{\reptile}{\texttt{Reptile}}
\newcommand{\replearn}{\texttt{RepLearn}}
\newcommand{\metalearn}{\texttt{Meta}}
\newcommand{\suml}{\sum\limits}
\newcommand{\prodl}{\prod\limits}
\newcommand{\wf}{\mA}
\newcommand{\ws}{\vw}
\newcommand{\Ws}{\mW}
\newcommand{\wo}{{\vw_*}}
\newcommand{\Wo}{{\mW_*}}
\newcommand{\bwo}{{\bar{\vw}_*}}
\newcommand{\Alg}{{\texttt{Alg}}}
\begin{document}

\title{A Sample Complexity Separation between Non-Convex and Convex Meta-Learning}
\date{}
\author[1]{{\large Nikunj Saunshi}}
\author[1]{\large Yi Zhang}
\author[2]{\large Mikhail Khodak}
\author[1,3]{\large Sanjeev Arora}

\affil[1]{\small Department of Computer Science, Princeton University}
\affil[ ]{\texttt {\{nsaunshi, y.zhang, arora\}@cs.princeton.edu}}
\affil[2]{\small School of Computer Science, Carnegie Mellon University}
\affil[ ]{\texttt {khodak@cmu.edu}}
\affil[3]{\small Institute for Advanced Study}
\maketitle

\begin{abstract}
One popular trend in meta-learning is to learn from many training tasks a common initialization for a gradient-based method that can be used to solve a new task with few samples.
The theory of meta-learning is still in its early stages, with several recent learning-theoretic analyses of methods such as Reptile \citep{nichol:18} being for {\em convex models}.
This work shows that convex-case analysis might be insufficient to understand the success of meta-learning, and that even for non-convex models it is important to look inside the optimization black-box, specifically at properties of the optimization trajectory.
We construct a simple meta-learning instance that captures the problem of one-dimensional subspace learning.
For the convex formulation of linear regression on this instance, we show that the new task sample complexity of any {\em initialization-based meta-learning} algorithm is $\Omega(d)$, where $d$ is the input dimension.
In contrast, for the non-convex formulation of a two layer linear network on the same instance, we show that both Reptile and multi-task representation learning can have new task sample complexity of $\gO(1)$, demonstrating a separation from convex meta-learning.
Crucially, analyses of the training dynamics of these methods reveal that they can meta-learn the correct subspace onto which the data should be projected. \end{abstract}

\section{Introduction}
\label{sec:intro}

We consider the problem of {\em meta-learning}, or {\em learning-to-learn} \citep{thrun:98}, in which the goal is to use the data from numerous training tasks to reduce the sample complexity of an unseen but related test task.
Although there is a long history of successful methods in meta-learning and the related areas of multi-task and lifelong learning \citep{evgeniou:04,ruvolo:13}, recent approaches have been developed with the diversity and scale of modern applications in mind.
This has given rise to simple, model-agnostic methods that focus on learning a good initialization for some gradient-based method such as stochastic gradient descent (SGD), to be run on samples from a new task \citep{finn:17,nichol:18}.
These methods have found widespread applications in a variety of areas such as computer vision \citep{nichol:18}, reinforcement learning \citep{finn:17}, and federated learning \citep{mcmahan:17}.

Inspired by their popularity, several recent learning-theoretic analyses of meta-learning have followed suit, eschewing customization to specific hypothesis classes such as halfspaces \citep{maurer:13,balcan:15} and instead favoring the convex-case study of gradient-based algorithms that could potentially be applied to deep neural networks \citep{denevi:19a,khodak:19b}.
This has yielded results showing that meta-learning an {\em initialization} by using methods similar to Reptile \citep{nichol:18} for convex models leads to a reduction in sample complexity of unseen tasks.
These benefits are shown using natural notions of task-similarity like the average distance between the risk minimizers of tasks drawn from an underlying meta-distribution.
A good initialization in these models is one that is close to the population risk minimizers for tasks in this meta-distribution.

In this paper we argue that, even in some simple settings, such convex-case analyses are insufficient to understand the success of initialization-based meta-learning algorithms. For this purpose, we pose a simple instance for meta-learning linear regressors that share a one-dimensional subspace, for which we prove a sample complexity separation between convex and non-convex methods. 
Specifically, our contributions are the following:
\begin{itemize}[leftmargin=*]
	\item We show, in the convex formulation of linear regression on this instance, a new task sample complexity lower bound of $\Omega(d)$ for {\em any initialization-based meta-learning algorithm}. This suggests that 
	\emph{no} amount of meta-training data can yield an initialization that can be used by a common gradient-based within-task algorithms to solve a new task with fewer samples than if no meta-learning had been done; thus initialization-based meta-learning in the convex formulation fails to learn the underlying task-similarity.
	\item We show for the same instance that formulating the model as a two-layer linear network -- an over-parameterization of the same hypothesis class -- allows a Reptile-like procedure to use training tasks from this meta-learning instance and find an initialization for gradient descent that will have $\gO(1)$ sample complexity on a new task. To the best of our knowledge, this is the first sample complexity analysis of initialization-based meta-learning algorithms in the non-convex setting.
	\item Central to our proof is a trajectory-based analysis to analyze properties of the solution found by a specific procedures like Reptile or gradient descent on a representation learning objective. For the latter, we show that looking at the trajectory is crucial as not all minimizers can learn the subspace structure.
	\item Finally, we revisit existing upper bounds for the convex case. We show that our lower bound does not contradict these upper bounds, since their task similarity measure of average parameter distance is large in our case. We complement this observation by proving that the existing bounds are tight, in some sense, and going beyond them will require additional structural assumptions.
\end{itemize}
\paragraph{Paper organization:}
We discuss related work in \Secref{sec:related}.
\Secref{sec:meta} sets up notation for the rest of the paper, formalizes initialization-based meta-learning methods and defines the subspace meta-learning instance that we are interested in.
The lower bound for linear regression is stated in \Secref{sec:lower_bound}, while the corresponding upper bounds for non-convex meta-learning with two-layer linear network is provided in \Secref{sec:upper_bound}.
While all proofs are provided in the appendix, we give a sketch of the proofs for the upper bounds in \Secref{sec:proof_sketch} to highlight the key steps in the trajectory-based analysis and discuss why such an analysis is important.
A discussion about tightness of existing convex-case upper bounds can be found in \Secref{sec:gvlower}.
 
\section{Related Work}
\label{sec:related}

There is a rich history of theoretical analysis of learning-to-learn \citep{baxter:00,maurer:05,maurer:16}.
Our focus is on a well-studied setting in which tasks such as halfspace learning share a common low-dimensional subspace, with the goal of obtaining sample complexity depending on this sparse structure rather than on the ambient dimension \citep{maurer:09,maurer:13,balcan:15,denevi:18a,bullins:19,khodak:19b}.
While these works derive specialized algorithms, we instead focus on learning an initialization for gradient-based methods such as SGD or few steps of gradient descent \citep{finn:17,nichol:18}.
Some of these methods have recently been studied in the convex setting \citep{denevi:19a,khodak:19b,zhou:19}.
Our results show that such convex-case analyses cannot hope to show adaptation to an underlying low-dimensional subspace leading to dimension-independent sample complexity bounds.
On the other hand, we show that their guarantees using distance-from-initialization are almost tight for the meta-learning of convex Lipschitz functions.

To get around the limitations of convexity for the problem of meta-learning a shared subspace, we instead study non-convex models.
While the optimization properties of gradient-based meta-learning algorithms have been recently studied in the non-convex setting \citep{fallah:19,rajeswaran:19,zhou:19}, these results only provide stationary-point convergence guarantees and do not show a reduction in sample complexity, the primary goal of meta-learning.
Our theory is more closely related to recent empirical work that tries to understand various inherently non-convex properties of learning-to-learn.
Most notably, \citet{arnold:19} hypothesize and show some experimental evidence that the success of gradient-based meta-learning requires non-convexity, a view theoretically supported by our work.
Meanwhile, \citet{raghu:19} demonstrate that the success of the popular MAML algorithm \citep{finn:17} is likely due to its ability to learn good data-representations rather than adapt quickly;
in this work our subspace meta-learning guarantees are for a representation learning algorithm that only adapts the last layer at test-time.

Our results draw upon work motivated by understanding deep learning that analyzes trajectories and implicit regularization in deep linear neural networks \citep{saxe:14,gunasekar:18,saxe:19,gidel:19}. 
The analysis of solutions found by gradient flow in deep linear networks by \citep{saxe:14,gidel:19} form a core component of our analysis.
In this vein, \citet{lampinen:19} recently studied the dynamics of deep linear networks in the context of transfer learning and show that jointly learning linear representations using two tasks will yield smaller error on each one than individual task learning.
However their guarantees are not for an unseen task drawn from a distribution, but only for two given tasks, and crucially not for gradient-based meta-learning methods. 
\section{Meta-Learning Setup}
\label{sec:meta}
\subsection{Notations}\label{subsec:notations}
Let $[N]$ denote the set $\{1,\dots,N\}$.
We use $\vx$ for vectors, $\mM$ for matrices, $I_d$ for $d$ dimensional identity matrix and $\vzero_d$ for the all-zero vector in $d$ dimensions.
$\|\cdot\|$ is used to denote the $\ell_2$ norm.
For a function $\ell:X\times Y\rightarrow Z$, we use $\ell(x,\cdot):Y\rightarrow Z$ to denote a function of the second argument when the first argument is set to $x$.
For a finite set $S$, $x\sim S$ denotes sampling uniformly from $S$.
We also need the ReLU function $[x]_+=x\mathbbm{1}\{x\ge0\}$.
For a sequence $\{a_1,\dots,a_T\}$, we use $a_{i:j}$ for $j\ge i$ to denote the set $\{a_i,\dots,a_j\}$.

\subsection{Task distribution and excess risk}\label{subsec:tasks}
We are interested in regression tasks of the following form
\begin{align}\label{eq:task_loss}
\ell_{\rho}(\theta):=\ex_{(\vx,y)\sim\rho}(f(\vx, \theta)-y)^2
\end{align}
where we abuse notation and use $\rho$ to denote a task as well as its associated data distribution.
The input $\vx$ is a vector in $\R^d$ and $y$ is real-valued scalar.
The function $f:\R^d\times\Theta \rightarrow \R$ is a regressor of choice, e.g. a linear function or a deep neural network, that is parametrized by $\theta\in\Theta$.
Often one only has access to samples $S=\{(x_i,y_i)\}_{i=1}^{n}$ from the unknown distribution $\rho$, and the empirical risk is defined as
\begin{align}\label{eq:emp_loss}
	\ell_S(\theta) = \ex_{(\vx,y)\sim S}(f(\vx,\theta)-y)^2
\end{align}

While various formalizations for meta-learning exist, we present one that is most convenient for the presentation of this work.
In our meta-learning setting, we assume that there is an underlying unknown distribution $\mu$ over tasks.
Given access to a $T$ training tasks $\rho_1,\dots,\rho_T$ sampled from $\mu$, the goal of a meta-learner $\metalearn$ is to learn some underlying structure that relates the tasks in $\mu$ and output a within-task algorithm $\Alg=\metalearn(\rho_{1:T})$ that can be used to solve a new task sampled from $\mu$.
To solve a new task $\rho\sim\mu$ by using training set $S$ from $\rho$, the meta-learned algorithm $\Alg$ outputs parameters $\Alg(S)\in\Theta$.
The average risk of an algorithm that uses $n$ samples from a new task is
\begin{align*}
	\gL_n(\Alg,\mu)=\ex_{\rho\sim\mu}\ex_{S\sim\rho^n}\ell_\rho(\Alg(S))
\end{align*}
We define the excess risk of $\Alg$ as $\gE_n(\Alg,\mu)=\gL_n(\Alg,\mu)-\gL^*(\mu)$, where $\gL^*(\mu)=\ex_{\rho\sim\mu}\inf\limits_{\theta\in\Theta}\ell_\rho(\theta)$ is the minimum achievable risk by the class $\Theta$ with complete knowledge of the distribution $\mu$.

\subsection{Initialization-based meta-learning}\label{subsec:gd_based}
We focus on a popular approach in meta-learning that uses training tasks to learn an {\em initialization} of the model parameters.
This initialization is fed into a pre-specified {\em gradient-based} algorithm that updates model parameters starting from this initialization by using samples from a new task.
We refer to these methods as {\em initialization-based} meta-learning methods and they are restricted to return within-task algorithms of the form $\Alg(\cdot)=\gdalg(\cdot;\theta_{init})$, where $\gdalg$ runs some gradient-based algorithm starting from the initialization $\theta_{init}\in\Theta$ on an objective function that depends on the input training set $S$.
For example, we can denote the algorithm of gradient descent as $\gd(S;\theta_{init})$, that runs gradient descent to convergence on the empirical risk $\ell_S$ by starting from the initialization $\theta_{init}$.
The definitions of the various initialization-based meta-learning and within-task algorithms that we analyze are in \twosecrefs{subsec:lower_within_task}{subsec:upper_within_task}.
In the subsequent sections, we will concretely define the distribution of tasks $\mu$ and the meta-learning algorithms we are interested in.

\subsection{Meta-learning a subspace}\label{subsec:subspace}
For meta-learning to be meaningful, the tasks must share some common structure.
Here we focus on a structure that assumes the existence of a low-dimensional representation of the data that suffices to solve all the tasks, specifically, a linear representation.
To capture this idea, we construct a simple but instructive meta-learning instance.

We are interested in tasks $\rho_{\vw}$ for $\vw\in\R^d$, where the distribution is defined as follows
\begin{align}\label{eq:distribution}
(\vx,y)\sim\rho_{\vw}: \vx\sim\gN(0,I_d), y\sim\gN(\vw^{\top}\vx,\sigma^2)
\end{align}
The target $y$ for $\vx$ is a linear function of $\vx$ plus a zero-mean Gaussian noise\footnote{We can extend all results to $y=\vw^{\top}\vx+\xi$, where $\xi$ is independent of $\vx$, just has 0 mean and variance $\sigma^2$.} added to it.
A {\em meta-learning instance} $\mu_{\wo}$ is defined as uniform distribution over two tasks $\rho_{\wo}$ and $\rho_{-\wo}$ for a fixed but unknown vector $\wo\in\R^d$.
Note that for every point $\vx\in\R^d$, only the projection of $\vx$ onto the direction of $\wo$ is necessary to solve all tasks in $\mu_\wo$.
Thus the hope is that a meta-learning algorithm picks up on this structure and learns to project data onto this subspace for sample efficiency on a new task.
The average task risk and excess risk for an algorithm $\Alg$ can then be written as
\begin{align}\label{eq:excess_risk_w}
	\gL_n(\Alg, \mu_{\wo}) &= \ex_{s\sim\{\pm1\}}\ex_{S\sim\rho_{s\wo}^n}\ell_{s\wo}(\Alg(S))\nonumber\\
	\gE_n(\Alg, \mu_{\wo}) &= \gL_n(\Alg, \mu_{\wo})-\gL^*(\mu_{\wo})
\end{align}
In the subsequent sections, we describe the convex setting of linear regression and the equally expressive non-convex setting of a two-layer linear network regressor.
Our main result shows that while no meta-learning algorithm can learn a meaningful initialization for a gradient-based within-task algorithm in the convex setting, standard meta-learning algorithms like Reptile on a two-layer linear network can in fact learn to project the data on the one-dimensional subspace and thus reduce the sample complexity for a new task from $\Omega(d)$ to $\gO(1)$.

\section{Convex Meta-Learning Lower Bound}
\label{sec:lower_bound}
In this section, we use a regression function $f$ that is linear in $\vx$ to solve the meta-learning instance $\mu_\wo$.
We have $\Theta=\R^d$, the parameters are $\theta=\vw, \vw\in\R^d$ and the regressor is $f(\vx, \vw)\coloneqq \vw^{\top}\vx$.
Using the definition of the distribution in \Eqref{eq:distribution}, for $s\in\{\pm1\}$ we get
\begin{align}\label{eq:linear_loss_closed_form}
	\ell_{s\wo}(\vw) = \ex_{(\vx,y)\sim\rho_{s\wo}}(\vw^{\top}\vx-y)^2 = \|\vw-s\wo\|^2+\sigma^2
\end{align}

Thus we have
$\gL^*(\mu_{\wo})=\ex_{s\sim\{\pm1\}}\inf\limits_{\vw\in\R^d}\ell_{s\wo}(\vw)=\sigma^2$.

\subsection{Within-task algorithms}\label{subsec:lower_within_task}
As described in \Secref{subsec:tasks}, we consider within-task algorithms that are based on gradient descent.
A meta-learner is allowed to learn an initialization $\vw_0\in\R^d$ that is used as a starting point to run a gradient-based algorithm on a new task.
We will show lower bounds for the following algorithms

$\gdstep(S;\vw_0)$ - {\bf GD for $t_0$ steps}:\\
Runs gradient descent with learning rate $\eta$ for $t_0$ steps on $\ell_S$ (defined in \Eqref{eq:emp_loss}). Starting from $\vw_0$, follow the dynamics below and return $\vw_{t_0}$.
\begin{align*}
	\vw_{t+1} = \vw_{t} - \eta \nabla_{\vw}\ell_S(\vw_t)
\end{align*}

$\gdreg(S;\vw_0)$ - {\bf $\lambda$-regularized GD}:\\
Runs gradient descent with vanishingly small learning rate (gradient flow) to convergence on $\ell_{S,\lambda}$
\begin{align}\label{eq:reg_loss}
	\ell_{S,\lambda}(\vw)=\ex_{(\vx,y)\sim S}\left[(\vw^{\top}\vx-y)^2\right] + \frac{\lambda}{2}\|\vw\|^2
\end{align}
Starting from $\vw_0$, follow the dynamics below, return $\vw_{\infty}$.
\begin{align*}
	\frac{d\vw_t}{dt} = -\nabla_{\vw}\ell_{S,\lambda}(\vw_t)
\end{align*}

In the next section we will provide lower bounds on the excess risk for all initialization-based meta-learning algorithms that return initializations for the above algorithms.
Note that some of these algorithms have been used in prior work;
most notably, $\gdstep$ is the base-learner used by MAML \citep{finn:17}, so our convex-case lower-bounds hold directly for any initialization it might learn.

\subsection{Lower bounds}\label{subsec:lower_bounds}
We use the definition of excess risk $\gE_n$ from \Eqref{eq:excess_risk_w} and formally define sample complexity for a meta-learned within-task algorithm below
\begin{definition}[Sample complexity]
The minimum number of samples needed from a new task for a within-task algorithm $\Alg$ to have excess risk smaller than $\epsilon$ is
\begin{align}\label{eq:n_eps}
	n_\epsilon(\Alg,\mu_\wo) = \min\{n\in\mathbb{N} : \gE_n(\Alg,\mu_\wo)\le\epsilon\}
\end{align}
\end{definition}
We will proceed to show a lower bound for all meta-learning algorithms that return an initialization to be used by algorithms $\gdstep$ and $\gdreg$ described in the previous subsection.
We assume that $\|\wo\|=\sigma=r$ to make the noise of the same order as the signal and for simplicity of presentation.
The lower bounds in more generality can be found in \Apxref{asubsec:lower_bounds}.
\begin{theorem}\label{thm:lower_bound_gd}
Suppose $\|\wo\|=\sigma=r$ and $\epsilon\in\left(0,\frac{r^2}{2}\right)$. For every initialization $\vw_0\in\R^d$ that can be learned by an initialization-based meta-learning algorithm, the number of samples needed to have $\epsilon$ excess risk on a new task is
\begin{align*}
\min\limits_{\lambda\ge 0}~~n_\epsilon(\gdreg(\cdot;\vw_0), \mu_\wo) &= \Omega\left(\frac{dr^2}{\epsilon}\right)\\\min\limits_{\eta>0,t_0\in\mathbb{N}_+}n_\epsilon(\gdstep(\cdot;\vw_0), \mu_\wo) &= \Omega\left(\frac{dr^2}{\epsilon}\right)\end{align*}
\end{theorem}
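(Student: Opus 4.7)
The strategy is to write both within-task algorithms in a common affine form in the response vector, use the $\pm\wo$ symmetry of $\mu_\wo$ to eliminate dependence on the initialization, and reduce the resulting bias--variance quantity to a pointwise minimization over linear estimators that is lower-bounded via the rotational invariance of the Gaussian design.

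First I would express the outputs of $\gdreg(\cdot;\vw_0)$ and $\gdstep(\cdot;\vw_0)$ on a dataset $S=(\mX,\vy)\in\R^{n\times d}\times\R^n$ in the common form $\vw(S)=\vb_S+\mA_S\vy$, where $\vb_S\in\R^d$ and $\mA_S\in\R^{d\times n}$ depend only on $\mX$, $\vw_0$, and the algorithm parameters; closed forms follow from the strong convexity of the ridge objective and from unrolling the linear GD recursion $\vw_{t+1}=(I_d-\eta\hat\Sigma)\vw_t+\eta\mX^\top\vy/n$. Substituting $\vy=s\mX\wo+\bm{\xi}$ with $\bm{\xi}\sim\gN(\vzero_n,\sigma^2 I_n)$ and averaging over $s\in\{\pm1\}$ and $\bm{\xi}$, every $s$-cross-term cancels and yields
\begin{align*}
\gE_n(\Alg,\mu_\wo)=\E_\mX\bigl[\|(\mA_S\mX-I_d)\wo\|^2+\|\vb_S\|^2+\sigma^2\|\mA_S\|_F^2\bigr]\ge\E_\mX\|(\mA_S\mX-I_d)\wo\|^2+\sigma^2\,\E_\mX\|\mA_S\|_F^2,
\end{align*}
where $\|\vb_S\|^2\ge 0$ is discarded to make the bound uniform in $\vw_0$.

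Next I would exploit rotational invariance. Since $\mA_S$ for both algorithms is built from $\mX^\top\mX$ and $\mX^\top$, it satisfies the equivariance $\mA_S(\mX\mR^\top)=\mR\,\mA_S(\mX)$ for every $\mR\in O(d)$; combined with the invariance of the isotropic design, averaging over a uniform $\mR$ gives
\begin{align*}
\E_\mX\|(\mA_S\mX-I_d)\wo\|^2=\tfrac{r^2}{d}\,\E_\mX\|\mA_S\mX-I_d\|_F^2.
\end{align*}
Using $\sigma=r$, the excess risk is then bounded below by $r^2\,\E_\mX\bigl[\tfrac{1}{d}\|\mA_S\mX-I_d\|_F^2+\|\mA_S\|_F^2\bigr]$. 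For each fixed $\mX$ this is a convex quadratic in $\mA_S$ whose unconstrained minimizer is $\mA^\star=\mX^\top(\mX\mX^\top+dI_n)^{-1}$, and an SVD computation with singular values $\tau_1,\dots,\tau_k$ and $k=\min(n,d)$ shows the minimum value equals $\tfrac{d-k}{d}+\sum_{i=1}^{k}(\tau_i^2+d)^{-1}$. An AM--HM step gives $\sum_i (\tau_i^2+d)^{-1}\ge k^2/(\|\mX\|_F^2+kd)$, and Jensen together with $\E\|\mX\|_F^2=nd$ produces the key bound $\gE_n\gtrsim r^2\,d/(n+d)$. Requiring this to be at most $\epsilon<r^2/2$ yields $n=\Omega(dr^2/\epsilon)$.

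The main obstacle I expect is the closed-form reduction: the meta-learner is free to pick $\vw_0$ as a function of the training tasks and hence of $\wo$ (up to sign), so the cancellation of $\vw_0$ must come entirely from the sign-symmetry of $\mu_\wo$ and not from any specific choice of $\vw_0$; tracking this requires that $\mA_S$ and $\vb_S$ be genuinely independent of the task direction. The rotation-equivariance is a one-line check for ridge flow and a slightly longer check for $\gdstep$ via commutativity of the polynomial $p(\hat\Sigma)=I_d-(I_d-\eta\hat\Sigma)^{t_0}$ with the conjugation, and the spectral minimization is a clean linear-algebra exercise. The AM--HM plus Jensen step fixes the final constant and must be run both in the $n\le d$ regime (where $(d-k)/d$ dominates and yields $\Omega(r^2)$) and the $n\ge d$ regime (where $\sum_i(\tau_i^2+d)^{-1}$ dominates and yields $\Omega(r^2 d/n)$) to cover the entire range $\epsilon\in(0,r^2/2)$.
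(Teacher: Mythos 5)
Your proposal is correct and arrives at exactly the same final bound $\gE_n(\Alg,\mu_\wo)\ge r^2\bigl(\tfrac{d-k}{d}+\tfrac{k}{d+\max(n,d)}\bigr)$ with $k=\min(n,d)$ as the paper, but via a genuinely different route. The paper (Appendix A.2) parametrizes the estimator as $\mA_\mX\vw_0+\mB_\mX\vv+\mC_\mX\xi$, diagonalizes $\Sigma_\mX=\mV\mS\mV^\top$, uses rotational invariance of $\mV$ conditioned on $\mS$, and then applies a constrained minimization lemma $\min_{a+b=1}\alpha a^2+\beta b^2=\tfrac{\alpha\beta}{\alpha+\beta}$ that exploits the specific shrinkage structure of ridge and early-stopped GD (their $\mB_\mX$ and $\mC_\mX$ satisfy an identity $a+b=1$ per eigenvalue), followed by a Jensen-type convexity argument over eigenvalues. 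You instead write the estimator as an affine map $\vy\mapsto\vb_S+\mA_S\vy$, drop $\|\vb_S\|^2\ge 0$, and lower-bound by the \emph{unconstrained} pointwise minimum over all matrices $\mA$, using the $O(d)$-equivariance of $\mA_S(\cdot)$ to reduce the bias term to a Frobenius norm; the AM--HM plus Jensen steps then play the role of the paper's convexity lemma. Your approach buys a cleaner and strictly stronger intermediate statement (the bound holds for \emph{any} rotation-equivariant linear-in-$\vy$ estimator, not just the ridge/GD family), and the relaxation to an unconstrained matrix minimum is transparent. The paper's approach makes explicit the bias--variance tradeoff as a scalar constraint per eigenvalue, which pinpoints precisely which structural feature of ridge/early stopping prevents adaptation. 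Two minor things to watch in a full writeup: verify that $\mA_S$ for $\gdstep$ satisfies the $O(d)$-equivariance via the polynomial $p(\Sigma_\mX)=I_d-(I_d-\eta\Sigma_\mX)^{t_0}$ commuting with conjugation (you flagged this; it is a one-liner), and make explicit that $\vb_S$ and $\mA_S$ carry no dependence on the test-task sign $s$ (since $\vw_0$ is chosen before observing $s$), as that is what makes the $s$-cross-terms vanish after averaging over $s\in\{\pm1\}$.
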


\paragraph{Remark.} We remark the strength of the lower bound for the following reasons:
\begin{itemize}[leftmargin=*]
	\item The bound holds even if the meta learner has seen \emph{infinitely} many tasks sampled from $\mu$ and has access to the \emph{population loss} for each task.
	\item Even regularization techniques like explicit $\ell_2$-regularization or early stopping \emph{cannot} benefit from a meta-learned initialization.
	\item Note that the condition $\epsilon\le \nicefrac{r^2}{2}$ is not restrictive since even a trivial learner that always outputs $\vzero_d$ for every task has error exactly $r^2$.
\end{itemize}

This demonstrates that the convex formulation does not do justice to the practical efficacy of such algorithms.
We provide the proof of this result and even tigher lower bounds in the appendix.
The proofs are based on finding a closed-form expression for the solutions found by $\gdreg$ and $\gdstep$ and showing that, in fact, no initialization has better excess risk than the trivial initialization of $\vzero_d$.

\section{Non-Convex Meta-Learning Upper Bound}
\label{sec:upper_bound}
We now use a two layer linear network as the regressor $f$.
The parameters in this case are $\theta=(\wf,\ws), \wf\in\R^{m\times d}, \ws\in\R^n$.
The regressor $f$ is then defined as $f(\vx,(\wf,\ws))\coloneqq \ws^{\top}\wf \vx$.
As before,
\begin{align}\label{eq:2layer_loss_closed_form}
	\ell_{s\wo}((\wf,\ws)) &= \|\wf^{\top}\ws-s\wo\|^2+\sigma^2
\end{align}
Again it is easy to see that $\gL^*(\mu_\wo)=\sigma^2$.
We now describe the within-task algorithms of interest and the initialization-based meta-algorithms for which we show guarantees.

\subsection{Within-task and meta-learning algorithms}\label{subsec:upper_within_task}
We are interested in the following within-task algorithms.

$\gdpop(\rho;(\wf_0, \ws_0))$ - {\bf Population GD}:\\
Runs gradient descent with vanishingly small learning rate (gradient flow) to convergence on $\ell_\rho$.
Starting from $(\wf_0, \ws_0)$, follow the dynamics below, return $(\wf_{\infty}, \ws_{\infty})$.
\begin{align*}
	\frac{d\wf_{t}}{dt} = -\nabla_{\wf} \ell_{\rho}((\wf_t,\ws_t));~ \frac{d\ws_{t}}{dt} = -\nabla_{\ws} \ell_{\rho}((\wf_t,\ws_t))
\end{align*}

$\gdtworeg(S;(\wf_0, \ws_0))$ - {\bf Second-layer regularized GD}:\\
Runs gradient descent with tiny learning rate (gradient flow) to convergence on $\ell_{S,\lambda}(\ws;\wf_0)$
\begin{align}\label{eq:reg_loss_2layer}
	\ell_{S,\lambda}(\ws;\wf_0)=\ex_{(\vx,y)\sim S}(\ws^{\top}\wf_0\vx-y)^2 + \frac{\lambda}{2}\|\ws\|^2
\end{align}
Starting from $\ws_0$, follow the dynamics below by only updating $\ws$, return $(\wf_0,\ws_{\infty})$
\begin{align*}
	\frac{d\ws_{t}}{dt} = -\nabla_{\ws} \ell_{S,\lambda}(\ws_t;\wf_0))
\end{align*}

We will be showing guarantees for initializations learned by two meta-learning algorithms, \reptile~ and \replearn.
A meta-learner receives $T$ training tasks $\{\rho_1,\dots,\rho_T\}$ sampled independently from $\mu_{\wo}$; each task is either $\rho_{\wo}$ or $\rho_{-\wo}$.
For simplicity of analysis, we assume that the learner has access to the population losses for these tasks, since we are mainly concerned about the new task sample complexity.
While simplistic, showing guarantees even in this setting requires a non-trivial analysis.
Note that the lower bound for linear regression holds even with access to population loss function for any number of training tasks.
The first meta-learning algorithm of interest is the following

$\reptile(\rho_{1:T}, (\wf_0,\ws_0))$ - {\bf Reptile:}\\
Starting from $(\wf_{0},\ws_{0})$, the initialization maintained by the algorithm is sequentially updated as $(\wf_{i+1},\ws_{i+1})=(1-\tau)(\wf_{i},\ws_{i}) + \tau\gdpop(\ell_{\rho_{i+1}}, (\wf_{i},\ws_{i}))$ for some $0<\tau<1$.
At the end of $T$ tasks, return $\wf_{T}$.

On encountering a new task, \reptile~slowly interpolates between the current initialization and the solution for the new task obtained by running gradient descent on it starting from the current initialization.
As mentioned earlier, this method has enjoyed empirical success \citep{mcmahan:17}.
The second algorithm of interest is reminiscent to multi-task representation learning.

$\replearn(\rho_{1:T}, (\wf_0,\ws_{0,1:T}))$ - {\bf Representation learning:}
Starting from $(\wf_0,\ws_{0,1:T})$, run gradient flow on the following objective function: $\gL_{rep}(\wf,\ws_{1:T}) = \frac{1}{T}\suml_{i=1}^{T} \ell_{\rho_i}(\wf,\ws_{i})$, return $\wf_\infty$ at the end.
\begin{align*}
	\frac{d\wf_t}{dt}=-\nabla_{\wf} \gL_{rep}(\wf_t,\ws_{t,1:T});
	\frac{d\ws_{t,i}}{dt}=-\nabla_{\ws_{t,i}} \gL_{rep}(\wf_t,\ws_{t,i}), i\in[T]
\end{align*}
This is a standard objective for multi-task representation learning used in prior work, occasionally equipped with a regularization term for $\ws_{1:T}$.
For our analysis we do not need an explicit regularizer, just like \citet{saxe:14} and \citet{gidel:19}.

\subsection{Upper bounds}\label{subsec:upper_bounds}
Recall that $\gE_{n}(\gdtworeg(\cdot;(\wf,\vzero_d)),\mu_\wo)$ is the excess risk for the initialization $\wf$ that is used by $\gdtworeg$.
We will show that with access to a feasible number of training tasks, both \reptile~ and \replearn~ can learn an initialization with small $\gE_{n}$.
We first prove the upper bounds for \reptile ~under the assumption that $\|\wo\|=\sigma=r$.
\begin{theorem}\label{thm:upper_bound_reptile}
Starting with $(\wf_0,\ws_0)=(\kappa I_d,\vzero_d)$, let $\wf_T=\reptile(\rho_{1:T},(\wf_0,\ws_0))$ be the initialization learned from $T$ tasks $\{\rho_{1},\dots,\rho_{T}\}\sim_{i.i.d.} \mu_{\wo}^T$. If $T\ge poly(d,r,1/\epsilon,\log(1/\delta),\kappa)$ and $\tau=\gO(T^{-1/3})$, then with probability at least $1-\delta$ over sampling of $T$ tasks,
\begin{align*}
	\min\limits_{\lambda\ge0}~\gE_{n}(\gdtworeg(\cdot;(\wf_T,\vzero_d)), \mu_{\wo}) \le \epsilon + \frac{cr^2}{n}
\end{align*}
for a small constant $c$. Thus with the same probability, we have
\begin{align*}
	\min\limits_{\lambda\ge 0}~n_\epsilon&(\gdtworeg(\cdot;\wf_T,\vzero_d), \mu_\wo) = \gO\left(\frac{r^2}{\epsilon}\right)
\end{align*}
\end{theorem}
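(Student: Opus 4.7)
The plan is to exploit the symmetry of $\mu_\wo$ (equal probability on tasks $\rho_{\wo}$ and $\rho_{-\wo}$) together with the implicit bias of gradient flow on two-layer linear networks to show that $\mA_T$ ends up approximately rank-one in the direction $\hat\wo \coloneqq \wo/\|\wo\|$. Once $\mA_T$ has this structure, $\gdtworeg$ on a new task becomes effectively one-dimensional ridge regression. The starting point of the analysis is the balancedness invariant of gradient flow on $\ell_{\rho_{s\wo}}(\mA,\vw) = \|\mA^\top\vw - s\wo\|^2 + \sigma^2$, namely $\frac{d}{dt}(\mA_t\mA_t^\top - \vw_t\vw_t^\top) = \vzero$. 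From the balanced state $(\kappa I_d,\vzero_d)$, this conservation law, combined with the fact that $\hat\wo$ is the only special direction in the loss, pins the inner flow to a two-dimensional invariant manifold parameterized by scalars $(b_t, a_t)$ with $\mA_t = \kappa I_d + b_t\hat\wo\hat\wo^\top$ and $\vw_t = s a_t \hat\wo$. Solving inside this manifold yields a limit $\tilde\mA = \kappa I_d + \delta\hat\wo\hat\wo^\top$, $\tilde\vw = s\gamma\hat\wo$ with $\gamma(\kappa+\delta)=r$ and $(\kappa+\delta)^2 - \gamma^2 = \kappa^2$.

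The crucial structural consequence is that the $\mA$-update $\tilde\mA - \mA = \delta\hat\wo\hat\wo^\top$ is \emph{independent} of the task sign $s$, while the $\vw$-update is \emph{antisymmetric} in $s$. Hence within the Reptile recursion, the $\hat\wo\hat\wo^\top$-component of $\mA_i$ accumulates constructively across tasks, whereas the $\vw_i$-iterates form a martingale whose $\hat\wo$-component has zero conditional mean. I would decompose $\mA_i = \kappa I_d + \beta_i\hat\wo\hat\wo^\top + E_i$ and $\vw_i = c_i\hat\wo + e_i$, and show inductively that $\beta_i$ grows monotonically to the required magnitude while the residuals $E_i, e_i, c_i$ stay small with high probability. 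Since the inner $\gdpop$ is no longer closed-form when $\vw_i \neq \vzero_d$, a Gronwall-type perturbation argument is needed to bound the deviation of $(\tilde\mA_i,\tilde\vw_i)$ from the symmetric-manifold solution in terms of $\|\vw_i\|$; a martingale concentration bound over the signs $s_1,\ldots,s_T$ then controls $|c_i| = \gO(\sqrt{\tau})$. The scaling $\tau=\gO(T^{-1/3})$ balances three effects: $\beta_i$ grows by $\Theta(\tau\cdot\delta)$ per step, the martingale fluctuation in $c_i$ is of order $\sqrt{\tau}$, and the perturbation errors $\|E_i\|$ compound over $T$ iterations.

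Finally, given $\mA_T$ with large $\beta_T$ and small residuals, $\gdtworeg(S,(\mA_T,\vzero_d))$ amounts to ridge regression in the features $\mA_T\vx \sim \gN(\vzero_d,\mA_T\mA_T^\top)$ with target approximately $s\wo^\top\vx$. The feature covariance has one dominant eigenvalue of order $(\kappa+\beta_T)^2$ along $\hat\wo$ and $d-1$ small eigenvalues of order $\kappa^2$; taking $\kappa$ small in the meta-phase and choosing $\lambda$ of matching order suppresses the $d-1$ noise eigendirections, reducing the problem to effectively one-dimensional ridge regression with excess risk $\gO(r^2/n)$. The meta-level approximation error is absorbed into the $\epsilon$ term, which yields the stated bound and immediately implies the sample complexity claim.

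The principal obstacle is the middle step: controlling how small stochastic perturbations of $\vw_i$ propagate through the inner $\gdpop$ and feed back into $\mA_i$ across $T$ iterations. Off the symmetric manifold the inner flow has no closed form, so a careful perturbation argument that preserves the rank-one accumulation structure in $\beta_i$ is required, and it is precisely this analysis that dictates both the polynomial $T$ requirement and the specific $\tau=\Theta(T^{-1/3})$ scaling.
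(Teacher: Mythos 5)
Your high-level outline is close to the paper's, but the central technical concern you raise — that the inner $\gdpop$ loses its closed form once $\ws_i\neq\vzero_d$, forcing a Gronwall-type perturbation analysis to control off-manifold residuals $E_i,e_i$ — is a phantom obstacle. The two-dimensional manifold $\{(\wf,\ws):\wf=(a-\kappa)\bwo\bwo^\top+\kappa I_d,\ \ws=b\bwo\}$ is \emph{exactly} invariant under the inner gradient flow, not just when $b=0$. In the eigenbasis of $\wf$ with $\bwo$ as first vector, the off-$\bwo$ components of both $\wf$ and $\ws$ have identically zero time-derivative whenever they start at zero, so the flow reduces to two coupled scalar ODEs whose limit is closed-form (Lemma~\ref{lem:gd_dynamics}). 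Since the Reptile update is a convex combination of two points on this manifold, the iterates $(\wf_i,\ws_i)$ never leave it either (Lemma~\ref{lem:reptile_dynamics}). Your residuals $E_i$ and $e_i$ are therefore identically zero, no perturbation argument is needed, and the entire meta-trajectory is captured exactly by the scalar recursion for $(a_i,b_i)$.

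This redirects where the real work lies. What you describe as a supporting step — the martingale concentration over signs — together with a monotonicity argument showing $a_i$ is non-decreasing (using $|a_ib_i|\le r$) is actually the bulk of the proof (\Thmref{thm:ablowup}): one must show that even as $b_i$ fluctuates like a martingale with magnitude $\sqrt{\tau}$, the drift of $a_i$ keeps it growing past any target, and the scaling $\tau=\Theta(T^{-1/3})$ comes from balancing the $\tilde{O}(1/\sqrt{\tau})$ ceiling imposed by $|b_i|$-concentration against the $\Theta((\tau T)^{1/4})$ drift bound — not from compounding perturbation error, since there is none. Your balance-law starting point, the sign-independence of the $\wf$-update, and the ridge-regression endgame (dominant eigenvalue along $\bwo$, small $\kappa^2$ eigenvalues killed by matching $\lambda$) all match the paper's route. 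But as written, the proof would go in circles trying to solve a perturbation problem that does not exist, and would fail to identify and carry out the monotonicity-plus-concentration argument that is the genuine content of the growth estimate.
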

The proof can be found in \Apxref{asubsec:main_results}.
Thus we can show that a standard meta-learning method like $\reptile$ can learn a useful initialization for a gradient-based within-task algorithm like $\gdtworeg$.
A sketch of the proof in \Secref{sec:proof_sketch} will demonstrate that the $\reptile$ update surprisingly amplifies the component along $\wo$ in the spectrum of the first layer $\wf$, while keeping the components orthogonal to $\wo$ unchanged.
Interestingly, even though both $\wo$ and $-\wo$ appear as tasks, the meta-initialization $\kappa>0$ ensures that they do not cancel each other out in the first layer, unlike in the second layer.
In contrast to the convex-case lower bound, we only need $\gO(r^2/\epsilon)$ samples for a new task, thus showing gap of $d$ between convex and non-convex meta-learning in our setting.
We now show a similar result for \replearn~under the assumption of $\|\wo\|=\sigma=r$.
\begin{theorem}\label{thm:upper_bound_replearn}
With $(\wf_{0},\ws_{0,1:T})=(\kappa I_d,\vzero_d,\dots,\vzero_d)$, let $\wf_T=\replearn(\rho_{1:T},(\wf_{0},\ws_{0,1:T})),$ be the initialization learned using $T$ tasks $\{\rho_{1},\dots,\rho_{T}\}\sim_{i.i.d.} \mu_{\wo}^T$. If $T\ge poly(d,r,1/\epsilon,\log(1/\delta),\kappa)$, then with probability at least $1-\delta$ over sampling of the $T$ tasks,
\begin{align*}
	\min\limits_{\lambda\ge0}~\gE_{n}(\gdtworeg(\cdot;(\wf_T,\vzero_d)), \mu_{\wo}) \le \epsilon + \frac{cr^2}{n}
\end{align*}
for a small constant $c$. Thus with the same probability, we have
\begin{align*}
	\min\limits_{\lambda\ge 0}~n_\epsilon&(\gdtworeg(\cdot;\wf_T,\vzero_d), \mu_\wo) = \gO\left(\frac{r^2}{\epsilon}\right)
\end{align*}
\end{theorem}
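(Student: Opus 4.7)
The plan is to exploit strong symmetry in the \replearn\ gradient flow to reduce the joint dynamics over $T+1$ weight matrices to a two-variable ODE, show that this ODE drives $\wf$ to a rank-one-plus-identity matrix aligned with $\hat{\wo}=\wo/\|\wo\|$, and then quote the within-task ridge regression analysis from the proof of \Thmref{thm:upper_bound_reptile}.

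\textbf{Step 1: Symmetry collapses the $T$-task dynamics.} The per-task loss $\|\wf^\top\ws_i-s_i\wo\|^2+\sigma^2$ is invariant under the involution $(s_i,\ws_i)\mapsto(-s_i,-\ws_i)$, and the initial conditions $\ws_{0,i}=\vzero_d$ are fixed by negation. Uniqueness of gradient-flow solutions then forces $\ws_i(t)=s_i\,\tilde{\ws}(t)$ for a single shared vector $\tilde{\ws}(t)\in\R^d$. Substituting into the gradients of $\gL_{rep}$, every factor $s_i^2=1$ cancels and the system collapses to
\begin{equation*}
\dot{\wf}=2\,\tilde{\ws}(\wo-\wf^\top\tilde{\ws})^\top,\qquad \dot{\tilde{\ws}}=\tfrac{2}{T}\,\wf\,(\wo-\wf^\top\tilde{\ws}),
\end{equation*}
which is \emph{independent of the empirical task mix} $(T_+,T_-)$, so no concentration bound on the sampled signs is required.

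\textbf{Step 2: Rank-one invariant subspace and scalar ODE.} Because $\tilde{\ws}(0)=\vzero_d$ and the first nonzero derivative of $\tilde{\ws}$ points along $\wo$, and because $\dot\wf$ lies in $\mathrm{span}(\tilde{\ws}\otimes\hat{\wo})$ whenever the residual $\vr=\wo-\wf^\top\tilde{\ws}$ does, I would show by uniqueness that the ansatz
\begin{equation*}
\tilde{\ws}(t)=a(t)\hat{\wo},\qquad \wf(t)=\kappa I_d+(q(t)-\kappa)\hat{\wo}\hat{\wo}^\top
\end{equation*}
is preserved for all $t\ge 0$. The scalars then satisfy
\begin{equation*}
\dot{a}=\tfrac{2}{T}q(r-aq),\qquad \dot{q}=2a(r-aq),\qquad q^2-Ta^2=\kappa^2,
\end{equation*}
with unique non-negative equilibrium $aq=r$, giving $q_\infty^2=\tfrac{1}{2}(\kappa^2+\sqrt{\kappa^4+4Tr^2})$. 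A direct lower bound $\dot{a}\ge 2\kappa r/T$ near $a=0$ controls the escape from the slow region, and linearisation around the equilibrium delivers exponential convergence afterwards; together these give the flow time required to reach an $\epsilon$-neighbourhood of the limit, which accounts for the polynomial dependence of $T$ on $d,r,1/\epsilon,1/\kappa$. Crucially, $q_\infty/\kappa$ grows like $T^{1/4}$, so large $T$ produces a highly anisotropic $\wf_T$.

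\textbf{Step 3: Within-task ridge regression.} Once $\wf_T\approx\kappa(I_d-\hat{\wo}\hat{\wo}^\top)+q_\infty\hat{\wo}\hat{\wo}^\top$, the output of $\gdtworeg(\cdot;(\wf_T,\vzero_d))$ is the ridge solution $\ws_\infty=(\wf_TX^\top X\wf_T^\top/n+\lambda I)^{-1}\wf_TX^\top y/n$. In effective coordinates $\vu=\wf_T^\top\ws$, this is ridge regression on $\vu$ with anisotropic Tikhonov penalty $\tfrac{\lambda}{2}\vu^\top\wf_T^{-2}\vu$ whose weight along $\hat{\wo}$ is $\lambda/q_\infty^2$ and along the orthogonal subspace is $\lambda/\kappa^2$. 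Choosing $\kappa^2\ll\lambda\ll q_\infty^2$ (feasible precisely because $q_\infty^2/\kappa^2\gg 1$ when $T$ is polynomially large) collapses estimation effectively to $\mathrm{span}(\hat{\wo})$; a standard one-dimensional bias/variance decomposition then yields excess risk $\epsilon+\gO(r^2/n)$, so $n_\epsilon=\gO(r^2/\epsilon)$. This piece is structurally identical to the corresponding step in \Thmref{thm:upper_bound_reptile} and may be invoked as a black box once $\wf_T$ has the required spectral structure.

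\textbf{Main obstacle.} The delicate part is Step~2: (i) certifying that the rank-one-plus-identity ansatz for $\wf$ is preserved globally in time under the nonlinear flow, and (ii) quantifying convergence despite the initial slow region where $\dot{a}=\gO(\kappa r/T)$. Combining a saddle-escape estimate with local linearisation to obtain the explicit polynomial flow time -- and hence the hypothesis on $T$ -- is the only new work; Step~1 is essentially symbolic, and Step~3 is a direct translation of the within-task argument already used for \reptile.
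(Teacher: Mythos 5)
Your proposal is correct and reaches the same destination as the paper, but the reduction step is done differently, so a short comparison is worthwhile. The paper (\Lemref{lem:gd_dynamics_replearn}) rewrites $\gL_{rep}=\frac{1}{T}\|\wf^{\top}\Ws-\Wo\|^2$ and changes basis via $\wf=\mU\Lambda_1\mU^{\top}$, $\Ws=\mU\Lambda_2\mV^{\top}$ with $\mV=\frac{1}{\sqrt T}(s_1,\dots,s_T)$, directly importing the aligned-initialization scalar dynamics of \citet{saxe:14,gidel:19}; you instead collapse the second layers by a symmetry/uniqueness argument ($\ws_i(t)=s_i\tilde\ws(t)$), verify a rank-one-plus-identity invariant manifold for $\wf$, and use the conserved quantity $q^2-Ta^2=\kappa^2$ to read off the equilibrium. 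Both routes give exactly the same limit: your $q_\infty^2=\frac12\bigl(\kappa^2+\sqrt{\kappa^4+4Tr^2}\bigr)$ is the paper's $\bar a$ with $a(0)=\kappa$, i.e.\ $q_\infty=\Omega(\sqrt r\,T^{1/4})$, and like the paper you correctly observe that the final $\wf_T$ is deterministic in the signs, so the probability clause is vacuous here. Your Step~3 is exactly the paper's within-task argument (\Lemref{lem:reg_closed_form} plus \Lemref{lem:first_term}), invoked as a black box once $\wf_T$ has the anisotropic spectrum. The one place your framing is off is the ``main obstacle'': since \replearn\ is defined as gradient flow run to convergence, no saddle-escape or finite-time rate is needed at all — monotone convergence of the reduced one-dimensional ODE (or the cited Saxe-type analysis) suffices, and the polynomial requirement on $T$ does not come from flow time but from needing $q_\infty=\Omega(\sqrt r\,T^{1/4})$ to exceed the threshold $\poly(\epsilon^{-1},d,\kappa,r)$ demanded by \Lemref{lem:first_term} (and choosing $\lambda=\Theta(q_\infty^{3/2})$ there); insisting on a quantitative escape-time bound would only introduce a spurious $1/\kappa$ dependence not present in the theorem.
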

Yet again we can show a new task sample complexity of $\gO(r^2/\epsilon)$.
We now sketch the proofs of the upper bounds to highlight the interesting parts of the proof and to show the need for a trajectory-based analysis. 
\section{Proof Sketch}
\label{sec:proof_sketch}

We first present a proof sketch for the guarantees provided for the \reptile~ algorithm in \Thmref{thm:upper_bound_reptile} and for \replearn~ in \Thmref{thm:upper_bound_replearn}.
Following that we will present an argument for why a trajectory-based analysis is necessary, by looking more closely at the representation learning objective.

\subsection{\reptile~ sketch}\label{subsec:reptile_sketch}
For simplicity assume $\|\wo\|=1$.
Let the $T$ training tasks be $\rho_1,\dots,\rho_T$, where $\rho_i=\rho_{s_i\wo}$ for $s_i$ is uniformly sampled from $\{\pm1\}$.
Recall the update: $(\wf_{i+1},\ws_{i+1})=(1-\tau)(\wf_{i},\ws_{i}) + \tau\gdpop(\ell_{\rho_{i+1}}, (\wf_{i},\ws_{i}))$.
The proof involves showing the following key properties of the dynamics of $\gdtworeg$ and the interpolation updates:

{\bf Step 1:} Starting from $\wf_0=\kappa I_d, \ws=\vzero_d$, the initialization learned by the meta-learning algorithm always satisfies $\wf_i=(a_i-\kappa)\wo\wo^{\top}+\kappa I_d$, $\ws_i=b_i\wo$.

Thus the updates by \reptile~ ensure that $\wf$ is only updated in the direction of $\wo\wo^{\top}$ and $\ws$ is updated in the direction of $\wo$.
This is proved by induction, where the crucial step is to show that if at time $i$ we start with $\wf_i,\ws_i$ that satisfy the above condition, then interpolating towards the output of $\gdpop$ still maintains this condition.
Step 2 below shows exactly this and, in fact, we can get the exact dynamics for the sequence $\{a_i,b_i\}$.

{\bf Step 2:} Initialized with $\wf=(a-\kappa)\wo\wo^{\top}+\kappa I_d,\ws=b\wo$ for $a>b\ge0$, the solution found by $\gdpop$ is $\bar{\wf},\bar{\ws}=\gdpop(\rho_{s\wo},(\wf,\ws))$ where $\bar{\wf}=(\bar{a}-\kappa)\wo\wo^{\top}+\kappa I_d$, $\bar{\ws}=\bar{b}\wo$, for $\bar{a}=f(a,b,s)$ and $\bar{b}=g(a,b,s)$
\begin{align*}
	f(a,b,s) &= \sqrt{\frac{(a^2-b^2)+\sqrt{4+(a^2-b^2)^2}}{2}}\\
	g(a,b,s) &= s\sqrt{\frac{-(a^2-b^2)+\sqrt{4+(a^2-b^2)^2}}{2}}
\end{align*}
This, along with step 1, gives us the dynamics of $a_i,b_i$
\begin{equation}\label{eq:reptile_dynamics}
\begin{aligned}
	a_{i+1}&=a_{i} + \tau(f(a_i,b_i,s_{i+1}) - a_i)\\
	b_{i+1}&=b_{i} + \tau(g(a_i,b_i,s_{i+1}) - b_i)
\end{aligned}
\end{equation}
This is the step where we use the analysis of the trajectory of gradient flow on two-layer linear networks that was done first in \citet{saxe:14} and later made robust in \citet{gidel:19}.
While their focus was on the case where the two layers are initialized at exactly the same scale, we need to analyze the case where $\wf$ and $\ws$ are initialized differently; this was analyzed in the appendix of \citet{saxe:14}.
In fact, as we will see in step 3, having $\kappa\neq 0$ when $\ws_0=\vzero_d$ is crucial in showing that $\wf$ can learn the subspace.
Refer to \Figref{fig:dynamics} for more insights into the dynamics induced by $f$ and $g$.

{\bf Step 3:} We show a very important property satisfied by the dynamics of $a_i,b_i$ described in \Eqref{eq:reptile_dynamics}: $a_i$ is an increasing sequence.
Since the sequence $s_{1:T}$ is a random sequence in $\{\pm1\}^T$, $a_T$ and $b_T$ are random variables.
However even though $s_{i+1}$ has 0 mean, $s_{i+1}$ only affects the sign of $b_i$ but not $a_i$, as evident in \Eqref{eq:reptile_dynamics}.
In fact, we can show that if initialized with $\kappa>0$, $a_i$ always increases; the same is however not true for $b_i$.
We show that for the meta-initialization of $a_0=\kappa$ and $b_0=0$, with high probability, $a_T=\tilde{\Omega}\left(\min\left\{\frac{1}{2\sqrt{\tau}}, (\tau T)^{1/4}\right\}\right)$.
Picking $\tau=\gO\left(T^{-1/3}\right)$, we get that $a_T=\tilde{\Omega}(T^{1/6})$.
Thus for an appropriate choice of the interpolating parameter $\tau$, $a_T\rightarrow\infty$ as $T\rightarrow\infty$.
So we know that in the limit, $\wf_T$ is basically a rank one matrix in the direction of $\wo\wo^{\top}$.
In the next step we show why such an $\wf_T$ reduces sample complexity.

{\bf Step 4:} To gain intuition for why the learned $\wf_T$ reduces sample complexity, notice that the only information about input $\vx$ that is needed to make predictions for all tasks in $\mu_\wo$ is its projection on $\wo$.
Thus if all data points are projected on $\wo$, we could just learn a 1-dimensional classifier on the projected data. 
So after this projection, the task would be reduced to a 1-dimensional regression problem that has a sample complexity of $\gO(1/\epsilon)$.
With $\wf_T=(a_T-\kappa)\wo\wo^{\top}+\kappa I_d$, we are learning a classifier for a new task on the linearly transformed data $\wf_T\vx$ instead.
For large enough $T$, $a_T$ is large enough that $\wf_T$ almost acts like a projection onto the subspace of $\wo$, thus leading to a reduction in sample complexity from $\Omega(d/\epsilon)$ to $\gO(1/\epsilon)$.

\begin{figure*}[t!]\label{fig:dynamics}
  \centering
  \subfigure[One step update]{\includegraphics[scale=0.55]{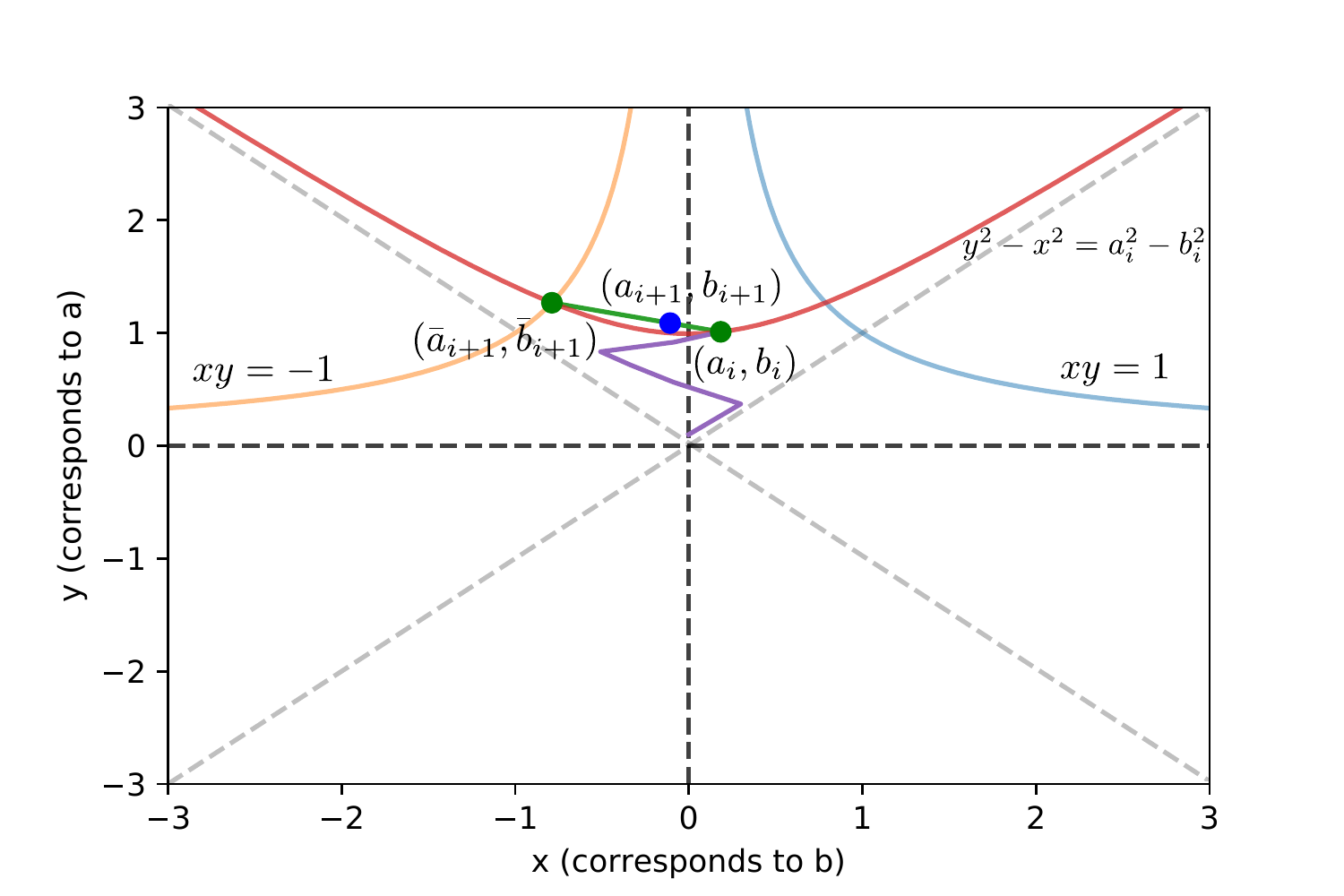}}
  \subfigure[Evolution for 1000 steps]{\includegraphics[scale=0.55]{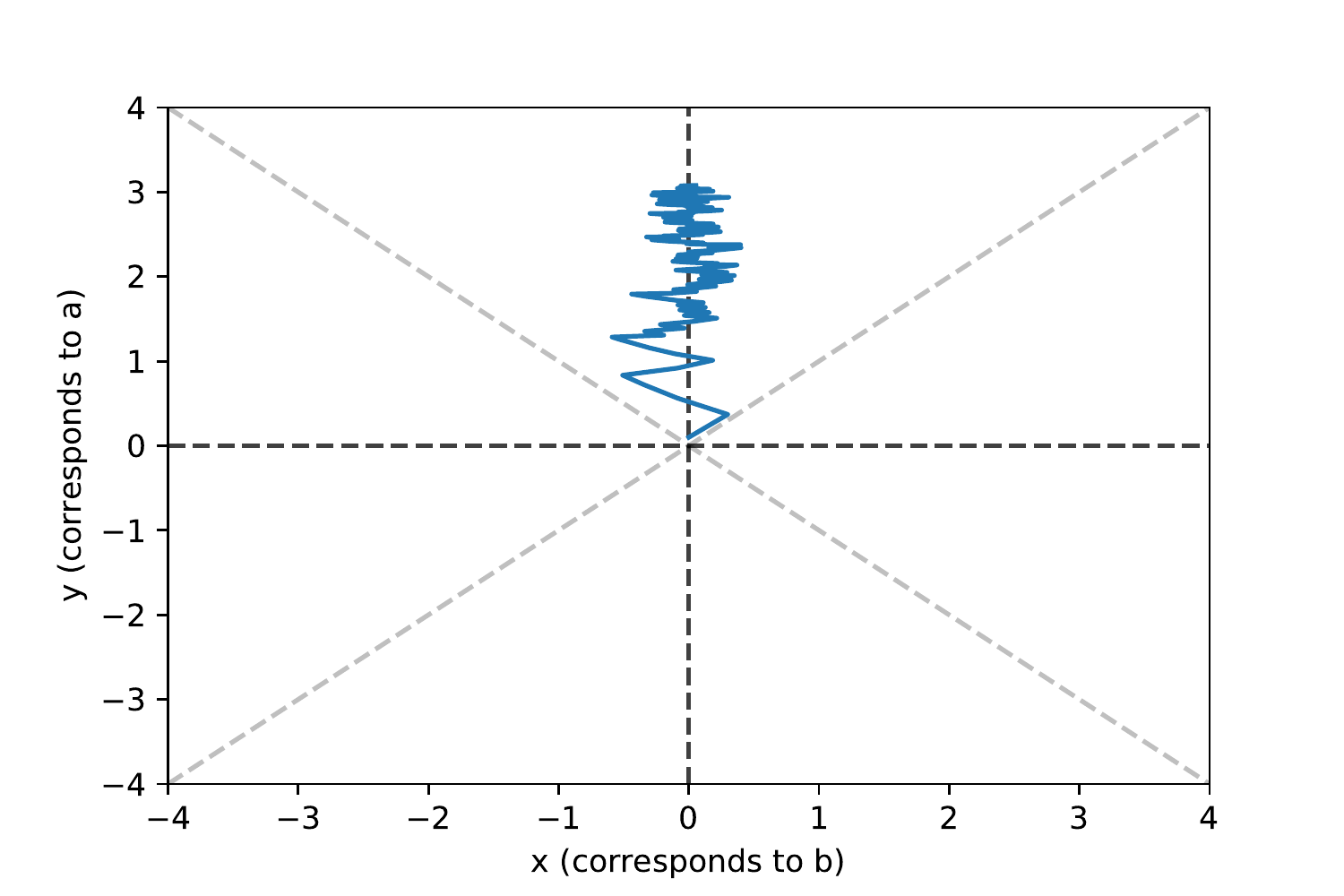}}
  \caption{The two figures correspond to a run with $T=1000$ tasks, $\tau=0.3$, $(a_0,b_0)=(0.1,0)$ and $\|\wo\|=1$. The first figure shows what the updates from \Eqref{eq:reptile_dynamics} looks like at step $i$ when $s_{i+1}=-1$. It can be shown that $\bar{a}_{i+1}=f(a_i,b_i,s_{i+1})$ and $\bar{b}_{i+1}=g(a_i,b_i,s_{i+1})$ always satisfy $\bar{a}_{i+1}^2-\bar{b}_{i+1}=a_i^2-b_i^2$, thus the solution $(\bar{a}_{i+1},\bar{b}_{i+1})$ will be the intersection of the curves $xy=s_{i+1}=-1$ and $y^2-x^2=a_i^2-b_i^2$ and $(a_{i+1},b_{i+1})$ is the appropriate interpolation. The second figure shows the entire dynamics of $(a_i,b_i)$ for the same setting. As evident, $a_i$ is always increasing while $b_i$ fluctuates around its mean value of 0.}
\end{figure*}

\subsection{\replearn~ sketch}\label{subsec:replearn_sketch}
Recall that the representation learning algorithm runs gradient descent on $\gL_{rep}$ by starting from $(\kappa I_d,\vzero_d,\dots,\vzero_d)$, where
\begin{align*}
	\gL_{rep}(\wf,\ws_{1},\dots,\ws_{T}) = \frac{1}{T}\suml_{i=1}^{\top} \ell_{\rho_i}(\wf,\ws_{i})
	=\frac{1}{T}\suml_{i=1}^T \|\wf^{\top}\ws_i-s_i\wo\|^2 = \frac{1}{T}\|\wf^{\top}\Ws-\Wo\|^2
\end{align*}
where $\Ws\in\R^{d\times T}$ has $\ws_i$ as its $i^{th}$ column and $\Wo\in\R^{d\times T}$ has $s_i\wo$ as its $i^{th}$ column.
This objective is a special case of the deep linear regression objective studied in \citet{saxe:14,gidel:19}, except with an unbalanced initialization for $\wf$ and $\Ws$.
Using a very similar analysis technique, one can show that gradient flow on this objective will converge to $\wf_\infty=(a_\infty-\kappa)\wo\wo^{\top}+\kappa I_d$, where for a sufficiently small $\kappa$, $a_\infty=\Omega(T^{1/4})$.
Just like the previous section, the first layer has learned the subspace and will reduce sample complexity of a new task to $\gO(1/\epsilon)$.

\subsection{Why trajectory is important}\label{subsec:trajectory}
As evident in the proof sketches above, we relied heavily on analyzing the specific trajectory of different methods, whether it is for gradient descent on a specific objective function or the interpolation updates in \reptile.
A natural question is whether simple analysis techniques that only look at properties of {\em all minimizers} of some objective function can lead to similar conclusions.
We answer this question for the representation learning objective negatively.
In particular, we construct a minimizer of the objective $\gL_{rep}$ where the first layer does not learn any structure about the subspace and will have $\Omega(d/\epsilon)$ new task sample complexity.
This bad minimizer is very simple: $\wf=I_d, \ws_i=s_i\wo, \forall i\in[T]$.
While the existence of such a solution is not too surprising, it does illustrate that analyzing the dynamics of the specific algorithms used might be as important as the objective functions themselves. 
\section{Tightness of Existing Bounds}
\label{sec:gvlower}

In providing a first non-convex sample complexity analysis of gradient-based meta-learning, our results have also exposed a fundamental limitation of convex methods:
in the presence of very natural subspace structure they are unable to learn an initialization that exploits it to obtain a good sample complexity.
There is thus a tension between this result and recent upper bounds that use other intuitive assumptions on the task-distribution to show reduced sample complexity of similar or identical methods \citep{denevi:19a,khodak:19b,zhou:19}.
Broadly, these results show that gradient-based meta-learning methods can adapt to a similarity measure that depends on the closeness of minimizing parameters for the tasks.
For convex models they obtain upper bounds on the excess risk of form
\begin{equation}\label{eq:gvupper}
\gE_n(\Alg,\mu)=\gO\left(\frac{GV}{\sqrt n}\right)
\end{equation}
for large enough number of training tasks $T$, where $V^2=\min_{\phi\in\Theta}\E_{\rho\sim\mu}\|\phi-\operatorname{Proj}_{\Theta_\rho^\ast}(\phi)\|^2$ is the average variation of the optimal task parameters, for $\Theta_\rho^\ast=\arg\min_{\theta\in\Theta}\ell_\rho(\theta)$, and $G$ is the Lipschitz constant with respect to the Euclidean norm.

These results, however, do not contradict our convex-case lower bounds in \Secref{sec:lower_bound} because our tasks are not similar in the same sense.
While the parameters lie on a subspace, the average variation of optimal parameters remains large.
However, while the distance-based task-similarity measure is natural and intuitive, we believe that a low-dimensional representation structure such as ours may be more explanatory for the success of gradient-based meta-learning algorithms.
In fact the importance of representation learning in the success of popular gradient-based methods has been shown by existing empirical results \citep{raghu:19}.

Additionally we argue that existing upper bounds may not be very meaningful in the context of current practical applications.
The term $GV$ in Equation~\ref{eq:gvupper} can be lower bounded by Jensen's inequality
\begin{align*}
	GV
	=G\sqrt{\min_{\phi\in\Theta}\E_{\rho\sim\mu}\|\phi-\operatorname{Proj}_{\Theta_\rho^\ast}(\phi)\|^2}
	\ge \min_{\phi\in\Theta}G\E_{\rho\sim\mu}\|\phi-\operatorname{Proj}_{\Theta_\rho^\ast}(\phi)\|
	\ge\min_{\phi\in\Theta}\ex_{\rho\sim\mu}[\ell_\rho(\phi)-\ell_\rho^*]
\end{align*}
where $\min_{\phi\in\Theta}\E_{\rho\sim\mu}\ell_\rho(\phi)$ is the minimum achievable risk by a {\em single common parameter} for all tasks from the class $\Theta$ and $\ell_\rho^*=\min_{\theta\in\Theta}\ell_\rho(\theta)$.
In common meta-learning settings, the average risk $\E_{\rho\sim\mu}\ell_\rho(\phi)$ of any fixed parameter $\phi\in\Theta$ is large, e.g. due to label-shuffling in tasks like Omniglot \citep{lake:11} and Mini-ImageNet \citep{ravi:17} or due to symmetry in the tasks around zero like in the sine wave task \citep{finn:17}.

Given the above drawbacks, it is natural to ask if this bound of $GV$ can be improved in the convex settings prior work considers.
We answer this negatively.
Below we adapt an information-theoretic argument from \citet{agarwal:12} to show that such a dependence is unavoidable when analyzing a distance-based task-similarity notion for convex $G$-Lipschitz functions, and thus that existing results are almost tight:
\begin{theorem}\label{thm:gvlower}
	For any $G,V>0$, there exists a domain $\gZ$, parameter class $\Theta\subseteq\R^d$ and a distribution $\mu$ over tasks such every $\rho\sim\mu$ is a distribution over $\gZ$ and $\ell_{\rho}(\theta)=\E_{z\sim \rho}\ell_z(\theta)$ where $\ell_z:\Theta\rightarrow\R$ is convex and $G$-Lipschitz  w.r.t. the Euclidean norm for every $z\in\gZ$. Additionally, $\Theta$ satisfies
	$$\min_{\phi\in\Theta}\E_{\rho\sim\mu}\|\phi-\operatorname{Proj}_{\Theta_\rho^\ast}(\phi)\|\le V$$
	and
	$$\gE_n(\Alg, \mu)=\Omega\left(GV\min\left\{\frac1{\sqrt n},\frac1{\sqrt d}\right\}\right)$$
	for any algorithm $\Alg:\gZ^n\rightarrow\Theta$ that returns a parameter given a training set.
\end{theorem}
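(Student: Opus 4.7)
The plan is to adapt the standard hypercube-based lower bound construction for stochastic convex optimization from \citet{agarwal:12}. We design a task family whose optimal parameters lie on a rescaled hypercube around the origin, making the task-similarity measure $V$ small, while forcing any algorithm to implicitly recover a high-dimensional sign pattern from noisy samples --- something it cannot do well with only $n$ samples. Concretely, take $\Theta=\{\theta\in\R^d:\|\theta\|_\infty\le V/\sqrt d\}$ and $\gZ=\{-1,+1\}^d$. Index tasks by $\sigma\in\{-1,+1\}^d$: in task $\rho_\sigma$, the sample $z$ has independent coordinates with $\E[z_i]=\delta\sigma_i$ for a bias $\delta\in(0,1)$ to be chosen. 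Define $\ell_z(\theta)=-(G/\sqrt d)\langle z,\theta\rangle$; this is linear (hence convex) and $G$-Lipschitz in the Euclidean norm because $\|z\|_2=\sqrt d$. Let $\mu$ be uniform on $\{\rho_\sigma\}_\sigma$. The population risk $\ell_{\rho_\sigma}(\theta)=-(G\delta/\sqrt d)\langle\sigma,\theta\rangle$ is minimized on $\Theta$ at $\theta_\sigma^*=(V/\sqrt d)\sigma$, with $\|\theta_\sigma^*\|=V$ for every $\sigma$; choosing the common anchor $\phi=0$ then gives $\E_\sigma\|\phi-\theta_\sigma^*\|=V$, so the task-similarity hypothesis is satisfied.

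For any $\Alg:\gZ^n\to\Theta$, define the coordinate-wise sign estimator $\hat\sigma_i=\sign(\Alg(S)_i)$. The identity $V\sqrt d-\langle\sigma,\theta\rangle=\sum_i(V/\sqrt d-\sigma_i\theta_i)$, together with the observation that each summand lies in $[0,2V/\sqrt d]$ and is at least $V/\sqrt d$ whenever $\hat\sigma_i\ne\sigma_i$ (because $|\theta_i|\le V/\sqrt d$ forces $\sigma_i\theta_i\le 0$ in that case), yields
\[\ell_{\rho_\sigma}(\Alg(S))-\ell_{\rho_\sigma}^*\;\ge\;\frac{G\delta V}{d}\,d_H(\hat\sigma,\sigma).\]
Thus any Bayesian lower bound of order $d$ on $\E_{\sigma\sim\mu,\,S\sim\rho_\sigma^n}d_H(\hat\sigma,\sigma)$ converts directly into an excess-risk lower bound of order $GV\delta$.

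The information-theoretic ingredient is Assouad's lemma applied coordinate-wise under the uniform prior on $\sigma$. Since the single-sample KL between $\rho_\sigma$ and a neighbor $\rho_{\sigma'}$ differing in one coordinate is $O(\delta^2)$, the $n$-sample KL is $O(n\delta^2)$, and Assouad yields $\E\,d_H(\hat\sigma,\sigma)\gtrsim d$ provided $n\delta^2\le c$ for a small absolute constant $c$. Choosing $\delta=\Theta(\min\{1/\sqrt n,\,1/\sqrt d\})$ meets this constraint (the $1/\sqrt d$ cap is a convenient choice keeping $\delta\le 1$ even for tiny $n$, and it is what produces the $1/\sqrt d$ term in the stated bound) and gives $\gE_n(\Alg,\mu)=\Omega(GV\delta)=\Omega(GV\min\{1/\sqrt n,\,1/\sqrt d\})$, as desired.

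The main technical point is invoking Assouad's lemma in its average (Bayesian) form under $\mu$ uniform, rather than as a minimax inequality, since the conclusion is about $\E_{\rho\sim\mu}$ of the excess risk. This is standard but must be stated carefully; fortunately the coordinate-wise decoupling of both the loss and the Bernoulli sampling model makes the reduction clean, and verifying convexity, $G$-Lipschitzness, and the $V$-similarity hypothesis is immediate from the symmetric construction. The only genuinely delicate calculation is the KL-to-Hamming conversion, which is exactly what Assouad packages.
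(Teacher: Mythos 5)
Your proof is correct, and it takes a genuinely different route from the paper's. The paper reuses the \citet{agarwal:12} construction essentially verbatim --- $\Theta$ an $\ell_2$ ball of radius $V/2$, loss $\ell_z(\theta)=G|\theta(i)\pm V/(2\sqrt d)|$, domain $\gZ=[d]\times\{0,1\}$ so each sample reveals a single noisy coordinate, tasks indexed by a $d/4$-packing of the hypercube --- and then re-runs the Fano-style argument of their Theorem~1 after modifying Lemmas~2 and~3 to replace the supremum over $\alpha$ with an expectation. You instead use linear losses on an $\ell_\infty$ ball, with $\gZ=\{-1,1\}^d$ so every sample reveals a noisy version of the whole sign vector, the full hypercube as the task index set, and Assouad's lemma in its Bayesian form. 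Both constructions verify the convexity, $G$-Lipschitzness and $V$-similarity hypotheses without trouble, and both reductions from excess risk to Hamming distance are clean.

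What each approach buys: yours is shorter, avoids the packing-set bookkeeping, and decouples coordinates completely, so the information-theoretic step is a one-line Assouad application rather than a modification of a multi-page Fano argument. You are also right to flag that with your construction the $1/\sqrt d$ term in the target bound is not forced --- because each sample reveals all $d$ coordinates, the Assouad constraint is just $n\delta^2\lesssim 1$ together with $\delta\le\text{const}$, so you actually get $\gE_n=\Omega(GV\min\{1/\sqrt n,\,1\})$, strictly stronger than the statement; in the paper's single-coordinate-per-sample model the constraint is $n\delta^2\lesssim d$, and the $1/\sqrt d$ cap is genuinely needed. The paper's route buys a closer match to the upper-bound setting of \citet{denevi:19a,khodak:19b} (where one typically thinks of per-example losses depending on a low-dimensional feature), and it lets the authors cite \citet{agarwal:12} almost black-box. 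One small thing to make explicit if you flesh this out: you need $\delta$ bounded away from $1$ (say $\delta\le 1/2$) for the per-coordinate KL to be $O(\delta^2)$, which your choice $\delta=\Theta(\min\{1/\sqrt n,1/\sqrt d\})$ with a small enough constant guarantees.
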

A consequence of this theorem is that without additional assumptions other than convexity, Lipschitzness and small average parameter variation, one cannot hope to improve upon existing bounds.
This, coupled with the fact that the existing bounds can be large in practical settings, makes a case for the need for more structural assumptions and a shift to non-convexity for analyses of meta-learning.

\section{Conclusions and Future Work}
\label{sec:conclusion}

In this work we look at a family of initialization-based meta-learning methods that has enjoyed empirical success.
Using a simple meta-learning problem of linear predictors in a 1-dimensional subspace, we show a gap in the new task sample complexity between meta-learning using linear regression and meta-learning using two-layer linear networks.
This is, to our knowledge, is the first non-convex sample complexity analysis of initialization-based meta-learning, and there are many interesting future directions to be pursued.
\begin{itemize}[leftmargin=*]
	\item $k$-subspace learning: while the lower bound for the convex setting trivially holds if the task predictions came from a $k$-dimensional subspace for $k>1$, showing that an algorithm like Reptile can have sample complexity of $\gO(k)$ is an open problem. While this can be proved for the representation learning objective using a very similar analysis, showing it for Reptile, which only learns one second layer instead of $T$ unlike the representation learning objective, might require stronger tools. There is experimental evidence suggesting that such a statement might be true.
	\item Weaker distributional assumptions: while showing upper bounds was non-trivial under current assumptions, one would hope to show guarantees under weaker and more realistic assumptions, such as a more general data distribution, different input distributions across tasks, and access to only finitely many samples from training tasks.
	\item One common bottleneck for the above points is a robust analysis of the dynamics of linear networks when the initializations are not appropriately aligned. While \citet{gidel:19} provide a perturbation analysis for this, $\epsilon$-perturbation at the initialization scales as $\epsilon e^{ct^2}$ in the final solution, where $t$ is the time for which gradient descent/flow is run. It would be nice to have an analysis with a more conservative error propagation, perhaps exploiting structured perturbations.
	\item Deep neural network: while analysis for linear networks can be a first cut to understanding non-convex meta-learning, it would be interesting to see if the insights gained from this setting are useful for the more interesting setting of non-linear neural networks.
\end{itemize} 
\bibliography{references}
\bibliographystyle{plainnat}
\newpage
\appendix
\onecolumn

\section{Appendix Overview}\label{asec:overview}
Appendix is organized as follows:

In \Apxref{asec:convex} we prove the lower bounds for convex meta-learning.
\begin{itemize}
	\item \Apxref{asubsec:lower_bounds} has the proofs for stronger versions of the main lower bound result, \Thmref{thm:lower_bound_gd}, that shows lower bounds for the within-task methods of $\gdreg$ and $\gdstep$ (MAML).
	\item \Apxref{asubsec:closed_form} has proofs for closed form solutions found by $\gdreg$ and $\gdstep$ given $n$ samples for a new task. These results are useful to prove the aforementioned theorems.
	\item \Apxref{asubsec:other_proofs} contains proofs for auxiliary lemmas.
\end{itemize}

In \Apxref{asec:non_convex} we prove the upper bounds for non-convex meta-learning.
\begin{itemize}
	\item \Apxref{asubsec:reptile} formalizes the steps mentioned in the proof sketch for $\reptile$ from \Secref{subsec:reptile_sketch}. It has the bulk of the proofs about the dynamics of gradient-based algorithms.
	\item \Apxref{asubsec:main_results} proves the main upper bound theorems, \Thmref{thm:upper_bound_reptile} and \Thmref{thm:upper_bound_replearn}.
\end{itemize}

In \Apxref{asec:gv_lower} we prove the tightness of current distance-based convex meta-learning lower bounds (\Thmref{thm:gvlower}).

\section{Convex proofs}\label{asec:convex}

\subsection{Lower bounds}\label{asubsec:lower_bounds}
Before proving the lower bounds, we present the following lemma about the closed form solutions found by $\gdreg$ and $\gdstep$ starting from an initialization $\vw_0$; the proof of this can be found in \Apxref{asubsec:closed_form}.

Note that every $S=\{(\vx_i,y_i)\}\sim\rho_{\vv}^n$ is unique determined by a matrix $\mX\in\R^{n\times d}$ and a noise vector $\xi\in\R^n$, where the $i^{th}$ row of $\mX$ is $\vx_i$ and $\xi_i=y_i-\vv^{\top}\vx_i$ are i.i.d. samples from $\gN(0,\sigma^2)$.

\begin{lemma}\label{lem:gd_closed_form}
Let $S=(\mX,\xi)$ be a sample from $\rho_{\vv}$, $\mX\in\R^{n\times d}, \xi\in\R^n$. Let $\Sigma_\mX=\frac{1}{n}\suml_{i=1}^{n}\vx_i\vx_i^{\top}\in\R^{d\times d}$
\begin{align*}
	\gdreg(S; \vw_{0}) &= (I_d-(\Sigma_\mX+\lambda I_d)^{\dagger}(\Sigma_\mX+\lambda I_d))\vw_{0} + (\Sigma_\mX+\lambda I_d)^{\dagger}\Sigma_\mX\vv + \frac{1}{n}(\Sigma_\mX+\lambda I_d)^{\dagger}\mX^{\top}\xi
\end{align*}
\begin{align*}
	\gdstep(S; \vw_{0}) &= (I_d-\eta\Sigma_\mX)^{t_0}\vw_{0} + (I_d-(I_d-\eta\Sigma_\mX)^{t_0})\Sigma_\mX^{\dagger}\Sigma_\mX\vv + \frac{1}{n}(I_d-(I_d-\eta\Sigma_\mX)^{t_0})\Sigma_\mX^{\dagger}\mX^{\top}\xi
\end{align*}
\end{lemma}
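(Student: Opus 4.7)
The plan is to derive each formula by solving the linear dynamics induced by each algorithm on the quadratic objective. Writing $\vy=\mX\vv+\xi$ and the empirical loss $\ell_S(\vw)=\tfrac{1}{n}\|\mX\vw-\vy\|^2$, the relevant gradients are (up to a constant factor absorbed into the step size) $\Sigma_\mX\vw-\tfrac{1}{n}\mX^{\top}\vy$ and $(\Sigma_\mX+\lambda I_d)\vw-\tfrac{1}{n}\mX^{\top}\vy$. In both cases the affine drift $\tfrac{1}{n}\mX^{\top}\vy$ lies in $\Image(\mX^{\top})=\Image(\Sigma_\mX)$, which is the key algebraic observation that makes every subsequent pseudo-inverse manipulation well defined.

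For $\gdreg$, I would decompose $\R^d$ into $\Image(\Sigma_\mX+\lambda I_d)$ and $\Kernel(\Sigma_\mX+\lambda I_d)$ (the latter is nontrivial only when $\lambda=0$ and $\Sigma_\mX$ is rank-deficient) and solve the ODE $\dot\vw_t=-(\Sigma_\mX+\lambda I_d)\vw_t+\tfrac{1}{n}\mX^{\top}\vy$ componentwise. On the image the flow contracts exponentially to the unique stationary point $(\Sigma_\mX+\lambda I_d)^{\dagger}\cdot\tfrac{1}{n}\mX^{\top}\vy$, while on the kernel the drift vanishes (since $\tfrac{1}{n}\mX^{\top}\vy\in\Image(\Sigma_\mX)\subseteq\Image(\Sigma_\mX+\lambda I_d)$) and so the kernel component of $\vw_0$ is preserved. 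Combining the two pieces gives
\begin{equation*}
\vw_\infty=\bigl(I_d-(\Sigma_\mX+\lambda I_d)^{\dagger}(\Sigma_\mX+\lambda I_d)\bigr)\vw_0+(\Sigma_\mX+\lambda I_d)^{\dagger}\tfrac{1}{n}\mX^{\top}\vy,
\end{equation*}
and substituting $\tfrac{1}{n}\mX^{\top}\vy=\Sigma_\mX\vv+\tfrac{1}{n}\mX^{\top}\xi$ recovers the three-term formula.

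For $\gdstep$, I would unroll the discrete recursion $\vw_{t+1}=(I_d-\eta\Sigma_\mX)\vw_t+\tfrac{\eta}{n}\mX^{\top}\vy$ to obtain
\begin{equation*}
\vw_{t_0}=(I_d-\eta\Sigma_\mX)^{t_0}\vw_0+\eta\Bigl(\sum_{k=0}^{t_0-1}(I_d-\eta\Sigma_\mX)^k\Bigr)\tfrac{1}{n}\mX^{\top}\vy.
\end{equation*}
The telescoping identity $\eta\Sigma_\mX\sum_{k=0}^{t_0-1}(I_d-\eta\Sigma_\mX)^k=I_d-(I_d-\eta\Sigma_\mX)^{t_0}$, together with $\tfrac{1}{n}\mX^{\top}\vy\in\Image(\Sigma_\mX)$ (so that $\Sigma_\mX^{\dagger}\Sigma_\mX$ acts as the identity on this vector), lets me replace the geometric sum applied to $\tfrac{1}{n}\mX^{\top}\vy$ by $\Sigma_\mX^{\dagger}\bigl(I_d-(I_d-\eta\Sigma_\mX)^{t_0}\bigr)\cdot\tfrac{1}{n}\mX^{\top}\vy$ (after dividing by $\eta$); the claimed identity then follows by the same split $\tfrac{1}{n}\mX^{\top}\vy=\Sigma_\mX\vv+\tfrac{1}{n}\mX^{\top}\xi$.

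The proof is essentially a bookkeeping exercise and I do not expect any serious obstacle. The only subtlety is the careful handling of $\Kernel(\Sigma_\mX)$ in both cases: the projector $I_d-\Sigma_\mX^{\dagger}\Sigma_\mX$ (or its $\lambda$-shifted analogue) is exactly what captures the component of $\vw_0$ that the dynamics leave untouched, and the observation that $\tfrac{1}{n}\mX^{\top}\vy\in\Image(\Sigma_\mX)$ is what makes every $\Sigma_\mX^{\dagger}$ in the statement unambiguous.
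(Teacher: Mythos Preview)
Your proposal is correct and follows essentially the same approach as the paper: both reduce the computation to solving the affine linear dynamics $\dot\vw_t=-\mM\vw_t+\vb$ (respectively $\vw_{t+1}=\vw_t-\eta(\mM\vw_t-\vb)$) with the key observation that $\vb=\Sigma_\mX\vv+\tfrac{1}{n}\mX^{\top}\xi$ lies in the range of $\mM$. The paper packages this into two standalone lemmas that solve the dynamics via full diagonalization $\mM=\mB\mS\mB^{-1}$ and coordinate-wise scalar computations, whereas you work slightly more directly via the image/kernel split (for $\gdreg$) and the telescoping geometric-series identity (for $\gdstep$); the content is the same.
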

Here we use $\mA^{\dagger}$ to denote the Moore-Penrose pseudo-inverse of matrix $\mA$.
Note that while the inverse exists for all $\lambda>0$, we use the pseudo-inverse for $\lambda=0$.
Also since $\mX$ and $\xi$ do not depend on $\vv$, the only dependence of $\gdreg(S,\vw_0)$ and $\gdstep(S,\vw_0)$ on $\vv$ is the second term in each of the equations.
Since the solutions of both $\gdreg$ and $\gdstep$ are linear in $\vw_0$, $\vv$ and $\xi$, the following lemma will be useful; the proof can be found in \Apxref{asubsec:other_proofs}.
\begin{lemma}\label{lem:excess_risk_linear}
For $S=(\mX,\xi)$ sampled from $\rho_{\vv}$, $\mX\in\R^{n\times d}, \xi\in\R^n$, if $\Alg(S;\vw_0)=\mA_\mX\vw_0 + \mB_\mX\vv + \mC_\mX\xi$, then
\begin{align*}
	\gE_n(\Alg(\cdot; \vw_{0}),\mu_\wo) \ge \underbrace{\ex_{\mX}[\|(I_d-\mB_\mX)\wo\|^2]}_{bias(\Alg)}+\underbrace{\ex_{\mX}[\sigma^2\tr(\mC_\mX^{\top}\mC_\mX)]}_{var(\Alg)}
\end{align*}
where $bias(\Alg)$ is the error in predicting $\wo$ and $var(\Alg)$ is error due to noise in labels in the training data $S$.
\end{lemma}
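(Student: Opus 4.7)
The plan is to evaluate the excess risk directly using the assumed affine-in-$(\vw_0,\vv,\xi)$ form of $\Alg$, and show that exactly the two stated terms survive after taking expectations, with a non-negative leftover term that can be dropped to yield the inequality.

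First I would expand the squared loss. Since for a task $\rho_{s\wo}$ we have $\vv = s\wo$ and (by \Eqref{eq:linear_loss_closed_form}) $\ell_{s\wo}(\vw) = \|\vw - s\wo\|^2 + \sigma^2$, substituting $\Alg(S;\vw_0) = \mA_\mX\vw_0 + \mB_\mX s\wo + \mC_\mX\xi$ yields
\begin{equation*}
\ell_{s\wo}(\Alg(S)) = \|\mA_\mX\vw_0 + s(\mB_\mX - I_d)\wo + \mC_\mX\xi\|^2 + \sigma^2.
\end{equation*}
The next step is to take the expectation over $\xi$. Conditional on $\mX$, the noise $\xi$ is mean zero with covariance $\sigma^2 I_n$ and is independent of everything else, so the cross-term $2\langle \mA_\mX\vw_0 + s(\mB_\mX - I_d)\wo,\, \mC_\mX\xi\rangle$ vanishes in expectation, and $\E_\xi\|\mC_\mX\xi\|^2 = \sigma^2\tr(\mC_\mX^\top\mC_\mX)$, which is exactly the variance term in the statement.

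Then I would average over $s\sim\{\pm 1\}$. Writing $\vu = \mA_\mX\vw_0$ and $\vz = (\mB_\mX - I_d)\wo$, the parallelogram identity gives
\begin{equation*}
\tfrac12\|\vu+\vz\|^2 + \tfrac12\|\vu-\vz\|^2 = \|\vu\|^2 + \|\vz\|^2,
\end{equation*}
so the cross-term between $\vu$ and $s\vz$ vanishes. This is the key algebraic cancellation: the sign symmetry of $\mu_\wo$ decouples the initialization-dependent part from the bias in recovering $\wo$. After also taking the outer expectation over $\mX$ and subtracting $\gL^*(\mu_\wo)=\sigma^2$, I arrive at
\begin{equation*}
\gE_n(\Alg(\cdot;\vw_0),\mu_\wo) = \E_\mX\|\mA_\mX\vw_0\|^2 + \E_\mX\|(I_d-\mB_\mX)\wo\|^2 + \sigma^2\E_\mX\tr(\mC_\mX^\top\mC_\mX).
\end{equation*}

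Finally, since $\E_\mX\|\mA_\mX\vw_0\|^2 \ge 0$, dropping this term yields the claimed lower bound, with the remaining two terms matching $bias(\Alg)$ and $var(\Alg)$ respectively. There is no real obstacle here; the only care needed is to justify the order of expectations (Fubini applies since everything is nonnegative) and to note that the independence of $\xi$ from both $\mX$ and $s$ is what makes the variance term clean. This decomposition is exactly the form I will feed back into \Lemref{lem:gd_closed_form} to prove \Thmref{thm:lower_bound_gd}.
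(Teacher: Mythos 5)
Your proof is correct and follows essentially the same route as the paper's: expand the affine form of $\Alg$, use independence and zero-mean of $\xi$ to isolate the variance term, then exploit the $s$-symmetry of $\mu_\wo$ to eliminate the $\mA_\mX\vw_0$ contribution. The only cosmetic difference is that you derive an exact three-term decomposition via the parallelogram identity and then drop the nonnegative $\E_\mX\|\mA_\mX\vw_0\|^2$ term, whereas the paper applies Jensen's inequality in $s$ directly; since the slack in Jensen is precisely the term you drop, the two arguments coincide.
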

Note that the lower bound on excess risk does not depend on the initialization $\vw_0$ or the matrix $\mA_\mX$.
We are now ready to prove the following stronger version of \Thmref{thm:lower_bound_gd}.

\paragraph{Proving \Thmref{thm:lower_bound_gd}:} We now prove strengthened versions of the theorem for $\gdreg$ and $\gdstep$ separately.
From Definition~\ref{eq:n_eps}, we have $n_\epsilon(\Alg,\mu_\wo) = \min\limits_{n\in\mathbb{N}} : \gE_n(\Alg,\mu_\wo)\le\epsilon$ is the minimum number of samples needed to achieve excess risk at most $\epsilon$.
\begin{lemmano}[{\bf \Thmref{thm:lower_bound_gd}(a)}]
For every $\vw_0\in\R^d$, number of samples needed to have $\epsilon$ excess risk on a new task is
\begin{align*}
	\gE_n(\gdreg(\cdot;\vw_0),\mu_\wo) \ge \begin{cases}
		\frac{d\|\wo\|^2\sigma^2}{\|\wo\|^2n+\sigma^2d} & \text{if $n\ge d$}\\\\
		\frac{n}{d}\frac{\|\wo\|^2\sigma^2}{\|\wo\|^2+\sigma^2} + \frac{(d-n)}{d}\|\wo\|^2& \text{if $n<d$}
	\end{cases}
\end{align*}
Furthermore if $\|\wo\|=\sigma=r\ge 1$ and $\epsilon\in\left(0,\frac{r^2}{2}\right)$ , then the number of samples needed to achieve excess error of $\epsilon$ is
\begin{align*}
	\min\limits_{\lambda\ge 0}~~n_\epsilon(\gdreg(\cdot;\vw_0), \mu_\wo) \ge \frac{dr^2}{2\epsilon}
\end{align*}
\end{lemmano}
\begin{proof}[Proof of \Thmref{thm:lower_bound_gd}(a)]
Consider $m$ samples $S$ from $\rho_{s\wo}$, where $s\in\{\pm1\}$.
As observed earlier, sampling $S\sim\rho_{s\wo}$ corresponds to sampling $\mX\sim\gN^n(0,I_d)$ and $\xi\sim\gN(0,\sigma^2I_n)$.
From \Lemref{lem:gd_closed_form}, we get 
$$\gdreg(S; \vw_{0}) = (I_d-(\Sigma_\mX+\lambda I_d)^{\dagger}(\Sigma_\mX+\lambda I_d))\vw_{0} + (\Sigma_\mX+\lambda I_d)^{\dagger}\Sigma_\mX\vv + \frac{1}{n}(\Sigma_\mX+\lambda I_d)^{\dagger}\mX^{\top}\xi$$
Instantiating \Lemref{lem:excess_risk_linear} with $\mA_\mX=(I_d-(\Sigma_\mX+\lambda I_d)^{\dagger}(\Sigma_\mX+\lambda I_d))$, $\mB_\mX=(\Sigma_\mX+\lambda I_d)^{\dagger}\Sigma_\mX$ and $\mC_\mX=\frac{1}{n}(\Sigma_\mX+\lambda I_d)^{\dagger}\mX^{\top}$, we get 
\begin{align*}
	\gE_n(\gdreg(\cdot; \vw_{0}),\mu_\wo)
	&=\ex_{\mX}\left[\|(I_d-(\Sigma_\mX+\lambda I_d)^{\dagger}\Sigma_\mX)\wo\|^2\right]+\ex_{\mX}\left[\sigma^2\tr\left(\frac{\mX}{n}{(\Sigma_\mX+\lambda I_d)^{\dagger}}^2\frac{\mX^{\top}}{n}\right)\right]\\
	&=^{(a)}\ex_{\mX}\left[\|(I_d-(\Sigma_\mX+\lambda I_d)^{\dagger}\Sigma_\mX)\wo\|^2\right]+\frac{\sigma^2}{n^2}\ex_{\mX}\left[\tr({(\Sigma_\mX+\lambda I_d)^{\dagger}}^2\mX^{\top}\mX)\right]\\
	&=\underbrace{\ex_{\mX}\left[\|(I_d-(\Sigma_\mX+\lambda I_d)^{\dagger}\Sigma_\mX)\wo\|^2\right]}_{bias(\gdreg)}+\underbrace{\frac{\sigma^2}{n}\ex_{\mX}\left[\tr({(\Sigma_\mX+\lambda I_d)^{\dagger}}^2\Sigma_\mX)\right]}_{var(\gdreg)}
\end{align*}
where $(a)$ follows from property about trace that $\tr(AB)=\tr(BA)$ and the definition of $\Sigma_\mX$.
We now lower bound the bias and variance terms separately.
Let $\Sigma_\mX=\mV\mS\mV^{\top}$ be the full SVD, where $\mS=diag(s_1,\dots,s_d)$ is a diagonal matrix such that $s_1\ge s_2\ge\dots\ge s_d\ge0$.
Let $\vv_i$ be the $i^{th}$ column of $\mV$. Note that $\mV^{\top}\mV=\mV\mV^{\top}=I_d$.
\paragraph{Bias:} The bias term can be handled by first noticing the following
\begin{align*}
	bias(\gdreg)
	&= \wo^{\top}(I_d-(\Sigma_\mX+\lambda I_d)^{\dagger}\Sigma_\mX)^2\wo
	= \wo^{\top}(\mV\mV^{\top}-(\mV\mS\mV^{\top}+\lambda \mV\mV^{\top})^{\dagger}\mV\mS\mV^{\top})^2\wo\\
	&= \wo^{\top}(\mV\mV^{\top}-\mV(\mS+\lambda I_d)^{\dagger}\mV^{\top}\mV\mS\mV^{\top})^2\wo
	= \wo^{\top}\mV(I_d-(\mS+\lambda I_d)^{\dagger}\mS)^2\mV^{\top}\wo\\
	&= \suml_{i=1}^{d} h(s_i,\lambda)(\wo^{\top}\vv_i)^2,\text{ where}\\
	h(s,\lambda) &= \begin{cases}
		\frac{\lambda^2}{(s+\lambda)^2} &\text{if $s>0$}\\
		1 &\text{if $s=0$}
	\end{cases}
\end{align*}
We can split the expectation w.r.t. $\mX$ into expectation w.r.t. $\mS$ and the conditional expectation of $\mV$ given $\mS$.
A crucial observation is that since the distribution of the rows of $\mX$ is isotropic gaussian, no direction in space is special.
Thus, conditioned on $\mS$, the distribution of $\vv_i$ should be identical for all $i$ and we must have that $\ex_{\mX}[\vv_i | \mS]=0$ and $\ex_{\mX}[\vv_i\vv_i^{\top}|\mS]=CI_d$ for some constant $C$.
The constant $C$ can be calculated by noting that $\|\vv_i\|=1$.
So we get $1=\ex_{\mX}[\vv_i^{\top}\vv_i|\mS] = \tr\left(\ex_{\mX}[\vv_i\vv_i^{\top}|\mS]\right)=C\tr(I_d)=Cd$, giving $C=\frac{1}{d}$.
Then the bias is
\begin{align}\label{eq:bias_lower_bound}
	bias(\gdreg)\nonumber
	&= \ex_{\mS}\left[\ex_{\mV}\left[\suml_{i=1}^{d} h(s_i,\lambda)(\wo^{\top}\vv_i)^2\bigg|\mS\right]\right]
	= \ex_{\mS}\left[\suml_{i=1}^{d} h(s_i,\lambda)\ex_{\vv_i}\left[(\wo^{\top}\vv_i)^2\bigg|\mS\right]\right]\\\nonumber
	&= \ex_{\mS}\left[\suml_{i=1}^{d} h(s_i,\lambda)\ex_{\vv_i}\left[\wo^{\top}\vv_i\vv_i^{\top}\wo\bigg|\mS\right]\right]
	= \ex_{\mS}\left[\suml_{i=1}^{d} h(s_i,\lambda)\wo^{\top}\ex_{\vv_i}\left[\vv_i\vv_i^{\top}\bigg|\mS\right]\wo\right]\\
	&= \ex_{\mS}\left[\suml_{i=1}^{d} h(s_i,\lambda)\frac{\|\wo\|^2}{d}\right]
	= \frac{\|\wo\|^2}{d}\ex_{\mS}\left[\suml_{i=1}^{d} h(s_i,\lambda)\right]
\end{align}

\paragraph{Variance:} We now look at the variance term
\begin{align*}
	var(\gdreg)
	&=\ex_{\mX}\left[\frac{\sigma^2}{n}\tr({(\Sigma_\mX+\lambda I_d)^{\dagger}}^2\Sigma_\mX)\right]
	= \ex_{\mX}\left[\frac{\sigma^2}{n}\tr({(\mV\mS\mV^{\top}+\lambda \mV\mV^{\top})^{\dagger}}^2\mV\mS\mV^{\top})\right]\\
	&= \ex_{\mX}\left[\frac{\sigma^2}{n}\tr(\mV{(\mS+\lambda I_d)^{\dagger}}^2\mV^{\top}\mV\mS\mV^{\top})\right]
	= \ex_{\mX}\left[\frac{\sigma^2}{n}\tr(\mV{(\mS+\lambda I_d)^{\dagger}}^2\mS\mV^{\top})\right]\\
	&= \ex_{\mX}\left[\frac{\sigma^2}{n}\tr({(\mS+\lambda I_d)^{\dagger}}^2\mS\mV^{\top}\mV)\right]
	= \ex_{\mS}\left[\frac{\sigma^2}{n}\tr({(\mS+\lambda I_d)^{\dagger}}^2\mS)\right]\\
	&= \frac{\sigma^2}{n}\ex_{\mS}\left[\suml_{i=1}^dg(s_i,\lambda)\right]\text{, where}\\
	g(s,\lambda) &= \begin{cases}
		\frac{s}{(s+\lambda)^2} & \text{if $s>0$}\\
		0 & \text{if $s_i=0$}
	\end{cases}
\end{align*}
Thus we get the following lower bound for the excess risk
\begin{align*}
	\gE_n(\gdreg(\cdot;\vw_0),\mu_\wo)
	&= bias(\gdreg) + var(\gdreg)
	\ge \frac{\|\wo\|^2}{d}\ex_{\mS}\left[\suml_{i=1}^{d} h(s_i,\lambda)\right] + \frac{\sigma^2}{n}\ex_{\mS}\left[\suml_{i=1}^dg(s_i,\lambda)\right]\\
	&= \ex_{\mS}\left[\suml_{i=1}^{d}\frac{\|\wo\|^2}{d} h(s_i,\lambda) + \frac{\sigma^2}{n}g(s_i,\lambda)\right]
\end{align*}
We will show that $\frac{\|\wo\|^2}{d} h(s,\lambda) + \frac{\sigma^2}{n}g(s,\lambda) \ge \frac{\|\wo\|^2\sigma^2}{\|\wo\|^2ns+\sigma^2d}$.
While this is evident when $s=0$, since $h(0,\lambda)=1$ and $g(0,\lambda)=0$, for $s>0$ the left hand side reduces to $\frac{\|\wo\|^2}{d} \frac{\lambda^2}{(\lambda + s)^2} + \frac{\sigma^2}{ns}\frac{s^2}{(\lambda + s)^2}$.
This is of the form $\alpha a^2 + \beta b^2$ where $\alpha=\frac{\|\wo\|^2}{d}, \beta=\frac{\sigma^2}{ns}$ and $a=\frac{\lambda}{(\lambda + s)},b=\frac{s}{(\lambda + s)}$ satisfy $a+b=1$.
The following simple lemma (proof in \Apxref{asubsec:other_proofs}) will help us prove the desired inequality.
\begin{lemma}\label{lemma:asq_bsq}
For $\alpha,\beta\ge0$, we have
\[
\min\limits_{a,b ~s.t.~a+b=1} \alpha a^2 + \beta b^2 = \frac{\alpha \beta}{\alpha + \beta}
\]
\end{lemma}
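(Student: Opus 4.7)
The plan is to reduce this to a univariate convex optimization via the constraint substitution $b = 1-a$, then solve by setting the derivative to zero. Since $\alpha, \beta \ge 0$, the objective $F(a) := \alpha a^2 + \beta(1-a)^2$ is a convex quadratic in $a$, so any critical point is the global minimum.

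First I would compute $F'(a) = 2\alpha a - 2\beta(1-a) = 2(\alpha+\beta)a - 2\beta$. Setting $F'(a) = 0$ yields the optimizer $a^\ast = \frac{\beta}{\alpha+\beta}$ and correspondingly $b^\ast = \frac{\alpha}{\alpha+\beta}$, assuming $\alpha+\beta > 0$. Substituting back gives
\begin{align*}
F(a^\ast) \;=\; \alpha \cdot \frac{\beta^2}{(\alpha+\beta)^2} + \beta \cdot \frac{\alpha^2}{(\alpha+\beta)^2}
\;=\; \frac{\alpha\beta(\alpha+\beta)}{(\alpha+\beta)^2}
\;=\; \frac{\alpha\beta}{\alpha+\beta},
\end{align*}
which is the claimed value.

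The remaining item is the degenerate case $\alpha + \beta = 0$, which forces $\alpha = \beta = 0$; there the objective is identically $0$, matching the convention $\frac{\alpha\beta}{\alpha+\beta} = 0$ used by the caller (where this lemma is applied only when the denominator is positive anyway, since $s > 0$ in the relevant branch). Alternatively, one could give a one-line Cauchy--Schwarz proof: $1 = (a+b)^2 = \bigl(\sqrt{\alpha}\,a \cdot \tfrac{1}{\sqrt{\alpha}} + \sqrt{\beta}\,b \cdot \tfrac{1}{\sqrt{\beta}}\bigr)^2 \le (\alpha a^2 + \beta b^2)\bigl(\tfrac{1}{\alpha}+\tfrac{1}{\beta}\bigr)$, which after rearrangement gives the same bound with the equality case $\alpha a = \beta b$. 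I do not anticipate any real obstacle here; the only care needed is the edge case handling and, if desired, a brief remark noting that convexity of $F$ upgrades the critical point to a global minimizer.
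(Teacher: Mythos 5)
Your proof is correct: substituting $b=1-a$, using convexity of the resulting quadratic, and evaluating at $a^\ast=\frac{\beta}{\alpha+\beta}$ is exactly the standard argument this lemma calls for (the paper defers the proof to its appendix but does not actually spell it out there, treating it as routine). Your handling of the degenerate case $\alpha+\beta=0$ and the alternative Cauchy--Schwarz one-liner are both fine and consistent with how the lemma is invoked, since in the application $\alpha=\|\vw_*\|^2/d$ and $\beta=\sigma^2/(ns)$ are strictly positive.
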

Using the above lemma, we get $\gE_n(\gdreg(\cdot;\vw_0),\mu_\wo)\ge \ex_{\mS}\left[\suml_{i=1}^{d} f(s_i)\right]$, where $f(s) = \frac{\|\wo\|^2\sigma^2}{\|\wo\|^2ns+\sigma^2d}$.
The following lemma (proof in \Apxref{asubsec:other_proofs}) is a simple application of Jensen's inequality and aids us in completing the proof
\begin{lemma}\label{lem:convexity_S}
For a function convex function $f(\cdot):\R\rightarrow\R$, we have 
\begin{align*}
	\ex_{\mS}\left[\suml_{i=1}^d f(s_i)\right]\ge \begin{cases}
		df(1) & \text{if $n\ge d$}\\
		nf(\frac{d}{n}) + (d-n)f(0) & \text{if $n<d$}
	\end{cases}
\end{align*}
where the expectation is over $\mS$ is for the distribution of eigenvalues of $\Sigma_\mX$ when $\mX\sim\gN(0,I_d)^n$.
\end{lemma}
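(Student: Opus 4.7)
}
The plan is to apply Jensen's inequality in two layers: first \emph{horizontally} across the $d$ eigenvalues of $\Sigma_\mX$ for a fixed realization of $\mX$, and then \emph{vertically} by pulling the expectation over $\mX$ through the convex map.

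First I would collect two elementary facts about the spectrum of $\Sigma_\mX=\tfrac{1}{n}\mX^\top\mX$ when the rows of $\mX\in\R^{n\times d}$ are i.i.d.\ $\gN(0,I_d)$. The first fact is that $\E_{\mX}[\Sigma_\mX]=I_d$, so by linearity of trace, $\E_{\mX}\bigl[\sum_{i=1}^d s_i\bigr]=\E_{\mX}[\tr(\Sigma_\mX)]=d$. The second fact is that $\rank(\Sigma_\mX)=\min(n,d)$ almost surely; that is, exactly $\min(n,d)$ of the $s_i$ are strictly positive with probability one (a standard consequence of a Gaussian matrix having full rank a.s.). These two facts, together with convexity of $f$, will be enough.

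\textbf{Case $n\ge d$.} With probability one all $d$ eigenvalues are nonzero. For a fixed $\mX$, applying the discrete Jensen inequality to the uniform distribution on $\{s_1,\dots,s_d\}$ gives
\[
\sum_{i=1}^d f(s_i)\;\ge\; d\, f\!\left(\frac{1}{d}\sum_{i=1}^d s_i\right)\;=\;d\, f\!\left(\frac{\tr(\Sigma_\mX)}{d}\right).
\]
Taking expectations over $\mX$ and applying Jensen once more, now to the convex function $f$ and the scalar random variable $\tr(\Sigma_\mX)/d$,
\[
\E_{\mX}\!\left[\sum_{i=1}^d f(s_i)\right]\;\ge\; d\,\E_{\mX}\!\left[f\!\left(\tfrac{\tr(\Sigma_\mX)}{d}\right)\right]\;\ge\; d\, f\!\left(\E_{\mX}\!\left[\tfrac{\tr(\Sigma_\mX)}{d}\right]\right)\;=\;d\,f(1),
\]
using the first fact above for the final equality.

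\textbf{Case $n<d$.} Now exactly $d-n$ of the eigenvalues are zero almost surely, contributing $(d-n)f(0)$ deterministically. Applying discrete Jensen only to the $n$ nonzero eigenvalues (whose sum still equals $\tr(\Sigma_\mX)$) yields, for fixed $\mX$,
\[
\sum_{i=1}^d f(s_i)\;=\;(d-n)f(0)\;+\sum_{i:\,s_i>0} f(s_i)\;\ge\;(d-n)f(0)\;+\;n\,f\!\left(\tfrac{\tr(\Sigma_\mX)}{n}\right).
\]
Taking $\E_{\mX}$ and applying Jensen to $\tr(\Sigma_\mX)/n$ as before gives the claimed lower bound $(d-n)f(0)+n\,f(d/n)$.

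I do not foresee a significant obstacle: both ingredients (the expectation of $\Sigma_\mX$ and its almost-sure rank) are standard, and the rest is a double application of Jensen. The only subtlety worth a sentence of care is that the splitting of the sum in the $n<d$ case is legitimate because the event $\{\rank(\Sigma_\mX)=n\}$ has probability one, so conditioning on it does not alter the expectation.
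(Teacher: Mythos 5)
Your proof is correct and follows essentially the same route as the paper's: a discrete Jensen over the $d$ (or $n$) eigenvalues, followed by a second Jensen over the randomness of $\tr(\Sigma_\mX)$, combined with $\E[\Sigma_\mX]=I_d$ and the a.s.\ rank of $\Sigma_\mX$ in the case $n<d$. The only cosmetic difference is that you invoke the exact almost-sure rank $\min(n,d)$ while the paper uses only the upper bound $\rank(\Sigma_\mX)\le n$; both suffice because the inner Jensen average $\frac1n\sum_{i\le n}s_i=\tr(\Sigma_\mX)/n$ is unaffected by whether a few of those $n$ terms happen to vanish.
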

By noticing that $f(\cdot,\lambda)$ is convex in the first argument, using \Lemref{lem:convexity_S} and the fact that $f(0)=\frac{\|\wo\|^2}{d}$ and $f(1)=\frac{\|\wo\|^2\sigma^2}{\|\wo\|^2n+\sigma^2d}$ and $f(\frac{d}{n})=\frac{\|\wo\|^2\sigma^2}{\|\wo\|^2d+\sigma^2d}$, we get 
\begin{align*}
	\gE_n(\gdreg(\cdot;\vw_0),\mu_\wo) \ge \begin{cases}
		df(1)=\frac{d\|\wo\|^2\sigma^2}{\|\wo\|^2n+\sigma^2d} & \text{if $n\ge d$}\\
		nf(\frac{d}{n}) + (d-n)f(0)= n\frac{\|\wo\|^2\sigma^2}{\|\wo\|^2d+\sigma^2d} + (d-n)\frac{\|\wo\|^2}{d}& \text{if $n<d$}
	\end{cases}
\end{align*}
which completes the proof for the first part of the theorem.

For the second part where $\|\wo\|=\sigma=r\ge 1$ and $\epsilon\in\left(0,\frac{r^2}{2}\right)$, it is not difficult to see that $\gE_n(\gdreg(\cdot,\vw_0),\mu_\wo)\ge \frac{dr^2}{n+d},~\forall n>0$.
To find the minimum $n\ge d$ such that $\gE_n(\gdreg(\cdot,\vw_0),\mu_\wo)\le\epsilon$, we observe the following
\begin{align*}
	\epsilon
	&\ge\gE_n(\gdreg(\cdot,\vw_0),\mu_\wo)
	\ge \frac{dr^2}{n+d}
	\implies n\ge d\left(\frac{r^2}{\epsilon}-1\right)
	\ge^{(a)} \frac{dr^2}{2\epsilon}
\end{align*}
where $(a)$ uses $\epsilon\le \frac{r^2}{2}$.
This gives us $n_\epsilon(\gdreg(\cdot,\vw_0))\ge \frac{dr^2}{2\epsilon}$ as desired.
\end{proof}

We now prove the result for $\gdstep$.
\begin{lemmano}[{\bf \Thmref{thm:lower_bound_gd}(b)}]
For every $\vw_0\in\R^d$, number of samples needed to have $\epsilon$ excess risk on a new task is
\begin{align*}
	\gE_n(\gdstep(\cdot;\vw_0),\mu_\wo) \ge \begin{cases}
		\frac{d\|\wo\|^2\sigma^2}{\|\wo\|^2n+\sigma^2d} & \text{if $n\ge d$}\\\\
		\frac{n}{d}\frac{\|\wo\|^2\sigma^2}{\|\wo\|^2+\sigma^2} + \frac{(d-n)}{d}\|\wo\|^2& \text{if $n<d$}
	\end{cases}
\end{align*}
Furthermore if $\|\wo\|=\sigma=r\ge 1$ and $\epsilon\in\left(0,\frac{r^2}{2}\right)$ , then the number of samples needed to achieve excess error of $\epsilon$ is
\begin{align*}
	\min\limits_{\eta\ge 0,t_0\in\mathbb{N}}~~n_\epsilon(\gdstep(\cdot;\vw_0), \mu_\wo) \ge \frac{dr^2}{2\epsilon}
\end{align*}
\end{lemmano}
\begin{proof}[Proof of \Thmref{thm:lower_bound_gd}(b)]
From \Lemref{lem:gd_closed_form} we have
$$\gdstep(S; \vw_{0}) = (I_d-\eta\Sigma_\mX)^{t_0}\vw_{0} + (I_d-(I_d-\eta\Sigma_\mX)^{t_0})\Sigma_\mX^{\dagger}\Sigma_\mX\vv + \frac{1}{n}(I_d-(I_d-\eta\Sigma_\mX)^{t_0})\Sigma_\mX^{\dagger}\mX^{\top}\xi$$
Instantiating \Lemref{lem:excess_risk_linear} with $\mA_\mX=(I_d-\eta\Sigma_\mX)^{t_0}$, $\mB_\mX=(I_d-(I_d-\eta\Sigma_\mX)^{t_0})\Sigma_\mX^{\dagger}\Sigma_\mX$ and $\mC_\mX=\frac{1}{n}(I_d-(I_d-\eta\Sigma_\mX)^{t_0})\Sigma_\mX^{\dagger}\mX^{\top}$, we get from a similar calculation to $\gdreg$
\begin{align*}
	\gE_n&(\gdstep(\cdot; \vw_{0}),\mu_\wo)\\
	&=\ex_{\mX}\left[\|(I_d-(I_d-(I_d-\eta\Sigma_\mX)^{t_0})\Sigma_\mX^{\dagger}\Sigma_\mX)\wo\|^2\right]+\ex_{\mX}\frac{1}{n^2}\left[\sigma^2\tr\left(\mX\Sigma_\mX^{\dagger}(I_d-(I_d-\eta\Sigma_\mX)^{t_0})^2\Sigma_\mX^{\dagger}\mX^{\top}\right)\right]\\
	&=\ex_{\mX}\left[\|(I_d-\Sigma_\mX^{\dagger}\Sigma_\mX+(I_d-\eta\Sigma_\mX)^{t_0}\Sigma_\mX^{\dagger}\Sigma_\mX)\wo\|^2\right]+\frac{\sigma^2}{n^2}\ex_{\mX}\left[\tr\left((I_d-(I_d-\eta\Sigma_\mX)^{t_0})^2\Sigma_\mX^{\dagger}\mX^{\top}\mX\Sigma_\mX^{\dagger}\right)\right]\\
	&=\ex_{\mX}\left[\|(I_d-\Sigma_\mX^{\dagger}\Sigma_\mX+(I_d-\eta\Sigma_\mX)^{t_0}\Sigma_\mX^{\dagger}\Sigma_\mX)\wo\|^2\right]+\frac{\sigma^2}{n}\ex_{\mX}\left[\tr\left((I_d-(I_d-\eta\Sigma_\mX)^{t_0})^2\Sigma_\mX^{\dagger}\Sigma_\mX\Sigma_\mX^{\dagger}\right)\right]\\
	&=\underbrace{\ex_{\mX}\left[\|(I_d-\Sigma_\mX^{\dagger}\Sigma_\mX+(I_d-\eta\Sigma_\mX)^{t_0}\Sigma_\mX^{\dagger}\Sigma_\mX)\wo\|^2\right]}_{bias(\gdstep)}+\underbrace{\frac{\sigma^2}{n}\ex_{\mX}\left[\tr\left((I_d-(I_d-\eta\Sigma_\mX)^{t_0})^2\Sigma_\mX^{\dagger}\right)\right]}_{var(\gdstep)}
\end{align*}
We separately analyze the bias and variance terms
\paragraph{Bias:} The bias term can be handled similarly by noticing that
\begin{align*}
	bias(\gdstep)
	&= \ex_{\mX}\left[\wo^{\top}(I_d-\Sigma_\mX^{\dagger}\Sigma_\mX+(I_d-\eta\Sigma_\mX)^{t_0}\Sigma_\mX^{\dagger}\Sigma_\mX)^2\wo\right]\\
	&= \ex_{\mV,\mS}\left[\wo^{\top}(\mV\mV^{\top}-\mV\mS^{\dagger}\mV^{\top}\mV\mS\mV^{\top}+(\mV\mV^{\top}-\eta\mV\mS\mV^{\top})^{t_0}\mV\mS^{\dagger}\mV^{\top}\mV\mS\mV^{\top})^2\wo\right]\\
	&= \ex_{\mS}\ex_{\mV}\left[\wo^{\top}\mV(I_d-\mS^{\dagger}\mS+(I_d-\eta\mS)^{t_0}\mS^{\dagger}\mS)^2\mV^{\top}\wo\right]\\
	&= \ex_{\mS}\ex_{\mV}\left[\suml_{i=1}^{d} h(s_i,\eta,t_0)(\wo^{\top}\vv_i)^2\right] = \ex_{\mS}\left[\suml_{i=1}^{d} h(s_i,\eta,t_0)\ex_{\vv_i}(\wo^{\top}\vv_i)^2\right]\\
	&= \frac{\|\wo\|^2}{d}\ex_{\mS}\left[\suml_{i=1}^{d} h(s_i,\eta,t_0)\right],\text{ where}\\
	h(s,\eta,t_0) &= (1-\eta s)^{2t_0}
\end{align*}

\paragraph{Variance:} We now look at the variance term
\begin{align*}
	var(\gdstep)
	&= \frac{\sigma^2}{n}\ex_{\mX}\left[\tr((I_d-(I_d-\eta\Sigma_\mX)^{t_0})^2\Sigma_\mX^{\dagger})\right]
	= \frac{\sigma^2}{n}\ex_{\mX}\left[\tr((\mV\mV^{\top}-(\mV\mV^{\top}-\eta\mV\mS\mV^{\top})^{t_0})^2\mV\mS^{\dagger}\mV^{\top})\right]\\
	&= \frac{\sigma^2}{n}\ex_{\mX}\left[\tr(\mV(I_d-(I_d-\eta\mS)^{t_0})^2\mV^{\top}\mV\mS^{\dagger}\mV^{\top})\right]
	= \frac{\sigma^2}{n}\ex_{\mX}\left[\tr(\mV(I_d-(I_d-\eta\mS)^{t_0})^2\mS^{\dagger}\mV^{\top})\right]\\
	&= \frac{\sigma^2}{n}\ex_{\mS}\left[\tr((I_d-(I_d-\eta\mS)^{t_0})^2\mS^{\dagger})\right]
	= \frac{\sigma^2}{n}\ex_{\mS}\left[\suml_{i=1}^dg(s_i,\eta,t_0)\right]\text{, where}\\
	g(s,\eta,t_0) &= \begin{cases}
		\frac{(1-(1-\eta s)^{t_0})^2}{s} & \text{if $s>0$}\\
		0 & \text{if $s=0$}
	\end{cases}
\end{align*}
Thus we get the following lower bound for the excess risk
\begin{align*}
	\gE_n(\gdstep(\cdot;\vw_0),\mu_\wo)
	&= bias(\gdstep) + var(\gdstep)
	\ge \frac{\|\wo\|^2}{d}\ex_{\mS}\left[\suml_{i=1}^{d} h(s_i,\eta,t_0)\right] + \frac{\sigma^2}{n}\ex_{\mS}\left[\suml_{i=1}^dg(s_i,\eta,t_0)\right]\\
	&= \ex_{\mS}\left[\suml_{i=1}^{d}\frac{\|\wo\|^2}{d} h(s_i,\eta,t_0) + \frac{\sigma^2}{n}g(s_i,\eta,t_0)\right]
\end{align*}
We will again show that $\frac{\|\wo\|^2}{d} h(s,\eta,t_0) + \frac{\sigma^2}{n}g(s,\eta,t_0) \ge \frac{\|\wo\|^2\sigma^2}{\|\wo\|^2ns+\sigma^2d}$.
Again, this is obvious for $s=0$ from the definitions of $h$ and $g$.
For $s>0$, we can write $\frac{\|\wo\|^2}{d} h(s,\eta,t_0) + \frac{\sigma^2}{n}g(s,\eta,t_0)=\frac{\|\wo\|^2}{d}(1-\eta s)^{2t_0}+\frac{\sigma^2}{sn}(1-(1-\eta s)^{t_0})^2$, which is again of the form $\alpha a^2 + \beta b^2$ with $a=(1-\eta s)^{t_0},b=(1-(1-\eta s)^{t_0})$ satisfying $a+b=1$.
Thus \Lemref{lemma:asq_bsq} gives us the desired inequality, which directly implies $\gE_n(\gdreg(\cdot;\vw_0),\mu_\wo)\ge \ex_{\mS}\left[\suml_{i=1}^{d} f(s_i)\right]$, where $f(s) = \frac{\|\wo\|^2\sigma^2}{\|\wo\|^2ns+\sigma^2d}$.
This is exactly the same lower bound as in the proof of $\gdreg$, and thus the theorem follows from identical arguments.
\end{proof}

\subsection{Closed form solutions}\label{asubsec:closed_form}
We now prove \Lemref{lem:gd_closed_form}.
Before that, we will state and prove the following simple lemmas about linear dynamics that will be useful later.
\begin{lemma}\label{lem:linear_dynamics}
For a symmetric psd matrix $\mM\in\R^{d\times d}$, let $\mM=\mB\mS\mB^{-1}$ be its diagonalization. For $\vb\in\R^d$ that is in the range of $\mM$, the solution to the system $\frac{d\vw_t}{dt} = -\mM\vw_t+\vb$ starting from $\vw_0$ is
\begin{align*}
	\vw_t &= \mB e^{-t\mS}\mB^{-1}\vw_0 + \mB(I_d-e^{-t\mS})\mS^{\dagger}\mB^{-1}\vb\\
	\vw_{\infty} &= (I_d-\mM^{\dagger}\mM)\vw_0 + \mM^{\dagger}\vb
\end{align*}
where for a diagonal matrix $\mS=diag(s_1,\dots,s_d)$, $e^{-t\mS}$ is defined as $diag(e^{-ts_1},\dots,e^{-ts_d})$ and $\mS^{\dagger}$ is a diagonal matrix with $\mS^{\dagger}(i,i)=s_i^{-1}$ if $s_i>0$, otherwise $\mS^{\dagger}(i,i)=0$.
\begin{proof}
Since $\vb$ is the range of $\mM$, let $\vb=\mM\bar{\vb}$.
The dynamics $\frac{d\vw_t}{dt} = -\mB\mS\mB^{-1}\vw_t+\mB\mS\mB^{-1}\bar{\vb}$ can be rewritten as $\frac{d(\mB^{-1}\vw_t)}{dt}=-\mS(\mB^{-1}\vw_t+\mB^{-1}\bar{\vb})$.
Setting $\tilde{\vw}_t=\mB^{-1}\vw_t$ and $\tilde{\vb}=\mB^{-1}\bar{\vb}$, we get $\frac{d\tilde{\vw}_t}{dt}=-\mS(\tilde{\vw}_t+\tilde{\vb})$.
Since $\mS=diag(s_1,\dots,s_d)$ is a diagonal matrix, we can decouple the dynamics
\begin{align*}
	\frac{d\tilde{\vw}_t(i)}{dt}=-s_i(\tilde{\vw}_t(i)-\tilde{\vb}(i)), ~\forall i\in[d]
\end{align*}
These scalar dynamics can be solved and it can be verified easily that $\tilde{\vw}_t(i)=e^{-ts_i}\tilde{\vw}_0(i)+(1-e^{-ts_i})\tilde{\vb}(i)$.
By observing that $\bar{\vb}=\mM^{\dagger}\vb$ and $\tilde{\vb}=\mB^{-1}\bar{\vb}=\mB^{-1}\mM^{\dagger}\vb=\mS^{\dagger}\mB^{-1}\vb$, we can summarize the dynamics as $\tilde{\vw}_t=e^{-t\mS}\tilde{\vw}_0 + (I_d-e^{-t\mS})\mS^{\dagger}\mB^{-1}\vb$.
Using $\tilde{\vw}_0=\mB^{-1}\vw_0$ and $\vw_t=\mB\tilde{\vw}_t$, multiplying by $\mB$ on both sides completes the first part of the proof, i.e. $\vw_t = \mB e^{-t\mS}\mB^{-1}\vw_0 + \mB(I_d-e^{-t\mS})\mS^{\dagger}\mB^{-1}\vb$.
Furthermore, as $t\rightarrow\infty$, we see that $e^{-t\mS}\rightarrow diag(\1\{s_1=0\},\dots,\1\{s_d=0\})$ since for $s_i\neq0$, $e^{-ts_i}\rightarrow 0$ while if $s_i=0$ then $e^{-ts_i}=1$ for every $t\in\R$.
This completes the second part of the proof.
\end{proof}
\end{lemma}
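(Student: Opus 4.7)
The plan is to diagonalize the linear ODE, reduce it to $d$ decoupled scalar equations, solve each one in closed form, and then transform back. Since $\mM$ is symmetric positive semidefinite, I can take the spectral decomposition $\mM = \mB\mS\mB^{-1}$ with $\mB$ orthogonal and $\mS = \mathrm{diag}(s_1,\dots,s_d)$ having nonnegative entries.

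The first step is the change of variables $\tilde{\vw}_t = \mB^{-1}\vw_t$ and $\tilde{\vb} = \mB^{-1}\vb$. Substituting into $\frac{d\vw_t}{dt} = -\mM\vw_t + \vb$ yields $\frac{d\tilde{\vw}_t}{dt} = -\mS\tilde{\vw}_t + \tilde{\vb}$, which decouples into scalar ODEs $\frac{d\tilde{w}_t(i)}{dt} = -s_i\tilde{w}_t(i) + \tilde{b}_i$. Using the integrating factor $e^{ts_i}$, when $s_i > 0$ the solution is $\tilde{w}_t(i) = e^{-ts_i}\tilde{w}_0(i) + (1-e^{-ts_i})\tilde{b}_i/s_i$; when $s_i = 0$ it reduces to $\tilde{w}_t(i) = \tilde{w}_0(i) + t\tilde{b}_i$, but here the range-of-$\mM$ hypothesis becomes crucial: $\vb \in \mathrm{Range}(\mM)$ implies $\tilde{b}_i = 0$ whenever $s_i = 0$, so the affine piece vanishes and we simply get $\tilde{w}_t(i) = \tilde{w}_0(i)$.

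Both cases can then be packaged uniformly using the pseudoinverse convention $\mS^{\dagger}(i,i) = s_i^{-1}\mathbbm{1}\{s_i>0\}$, yielding $\tilde{\vw}_t = e^{-t\mS}\tilde{\vw}_0 + (I_d - e^{-t\mS})\mS^{\dagger}\tilde{\vb}$. Multiplying through by $\mB$ and using $\mB\mS^{\dagger}\mB^{-1} = \mM^{\dagger}$ (since $\mB$ is orthogonal and $\mS$ is diagonal) produces the first claimed formula. For the second formula, I take $t \to \infty$ and observe that $e^{-ts_i} \to \mathbbm{1}\{s_i = 0\}$, so the limiting diagonal matrix $\mB e^{-t\mS}\mB^{-1}$ is exactly the orthogonal projector onto $\ker(\mM)$, which equals $I_d - \mM^{\dagger}\mM$; meanwhile $\mB(I_d - e^{-t\mS})\mS^{\dagger}\mB^{-1}$ converges to $\mM^{\dagger}$.

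There is no real obstacle here beyond bookkeeping; the only subtle point is making sure the zero-eigenvalue coordinates are handled consistently, which is exactly where the hypothesis $\vb \in \mathrm{Range}(\mM)$ is used to rule out an unbounded $t\tilde{b}_i$ term and to make the pseudoinverse formula valid. Everything else is standard integrating-factor computation and linear algebra.
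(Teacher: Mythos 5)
Your proof is correct and follows essentially the same route as the paper: diagonalize via the spectral decomposition, decouple into scalar ODEs, solve with the integrating factor, and pass to the limit $t\to\infty$. The only (harmless) difference is bookkeeping: the paper writes $\vb=\mM\bar{\vb}$ and absorbs the pseudoinverse through $\tilde{\vb}=\mS^{\dagger}\mB^{-1}\vb$, whereas you treat the $s_i=0$ coordinates explicitly and use the range hypothesis to kill the would-be $t\tilde{b}_i$ term, which is an equally valid (arguably more transparent) way to make the same point.
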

\begin{lemma}\label{lem:linear_dynamics_step}
For a symmetric psd matrix $\mM\in\R^{d\times d}$, let $\mM=\mB\mS\mB^{-1}$ be its diagonalization. For $\vb\in\R^d$ that is in the range of $\mM$, the solution to the system $\vw_{t+1}-\vw_t = -\eta(\mM\vw_t-\vb)$ starting from $\vw_0$ is
\begin{align*}
	\vw_t &= \mB (I_d-\eta\mS)^t\mB^{-1}\vw_0 + \mB(I_d- (I_d-\eta\mS)^t)\mS^{\dagger}\mB^{-1}\vb\\
	&= (I_d-\eta\mM)^t\vw_0 + (I_d- (I_d-\eta\mM)^t)\mM^{\dagger}\vb
\end{align*}
\begin{proof}
Since $\vb$ is the range of $\mM$, let $\vb=\mM\bar{\vb}$.
The dynamics $\vw_{t+1}-\vw_t = -\eta\mB\mS\mB^{-1}\vw_t+\eta\mB\mS\mB^{-1}\bar{\vb}$ can be rewritten as $\mB^{-1}\vw_{t+1}=-(I_d-\eta\mS)\mB^{-1}\vw_t+\eta\mS\mB^{-1}\bar{\vb}$.
Setting $\tilde{\vw}_t=\mB^{-1}\vw_t$ and $\tilde{\vb}=\mB^{-1}\bar{\vb}$, we get $\tilde{\vw}_{t+1}=-(I_d-\eta\mS)\tilde{\vw}_t+\eta\mS\tilde{\vb}$.
Since $\mS=diag(s_1,\dots,s_d)$ is a diagonal matrix, we can decouple the dynamics, for every $i\in[d]$,
\begin{align*}
	\tilde{\vw}_{t+1}(i) &= -(1-\eta s_i)\tilde{\vw}_{t}(i)+\eta s_i\tilde{\vb}(i)\\
	&= -(1-\eta s_i)^{t+1}\tilde{\vw}_{0}(i) + \eta s_i \left(\suml_{j=0}^t (1-\eta s_i)^j\right) \tilde{\vb}(i)
\end{align*}
This can be simplified to eventually get $\tilde{\vw}_t(i)=(1-\eta s_i)^t\tilde{\vw}_0(i)+(1-(1-\eta s_i)^t)\tilde{\vb}(i)$.
By observing that $\bar{\vb}=\mM^{\dagger}\vb$ and $\tilde{\vb}=\mB^{-1}\bar{\vb}=\mB^{-1}\mM^{\dagger}\vb=\mS^{\dagger}\mB^{-1}\vb$, we can summarize the dynamics as $\tilde{\vw}_t=(I_d-\eta\mS)^t\tilde{\vw}_0 + (I_d-(I_d-\eta\mS)^t)\mS^{\dagger}\mB^{-1}\vb$.
Using $\tilde{\vw}_0=\mB^{-1}\vw_0$ and $\vw_t=\mB\tilde{\vw}_t$, multiplying by $\mB$ on both sides completes the proof, i.e. $\vw_t = \mB (I_d-\eta\mS)^t\mB^{-1}\vw_0 + \mB(I_d- (I_d-\eta\mS)^t)\mS^{\dagger}\mB^{-1}\vb$.
By observing that $\mB(I_d-\eta\mS)^t\mB^{-1}=(I_d-\eta\mB\mS\mB^{-1})^t=(I_d-\eta\mM)^t$, we get
\begin{align*}
	\vw_t &= \mB(I_d-\eta\mS)^t\mB^{-1}\vw_0 + \mB(I_d- (I_d-\eta\mS)^t)\mB^{-1}\mB\mS^{\dagger}\mB^{-1}\vb\\
	&= (I_d-\eta\mM)^t\vw_0 + (\mB\mB^{-1}- \mB(I_d-\eta\mS)^t\mB^{-1})\mB\mS^{\dagger}\mB^{-1}\vb\\
	&= (I_d-\eta\mM)^t\vw_0 + (I_d- (I_d-\eta\mM)^t)\mM^{\dagger}\vb
\end{align*}
which completes the proof
\end{proof}
\end{lemma}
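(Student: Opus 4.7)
The plan is to follow the same strategy used for the continuous-time version in \lemref{lem:linear_dynamics}, replacing the exponential decay arising from gradient flow by the geometric decay arising from a single gradient step. The map $\vw \mapsto \vw - \eta(\mM \vw - \vb) = (I_d - \eta\mM)\vw + \eta\vb$ is affine, so diagonalizing $\mM$ will decouple the recursion into $d$ independent scalar affine recurrences that have a standard closed form.

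First I would use the range hypothesis to write $\vb = \mM\bar{\vb}$ for some $\bar{\vb}$, and then change coordinates to $\tilde{\vw}_t = \mB^{-1}\vw_t$ and $\tilde{\vb} = \mB^{-1}\bar{\vb} = \mS^{\dagger}\mB^{-1}\vb$. Left-multiplying the recursion by $\mB^{-1}$ yields, coordinatewise, the scalar recurrences $\tilde{\vw}_{t+1}(i) = (1-\eta s_i)\,\tilde{\vw}_t(i) + \eta s_i\,\tilde{\vb}(i)$ for $i\in[d]$. Iterating and summing the geometric series $\sum_{j=0}^{t-1}(1-\eta s_i)^j\eta s_i = 1-(1-\eta s_i)^t$ gives $\tilde{\vw}_t(i) = (1-\eta s_i)^t\,\tilde{\vw}_0(i) + \bigl(1-(1-\eta s_i)^t\bigr)\tilde{\vb}(i)$ whenever $s_i>0$; when $s_i=0$ the recurrence is the identity, and the range assumption on $\vb$ guarantees that $\tilde{\vb}(i)=0$ on the kernel of $\mM$, so the same closed form holds under the pseudoinverse convention.

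Finally I would reassemble by left-multiplying by $\mB$ to obtain the first displayed expression directly. To pass to the second expression I would invoke the functional-calculus identity $\mB f(\mS)\mB^{-1} = f(\mM)$ for any polynomial $f$ and apply it with $f(x) = (1-\eta x)^t$, together with $\mB\mS^{\dagger}\mB^{-1} = \mM^{\dagger}$ and $\mM^{\dagger}\vb = \bar{\vb}$. The only subtle point — the one spot where I expect slight care is needed — is bookkeeping on the kernel of $\mM$; this is handled uniformly by the range hypothesis and the pseudoinverse notation, so no genuinely new idea is required beyond what was already used for \lemref{lem:linear_dynamics}.
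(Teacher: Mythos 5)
Your proposal is correct and follows essentially the same route as the paper's proof: write $\vb=\mM\bar{\vb}$, diagonalize via $\mB^{-1}$, solve the decoupled scalar affine recurrences by summing the geometric series, and reassemble using $\mB f(\mS)\mB^{-1}=f(\mM)$ and $\mB\mS^{\dagger}\mB^{-1}=\mM^{\dagger}$. Your explicit remark on the $s_i=0$ coordinates (where the range hypothesis forces the kernel component of $\mB^{-1}\vb$ to vanish, so the recurrence is the identity and the pseudoinverse convention gives the right formula) is the only point the paper leaves implicit, and it is handled correctly.
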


\paragraph{Proving \Lemref{lem:gd_closed_form}:} 
We restate the statement of the \Lemref{lem:gd_closed_form} here for convenience.
\begin{lemmano}[{\bf \Lemref{lem:gd_closed_form}}]
Let $S=(\mX,\xi)$ be a sample from $\rho_{\vv}$, $\mX\in\R^{n\times d}, \xi\in\R^n$. Let $\Sigma_\mX=\frac{1}{n}\suml_{i=1}^{n}\vx_i\vx_i^{\top}\in\R^{d\times d}$
\begin{align*}
	\gdreg(S; \vw_{0}) &= (I_d-(\Sigma_\mX+\lambda I_d)^{\dagger}(\Sigma_\mX+\lambda I_d))\vw_{0} + (\Sigma_\mX+\lambda I_d)^{\dagger}\Sigma_\mX\vv + \frac{1}{n}(\Sigma_\mX+\lambda I_d)^{\dagger}\mX^{\top}\xi
\end{align*}
\begin{align*}
	\gdstep(S; \vw_{0}) &= (I_d-\eta\Sigma_\mX)^{t_0}\vw_{0} + (I_d-(I_d-\eta\Sigma_\mX)^{t_0})\Sigma_\mX^{\dagger}\Sigma_\mX\vv + \frac{1}{n}(I_d-(I_d-\eta\Sigma_\mX)^{t_0})\Sigma_\mX^{\dagger}\mX^{\top}\xi
\end{align*}
\end{lemmano}
\begin{proof}[Proof of \Lemref{lem:gd_closed_form}]
We first prove the result for $\gdreg$.
Recall the definition of the regularized loss from \Eqref{eq:reg_loss} and the dynamics for $\gdreg$
\begin{align*}
	\ell_{S,\lambda}(\vw)=\frac{1}{n}\suml_{i=1}^{n}(\vw^{\top}\vx_i-y_i)^2 + \frac{\lambda}{2}\|\vw\|^2;~~~\frac{d\vw_t}{dt} = -\nabla_{\vw}\ell_{S,\lambda}(\vw_t)
\end{align*}
where $y_i = \vv^{\top}\vx_i + \xi_i$. The gradient of $\ell_{S,\lambda}$ is
\begin{align*}
	\nabla_{\vw}\ell_{S,\lambda}(\vw)
	&= \frac{1}{n}\suml_{i=1}^{n}(\vw^{\top}\vx_i-y_i)\vx_i + \lambda \vw=\frac{1}{n}\suml_{i=1}^{n}(\vw^{\top}\vx_i-\vv^{\top}\vx_i-\xi_i)\vx_i + \lambda \vw\\
	&= \left(\frac{1}{n}\suml_{i=1}^{n}\vx_i\vx_i^{\top} + \lambda I_d\right)\vw - \left(\frac{1}{n}\suml_{i=1}^{n}\vx_i\vx_i^{\top}\right)\vv-\frac{1}{n}\mX^{\top}\xi\\
	&= (\Sigma_\mX + \lambda I_d)\vw - \Sigma_\mX\vv-\frac{1}{n}\mX^{\top}\xi
\end{align*}
If $\mM = (\Sigma_\mX + \lambda I_d)$ and $\vb = \Sigma_\mX\vv+\frac{1}{n}\mX^{\top}\xi$, then $\nabla_{\vw}\ell_{S,\lambda}(\vw)=\mM\vw - \vb$ and the dynamics are $\frac{d\vw_t}{dt} = -\mM\vw_t+\vb$.
Note that $\vb$ is in the range of $\mM$ for every $\lambda\ge0$; this is obvious for $\lambda>0$ when $\mM$ is full rank, but even $\lambda=0$, since $\vb$ lies in the span of rows of $\mX$, it lies in the span of $\Sigma_\mX=\frac{1}{n}\mX^{\top}\mX$.
Thus by \Lemref{lem:linear_dynamics}, we get that $\vw_{\infty}=(I_d-\mM^{\dagger}\mM)\vw_0 + \mM^{\dagger}\vb$.
Plugging in values of $\mM$ and $\vb$ gives the desired closed form for $\gdreg$.

We now derive the closed form solution for $\gdstep$.
Recall the dynamics of $\gdstep$
\begin{align*}
	\ell_S(\vw)=\frac{1}{n}\suml_{i=1}^n(\vw^{\top}\vx_i-y_i)^2; ~~\vw_{t+1} = \vw_{t} - \eta \nabla_{\vw}\ell_S(\vw_t)
\end{align*}
where again $y_i=\vv^{\top}\vx_i+\xi_i$. The gradient of $\ell_S$ is
\begin{align*}
	\nabla_{\vw}\ell_{S}(\vw)
	&= \frac{1}{n}\suml_{i=1}^{n}(\vw^{\top}\vx_i-y_i)\vx_i=\frac{1}{n}\suml_{i=1}^{n}(\vw^{\top}\vx_i-\vv^{\top}\vx_i-\xi_i)\vx_i\\
	&= \left(\frac{1}{n}\suml_{i=1}^{n}\vx_i\vx_i^{\top}\right)\vw - \left(\frac{1}{n}\suml_{i=1}^{n}\vx_i\vx_i^{\top}\right)\vv-\frac{1}{n}\mX^{\top}\xi\\
	&= \Sigma_\mX\vw - \Sigma_\mX\vv-\frac{1}{n}\mX^{\top}\xi
\end{align*}
Setting $\mM=\Sigma_\mX$ and $\vb=\Sigma_\mX\vv + \frac{1}{n}\mX^{\top}\xi$, we get $\nabla_\vw\ell_S(\vw)=\mM\vw-\vb$ and the dynamics are $\vw_{t+1}-\vw_t=-\eta(\mM\vw_t-\vb)$.
Again since $\vb$ is in the span of $\mM$, we can use \Lemref{lem:gd_closed_form} to get $\vw_{t_0}=(I_d-\eta\mM)^{t_0}\vw_0 + (I_d- (I_d-\eta\mM)^{t_0})\mM^{\dagger}\vb$.
Plugging in the values of $\mM$ and $\vb$ completes the proof for $\gdstep$.
\end{proof}

\subsection{Other proofs}\label{asubsec:other_proofs}
\paragraph{Proving \Lemref{lem:excess_risk_linear}}
\begin{proof}[Proof of \Lemref{lem:excess_risk_linear}]
We start by looking at the loss for $s\wo$ for $s\in\{\pm1\}$
\begin{align*}
	\ex_{S\sim\rho_{s\wo}}\left[\ell_{s\wo}(\Alg(\cdot;\vw_0))-\sigma^2\right]
	&=\ex_{S\sim\rho_{s\wo}}\left[\|\Alg(S;\vw_0))-s\wo\|^2\right]\\
	&=\ex_{\substack{\mX\sim\gN^n(0,I_d)\\\xi\sim\gN(0,\sigma^2I_m)}}\left[\|\mA_\mX\vw_{0} + \mB_\mX (s\wo) + \mC_\mX\xi-s\wo\|^2\right]\\
	&=\ex_{\mX,\xi}\|\mA_\mX\vw_{0} - (I_d-\mB_\mX)s\wo + \mC_\mX\xi\|^2\\
	&=^{(a)}\ex_{\mX}\|(\mA_\mX\vw_{0} - (I_d-\mB_\mX)s\wo\|^2 + \ex_{\mX,\xi}\|\mC_\mX\xi\|^2\\
	&=^{(b)}\ex_{\mX}\|(s\mA_\mX\vw_{0} - (I_d-\mB_\mX)\wo\|^2 + \ex_{\mX,\xi}\|\mC_\mX\xi\|^2
\end{align*}
where $(a)$ uses the fact that $\mX$ and $\xi$ are independent and $\ex_{\xi}\xi=0$ and $(b)$ uses $s^2=1$. Thus we get,
\begin{align*}
	\gE_n(\gdreg(\cdot; \vw_{0}),\mu_\wo)
	&=\ex_{s\sim\{\pm1\}}\left[\ex_{S\sim\rho_{s\wo}}[\ell_{s\wo}(\gdreg(\cdot;\vw_0))\right]-\sigma^2\\
	&= \ex_{s\sim\{\pm1\}}\left[\ex_{S\sim\rho_{s\wo}}[\ell_{s\wo}(\gdreg(\cdot;\vw_0))-\sigma^2]\right]\\
	&= \ex_{s\sim\{\pm1\}}\ex_{\mX,\xi}\left[\|(s\mA_\mX\vw_{0} - (I_d-\mB_\mX)\wo\|^2 + \|\mC_\mX\xi\|^2\right]\\
	&= \ex_{\mX}\ex_{s\sim\{\pm1\}}\left[\|(s\mA_\mX\vw_{0} - (I_d-\mB_\mX)\wo\|^2\right] + \ex_{\mX,\xi}\left[\xi^{\top}\mC_\mX^{\top}\mC_\mX\xi\right]\\
	&\ge^{(a)} \ex_{\mX}\left[\|(\ex_{s\sim\{\pm1\}}[s]\mA_\mX\vw_{0} - (I_d-\mB_\mX)\wo\|^2\right] + \ex_{\mX,\xi}\left[\xi^{\top}\mC_\mX^{\top}\mC_\mX\xi\right]\\
	&=^{(b)} \ex_{\mX}\left[\|(I_d-\mB_\mX)\wo\|^2\right] + \ex_{\mX}\left[\ex_{\xi}\tr(\mC_\mX^{\top}\mC_\mX\xi\xi^{\top})\right]\\
	&=^{(c)} \ex_{\mX}\left[\|(I_d-\mB_\mX)\wo\|^2\right] + \ex_{\mX}[\sigma^2\tr(\mC_\mX^{\top}\mC_\mX)]
\end{align*}
where $(a)$ is true by convexity of the quadratic function in $s$, $(b)$ uses $\xi^{\top}\mP\xi=\tr(\mP\xi\xi^{\top})$ for any $d\times d$ matrix $\mP$, $(c)$ uses the linearity of $\tr$ operator and the fact that $\ex_{\xi}\xi\xi^{\top}=\sigma^2I_m$ when $\xi\sim\gN(0,\sigma^2I_m)$.
This completes the proof.
Note that we only needed first and second moment conditions on $\xi$ to prove this lemma.
\end{proof}

\paragraph{Proving \Lemref{lem:convexity_S}}
\begin{proof}[Proof of \Lemref{lem:convexity_S}]
If $n\ge d$, we just follow the steps below that heavily use Jensen's inequality due to the convexity of $f$.
\begin{align*}
	\ex_{\mS}\left[\suml_{i=1}^{d} f(s_i)\right]
	&= \ex_{\mS}\left[d\ex_{i\sim[d]} [f(s_i)]\right]
	\ge^{(a)} d\ex_{\mS}\left[f\left(\ex_{i\sim[d]} [s_i]\right)\right]
	= d\ex_{\mS}\left[f\left(\frac{1}{d}\tr(\mS)\right)\right]
	=^{(b)} d\ex_{\mX}\left[f\left(\frac{1}{d}\tr(\Sigma_\mX)\right)\right]\\
	&\ge^{(c)} df\left(\frac{1}{d}\ex_{\mX}[\tr(\Sigma_\mX)]\right)
	= df\left(\frac{1}{d}\tr(\ex_{\mX}\Sigma_\mX)\right)
	=^{(d)} df\left(\frac{1}{d}\tr(I_d)\right)
	= df\left(1\right)
\end{align*}
where $(a)$ follows from Jensen's inequality, $(b)$ follows from the fact that $\tr(\mS)=\tr(\mS\mV^{\top}\mV)=\tr(\mV\mS\mV^{\top})=\tr(\Sigma_\mX)$, $(c)$ follows from Jensen's inequality and $(d)$ follows from $\ex_{\mX}\Sigma_\mX=\ex_{\mX}\frac{1}{n}\suml_{i=1}^{n}\vx_i\vx_i^{\top}=\frac{1}{n}\suml_{i=1}^{n}\ex_{\mX}\vx_i\vx_i^{\top}=\frac{1}{n}\suml_{i=1}^{n}I_d=I_d$.

When $n<d$, we know that $\mX$ (and hence $\Sigma_\mX$) has rank at most $n<d$, thus the $d-n$ smallest eigenvalues are 0, i.e. $s_i=0$ for $n+1\le i\le d$.
Note that $\suml_{i=1}^d s_i = \suml_{i=1}^{n}s_i = \tr(\mS)$.
Following the steps below,
\begin{align*}
	\ex_{\mS}\left[\suml_{i=1}^{d} f(s_i)\right]
	&=\ex_{\mS}\left[\suml_{i=1}^{n} f(s_i)\right] + \ex_{\mS}\left[\suml_{i=n+1}^{d} f(s_i)\right]
	= \ex_{\mS}\left[n\ex_{i\sim[n]} [f(s_i)]\right] + \ex_{\mS}\left[\suml_{i=n+1}^{d} f(0)\right]\\
	&\ge \ex_{\mS}\left[n f(\ex_{i\sim[n]}[s_i])\right] + (d-n)f(0)
	= \ex_{\mS}\left[n f\left(\frac{1}{n}\tr(\mS)\right)\right] + (d-n)f(0)\\
	&= \ex_{\mX}\left[n f\left(\frac{1}{n}\tr(\Sigma_\mX)\right)\right] + (d-n)f(0)
	\ge n f\left(\frac{1}{n}\tr(\ex_{\mX}\Sigma_\mX)\right) + (d-n)f(0)\\
	&= nf\left(\frac{d}{n}\right) + (d-n)f(0)
\end{align*}
\end{proof}

\section{Non-convex proofs}\label{asec:non_convex}

\subsection{Theorems and Lemmas for Reptile}\label{asubsec:reptile}
Let $\bar{\wf}_{i+1},\bar{\ws}_{i+1}=\gdpop(\ell_{\rho_{i+1}},(\wf_{i},\ws_{i}))$ be the solution for task $\rho_{i+1}$ that is found by gradient descent starting from current initialization.
Thus the reptile update is $\wf_{i+1}=(1-\tau)\wf_{i}+\tau\bar{\wf}_{i+1}$ and $\ws_{i+1}=(1-\tau)\ws_{i}+\tau\bar{\ws}_{i+1}$.
Let $\bwo=\wo/\|\wo\|$ be the unit vector and let $r=\|\wo\|$.
\begin{lemma}\label{lem:reptile_dynamics}
Given a sequence of tasks $\rho_{1:T}$ where $\rho_i=\rho_{s_i\wo}$ for $s_i\in\{\pm1\}$. Starting with $\wf_0=\kappa I_d, \ws=\vzero_d$, then the initialization learned by \reptile ~satisfies the following at every step $i\in[T]$
\begin{align*}
\wf_{i} = (a_i-\kappa)\bwo{\bwo}+\kappa I_d,~\ws_{i} = b_i\bwo
\end{align*}
where
\begin{align*}
	a_0&=\kappa,~b_0=0\\
	c_i&=a_i^2-b_i^2, ~a_{i+1}=(1-\tau)a_i + \tau\bar{a}_{i+1}, ~b_{i+1}=(1-\tau)b_i + \tau~s_{i+1}\bar{b}_{i+1}\\
	\bar{a}_{i+1}&=\sqrt{\frac{c_i+\sqrt{4r^2+c_i^2}}{2}}, \bar{b}_{i+1}=\sqrt{\frac{-c_i+\sqrt{4r^2+c_i^2}}{2}}
\end{align*}
\end{lemma}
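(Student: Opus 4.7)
The plan is to proceed by induction on $i$, with the inductive hypothesis being exactly the structural claim that $\wf_i = (a_i - \kappa)\bwo\bwo^{\top} + \kappa I_d$ and $\ws_i = b_i \bwo$ together with the stated recursion for $(a_i, b_i)$. The base case $i = 0$ is immediate from the initialization $\wf_0 = \kappa I_d$, $\ws_0 = \vzero_d$, which corresponds to $a_0 = \kappa$ and $b_0 = 0$.

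For the inductive step, I would fix $s = s_{i+1}$ and first analyze $(\bar\wf_{i+1}, \bar\ws_{i+1}) = \gdpop(\ell_{\rho_{s\wo}}, (\wf_i, \ws_i))$, then apply the Reptile interpolation $\wf_{i+1} = (1-\tau)\wf_i + \tau \bar\wf_{i+1}$, $\ws_{i+1} = (1-\tau)\ws_i + \tau \bar\ws_{i+1}$. From~\eqref{eq:2layer_loss_closed_form} the gradients are $\nabla_\wf \ell = 2\ws(\wf^{\top}\ws - s\wo)^{\top}$ and $\nabla_\ws \ell = 2\wf(\wf^{\top}\ws - s\wo)$. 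Substituting the inductive form and using $\bwo^{\top}\bwo = 1$ gives $\wf^{\top}\ws = ab\bwo$, hence $\wf^{\top}\ws - s\wo = (ab - sr)\bwo$, so
\begin{align*}
\nabla_\wf \ell = 2 b (ab - sr)\, \bwo\bwo^{\top}, \qquad \nabla_\ws \ell = 2 a (ab - sr)\, \bwo.
\end{align*}
The first is a scalar multiple of $\bwo\bwo^{\top}$ and the second is a scalar multiple of $\bwo$, so the structural form is invariant along gradient flow. Parameterizing $\wf(t) = a(t)\bwo\bwo^{\top} + \kappa(I_d - \bwo\bwo^{\top})$ and $\ws(t) = b(t)\bwo$, the flow reduces to the coupled scalar ODE $\dot a = -2 b(ab - sr)$, $\dot b = -2 a(ab - sr)$.

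The next step is to exploit the standard two-layer linear-network conservation law from~\citet{saxe:14,gidel:19}: $\tfrac{d}{dt}(a^2 - b^2) = 2a\dot a - 2b\dot b \equiv 0$, so $a(t)^2 - b(t)^2 \equiv c_i$ throughout the flow. To identify the limit, I would track the product $p(t) = a(t)b(t)$, which satisfies $\dot p = -2(a^2 + b^2)(p - sr)$, a stable linear ODE that contracts $p \to sr$ exponentially as long as $a^2 + b^2$ stays bounded below. The equilibrium conditions $a^2 - b^2 = c_i$ and $ab = sr$ give a $2 \times 2$ system in $(a^2, b^2)$ whose positive solution is $a(\infty)^2 = \tfrac{c_i + \sqrt{c_i^2 + 4r^2}}{2}$ and $b(\infty)^2 = \tfrac{-c_i + \sqrt{c_i^2 + 4r^2}}{2}$, matching $\bar a_{i+1}^2$ and $\bar b_{i+1}^2$; the branch $a(\infty) > 0$ is selected by continuity from $a_i > 0$, forcing $\sgn(b(\infty)) = s_{i+1}$ via $ab = sr$. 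Thus $\bar\wf_{i+1} = \bar a_{i+1}\bwo\bwo^{\top} + \kappa(I_d - \bwo\bwo^{\top})$ and $\bar\ws_{i+1} = s_{i+1}\bar b_{i+1}\bwo$, and substituting into the Reptile interpolation yields the stated recursions $a_{i+1} = (1-\tau)a_i + \tau \bar a_{i+1}$, $b_{i+1} = (1-\tau)b_i + \tau s_{i+1}\bar b_{i+1}$, closing the induction.

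The principal technical obstacle will be making the convergence step rigorous: one must rule out stagnation at the trivial critical point $a = b = 0$ and ensure the flow indeed reaches the claimed equilibrium rather than diverging. The cleanest route is a side-induction showing $c_i > 0$ along the whole recursion, which starts from $c_0 = \kappa^2 > 0$, since then $a(t)^2 \ge c_i > 0$ throughout the flow, guaranteeing $a^2 + b^2 \ge c_i$ and making the $\dot p$-ODE strictly contractive. The small calculation that $c_{i+1} > 0$ persists after the Reptile interpolation uses the closed forms for $\bar a_{i+1}, \bar b_{i+1}$ (which satisfy $\bar a_{i+1}^2 - \bar b_{i+1}^2 = c_i$ by construction) and expands $a_{i+1}^2 - b_{i+1}^2$ as a quadratic in $\tau$; this is elementary once the parametrization is in hand, but is the place where the argument has to be done with care.
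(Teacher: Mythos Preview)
Your proposal is correct and follows the same inductive skeleton as the paper: verify the base case, show that the structural form $\wf=(a-\kappa)\bwo\bwo^\top+\kappa I_d$, $\ws=b\bwo$ is preserved by $\gdpop$ (this is the paper's \Lemref{lem:gd_dynamics}), and then close the induction via the Reptile interpolation. The difference is in how the scalar ODE is handled. The paper reduces to $\dot a = b(sr - ab)$, $\dot b = a(sr - ab)$ and then invokes the hyperbolic change of variables $(a,b)=(\sqrt{c}\cosh(\theta/2),\sqrt{c}\sinh(\theta/2))$ from \citet{saxe:14} to read off the fixed point $\bar\theta=\sinh^{-1}(2rs/c)$. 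You instead use the conservation of $a^2-b^2$ directly together with the contractive dynamics of the product $p=ab$; this is equivalent (the hyperbolic parametrization is precisely the way to encode $a^2-b^2=c$) but is a bit more self-contained and makes the convergence argument explicit rather than deferring to \citet{saxe:14}.

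Your proposal is also more careful than the paper on one point: the paper states \Lemref{lem:gd_dynamics} under the hypothesis ``$a(0)>b(0)$'' and then simply says ``doing the interpolation step completes the proof'' without checking that the hypothesis survives the Reptile update. You correctly identify that the right invariant is $c_i=a_i^2-b_i^2>0$ (together with $a_i>0$), and you sketch the side-induction showing it persists after interpolation. That computation is indeed elementary once you use $\bar a_{i+1}^2-\bar b_{i+1}^2=c_i$ and $a_i>|b_i|$, $\bar a_{i+1}>\bar b_{i+1}$: expanding $a_{i+1}^2-b_{i+1}^2$ gives $(1-\tau)^2c_i+\tau^2c_i+2\tau(1-\tau)(a_i\bar a_{i+1}-s_{i+1}b_i\bar b_{i+1})$, and the cross term is bounded below by $a_i\bar a_{i+1}-|b_i|\bar b_{i+1}>0$. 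One small wording fix: the ODE for $p$ is not literally linear since $a^2+b^2$ is time-varying, but your lower bound $a^2+b^2\ge c_i>0$ gives the Gr\"onwall-type contraction you need.
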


The following key lemma about the solution of gradient flow for a single task starting from an initialization is crucial to prove the above lemma.
\begin{lemma}\label{lem:gd_dynamics}
Starting from $\wf(0)=(a(0)-\kappa)\bwo\bwo^{\top} + \kappa I_d$, $\ws(0)=b(0)\bwo$, with $a(0)>b(0)$, the solution of gradient flow on loss $\ell_{s\wo}$ for $s\in\{\pm1\}$, is $\bar{\wf},\bar{\ws}=\gdpop(\ell_{s\wo}, (\wf,\ws))$, where
\begin{align*}
	\bar{\wf}=(\bar{a}-\kappa)\bwo\bwo^{\top} &+ \kappa I_d, ~\bar{\ws}=\bar{b}\bwo~\text{, where}\\
	\bar{a} = \sqrt{\frac{c+\sqrt{4r^2+c^2}}{2}}, ~\bar{b} &= s\sqrt{\frac{-c+\sqrt{4r^2+c^2}}{2}}, ~c=a(0)^2-b(0)^2
\end{align*}
\end{lemma}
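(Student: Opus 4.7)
The argument has four steps, the first three of which collapse the matrix/vector gradient flow to a scalar ODE, and the last of which solves for its equilibrium.

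\textit{Step 1 (the ansatz is invariant).} I would first verify that the subspace form $\wf(t) = (a(t)-\kappa)\bwo\bwo^{\top} + \kappa I_d$, $\ws(t) = b(t)\bwo$ is preserved along gradient flow on $\ell_{s\wo}$. A direct computation yields $\wf^{\top}\ws = ab\,\bwo$ and hence $\wf^{\top}\ws - s\wo = (ab-sr)\bwo$, where $r=\|\wo\|$. The two gradients are then
\begin{align*}
\nabla_{\ws}\ell_{s\wo} &= 2\wf(\wf^{\top}\ws - s\wo) = 2a(ab-sr)\bwo, \\
\nabla_{\wf}\ell_{s\wo} &= 2\ws(\wf^{\top}\ws - s\wo)^{\top} = 2b(ab-sr)\bwo\bwo^{\top}.
\end{align*}
Both updates live exactly in the one-dimensional subspaces spanned by $\bwo$ and $\bwo\bwo^{\top}$, so the flow never leaves the ansatz. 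This reduces the matrix-valued ODE to the coupled scalar system
\begin{align*}
\dot a = -2b(ab-sr), \qquad \dot b = -2a(ab-sr).
\end{align*}

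\textit{Step 2 (conservation of $a^2 - b^2$).} From the scalar system, $\tfrac{d}{dt}(a^2-b^2) = 2a\dot a - 2b\dot b = -4ab(ab-sr)+4ab(ab-sr) = 0$, so $a(t)^2 - b(t)^2 \equiv c := a(0)^2 - b(0)^2$. The hypothesis $a(0) > b(0)$ together with the sign convention implicit in the Reptile usage ($a(0)=\kappa>0$, $b(0)=0$) gives $c > 0$, hence $a(t)^2 \ge c > 0$ for all $t$. This in particular rules out convergence to the spurious fixed point $(0,0)$ and shows $a(t)$ cannot change sign.

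\textit{Step 3 (monotone convergence of $u = ab$).} Setting $u = ab$, I compute $\dot u = \dot a\, b + a\,\dot b = -2(a^2+b^2)(ab-sr) = -2(a^2+b^2)(u-sr)$. Since $a^2+b^2 \ge a^2 \ge c$, we have $\tfrac{d}{dt}(u-sr)^2 \le -4c(u-sr)^2$, so $u(t) \to sr$ exponentially. Combined with the conserved quantity $a^2 - b^2 = c$, both $a^2$ and $b^2$ converge. Setting $U = a(\infty)^2$, $V = b(\infty)^2$, the limit satisfies $U - V = c$ and $UV = r^2$, so $U$ is the positive root of $U^2 - cU - r^2 = 0$, yielding $U = \tfrac{c + \sqrt{c^2 + 4r^2}}{2}$ and $V = \tfrac{-c + \sqrt{c^2 + 4r^2}}{2}$. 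Taking $\bar a = +\sqrt{U}$ (since $a$ cannot change sign and starts positive) and determining the sign of $\bar b$ from $\bar a\bar b = sr$ gives $\bar b = s\sqrt{V}$, matching the lemma statement.

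\textit{Main obstacle.} The step I expect to be most delicate is step 3: proving that the flow actually converges, rather than orbiting one of the stationary sets or stalling at $(0,0)$. The trick is the change of variable $u = ab$, which combined with the invariant $a^2-b^2 = c$ turns the problem into a single exponentially contracting scalar ODE with rate bounded below by $4c$. Once this is in place, identifying the limit is just solving the elementary system $\{U-V=c,\ UV=r^2\}$, and the sign determination follows from the continuity of $a(t)$ and the requirement $\bar a \bar b = sr$.
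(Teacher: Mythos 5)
Your proof is correct, and the way you close out the scalar ODE differs from the paper's in a way worth noting. Both proofs begin with the same observation that the ansatz $\wf(t) = (a(t)-\kappa)\bwo\bwo^{\top} + \kappa I_d$, $\ws(t) = b(t)\bwo$ is invariant under gradient flow, reducing the matrix dynamics to the coupled scalar system for $(a,b)$ with the conserved quantity $a^2-b^2 = c$. Where you diverge is in how you solve that system. The paper passes to the hyperbolic parameterization $(a,b)=(\sqrt{c}\cosh(\theta/2),\sqrt{c}\sinh(\theta/2))$ and cites the analysis in Appendix~A of \citet{saxe:14} to identify the fixed point $\bar\theta=\sinh^{-1}(2rs/c)$; it does not itself prove that the flow converges to that fixed point. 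Your argument via $u=ab$ is self-contained and gives more: $\dot u = -2(a^2+b^2)(u-sr)$ combined with $a^2+b^2\ge c>0$ yields $\tfrac{d}{dt}(u-sr)^2\le -4c(u-sr)^2$, so you get an explicit exponential convergence rate, and then the limit is pinned down by solving $\{U-V=c,\ UV=r^2\}$ with the sign of $\bar a$ fixed by continuity (since $a^2\ge c>0$ rules out a sign change) and the sign of $\bar b$ fixed by $\bar a\bar b = sr$. This is a cleaner, more elementary route to the same endpoint and actually supplies the convergence step the paper leaves implicit. One small caveat, which you already flag and which is in fact inherited from the lemma's own wording: the hypothesis $a(0)>b(0)$ does not by itself force $c=a(0)^2-b(0)^2>0$ (that also needs $a(0)+b(0)>0$, equivalently $a(0)>|b(0)|$). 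The positivity of $c$ and of $a(0)$ are both load-bearing in the sign determination of $\bar a$; in every application in the paper one has $a(0)=\kappa>0$ and $b(0)=0$, so this is a statement-level imprecision rather than a gap in your argument, but it would be cleanest to state the needed hypothesis as $a(0)>|b(0)|\ge 0$.
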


\begin{proof}[Proof of \Lemref{lem:reptile_dynamics}]
We prove this using a simple induction by assuming \Lemref{lem:gd_dynamics}.
It is clear for $i=0$ that $a_0=\kappa$ and $b_0=0$.
Suppose $\wf_i=(a_i-\kappa)\bwo{\bwo}+\kappa I_d,~\ws_{i} = b_i\bwo$.
From \Lemref{lem:gd_dynamics}, we get that $\bar{\wf}_{i+1}=(\bar{a}_i-\kappa)\bwo{\bwo}+\kappa I_d,~\ws_{i+1} = s_{i+1}\bar{b}_i\bwo$.
Doing the interpolation step completes the proof.
\end{proof}

\begin{proof}[Proof of \Lemref{lem:gd_dynamics}]
The proof uses ideas from \citet{saxe:14,saxe:19,gidel:19}, where the dynamics of linear networks is analyzed in the case where the subspace of the initialization is aligned with the target $s\bwo$.
We provide a proof of this lemma by borrowing the key ideas those works.
Let $\mU\in\R^{d\times d}$ be an orthonormal matrix, i.e. $\mU^{\top}\mU=I_d$, whose first column is $\bwo$.
Thus we can rewrite $\wf(0) = \mU\Lambda_1(0)\mU^{\top}$, where $\Lambda_1(0)\in\R^{d\times d}$ is a diagonal matrix that looks like $\Lambda_1(0) = \diag(a(0),\kappa,\dots,\kappa)$, $\ws(0)=\mU\Lambda_2(0)$, where $\Lambda_2(0)=(b(0),0,\dots,0)\in\R^d$ and $s\wo=\mU\Lambda_*$, where $\Lambda_*=(sr,0,\dots,0)\in\R^d$.
The loss to run gradient flow on is $\ell_{s\wo}(\wf,\ws)=\|\wf^{\top}\ws-s\wo\|^2$.
Dynamics of gradient flow is
\begin{align*}
	\frac{d\wf(t)}{dt}=-\nabla_{\wf}\ell_{s\wo}(\wf(t),\ws(t))=s\ws(t)\wo^{\top}-\ws(t)\ws(t)^{\top}\wf(t)\\
	\frac{d\ws(t)}{dt}=-\nabla_{\ws}\ell_{s\wo}(\wf(t),\ws(t))=s\wf(t)\wo-\wf(t)\wf(t)^{\top}\ws(t)
\end{align*}
Just like \citet{saxe:14,gidel:19}, we define $\Lambda_1(t)=\mU^{\top}\wf(t)\mU$, $\Lambda_2(t)=\mU^{\top}\ws(t)$, $\Lambda_*=\mU^{\top}\wo$. Thus
\begin{align*}
	\frac{d\Lambda_1(t)}{dt}=\Lambda_2(t)\Lambda_*^{\top}-\Lambda_2(t)\Lambda_2(t)^{\top}\Lambda_1(t)\\
	\frac{d\Lambda_2(t)}{dt}=\Lambda_1(t)\Lambda_*-\Lambda_1(t)\Lambda_1(t)^{\top}\Lambda_2(t)
\end{align*}
By a similar argument, we see that the time derivative of $\Lambda_1$ is non-zero only for the first diagonal entry while the derivative of $\Lambda_2$ is non-zero only for the first entry.
Thus the entire dynamics can be summarized by the dynamics of two scalar values
\begin{align*}
	\frac{da(t)}{dt}&=b(t)sr-b(t)^2a(t)\\
	\frac{db(t)}{dt}&=a(t)sr-a(t)^2b(t)
\end{align*}
Using the hyperbolic change of coordinates of $(a(t),b(t))=(\sqrt{c}\cosh(\theta/2), \sqrt{c}\sinh(\theta/2))$ and the analysis in Appendix A from \citet{saxe:14}, we have that the fixed point of the dynamics is at $\bar{\theta}=\sinh^{-1}(2rs/c)$, thus giving the solutions
\begin{align*}
	\bar{a}&=\sqrt{c}\cosh(\bar{\theta}/2)=\sqrt{\frac{c+\sqrt{4r^2+c^2}}{2}}\\
	\bar{b}&=\sqrt{c}\sinh(\bar{\theta}/2)=\sqrt{\frac{-c+\sqrt{4r^2+c^2}}{2}}
\end{align*}
\end{proof}

We now prove the key theorem that shows how the reptile update amplifies the component of the first layer in the direction of $\wo$.
Precisely, it shows that with high probability over sampling of the training tasks, $a_T$ from \Lemref{lem:reptile_dynamics} is large for appropriate choice of $T$ and $\tau$.
\begin{theorem}\label{thm:ablowup}
Suppose $\{a_i,b_i,c_i\}_{i=1}^{T}$ follow the dynamics from \Lemref{lem:reptile_dynamics} with $\{s_1,\dots,s_T\}\sim\{\pm1\}^{T}$. Then with probability at least $1-\delta$, $a_T\ge\min\left\{\frac{\sqrt{r}}{2\sqrt{\tau\log(T/\delta)}}, \sqrt{r}\frac{(\tau T)^{1/4}}{2}\right\}$. Picking $\tau=T^{-1/3}\log(2T/\delta)^{-2/3}$, we get that $a_T\ge\frac{\sqrt{r}T^{1/6}\log(2T/\delta)^{-1/6}}{2}=\tilde{\Omega}(\sqrt{r}T^{1/6})$
\begin{proof}
The proof has 3 mains steps
\begin{itemize}
	\item {\bf Step 1:} $a_i$ is non-decreasing and the increment in $a_i$ is a decreasing function of $|a_ib_i|$. Also $|a_ib_i|\le r$.
	\item {\bf Step 2:} With high probability, $|b_i|$ is small
	\item {\bf Step 3:} Either $|a_ib_i|$ is small, which gives an increment in $a_i$, otherwise, or $a_i=\Omega(1/|b_i|)$ is large since $|b_i|$ is small
\end{itemize}
\paragraph{Step 1:}
We first prove that $a_i$ is non-decreasing, which happens if $\bar{a}_{i+1}\ge a_i$ for every $i$.
\begin{align*}
	\bar{a}_{i+1}^2-a_i^2&=\frac{c_i+\sqrt{4r^2+c_i^2}}{2}-a_i^2=\frac{a_i^2-b_i^2+\sqrt{4r^2+(a_i^2-b_i^2)^2}-2a_i^2}{2}\\
	&=\frac{\sqrt{4r^2+(a_i^2-b_i^2)^2}-(a_i^2+b_i^2)}{2}=\frac{\sqrt{4(r^2-a_i^2b_i^2)+(a_i^2+b_i^2)^2}-(a_i^2+b_i^2)}{2}
\end{align*}
Thus $|a_ib_i|<r$ will ensure that $a_i$ is non-decreasing.
We show that using induction, $|a_0b_0|=0$ and assume $|a_ib_i|\le r$.
Notice that since $\bar{a}_{i+1}^2-\bar{b}_{i+1}=a_i^2-b_i^2$, we have that $(|a_i|-|\bar{a}_{i+1}|)(|b_i|-|\bar{b}_{i+1}|)\ge 0$.
\begin{align*}
	|a_{i+1}b_{i+1}|&=|(1-\tau)^2a_ib_i+\tau^2\bar{a}_{i+1}\bar{b}_{i+1}+\tau(1-\tau)a_i\bar{b}_{i+1}+\tau(1-\tau)b_i\bar{a}_{i+1}\\
	&\le(1-\tau)^2|a_ib_i|+\tau^2|\bar{a}_{i+1}\bar{b}_{i+1}|+\tau(1-\tau)[|a_i||\bar{b}_{i+1}|+|b_i||\bar{a}_{i+1}|]\\
	&\le(1-\tau)^2r+\tau^2r+\tau(1-\tau)[|a_i||b_i|+|\bar{a}_{i+1}||\bar{b}_{i+1}|] + \tau(1-\tau)(|a_i|-|\bar{a}_{i+1}|)(|\bar{b}_{i+1}|-|b_i|)\\
	&\le(1-\tau)^2r+\tau^2r+\tau(1-\tau)[r+r]+0=r
\end{align*}
Thus finishing the first step in the proof.
\paragraph{Step 2:}
We now move to the second step about $|b_i|$ being small.
\begin{prop}
With probability at least $1-\delta$ over $\{s_1,\dots,s_T\}$, $|b_i|\le\sqrt{2r\tau\log(2T/\delta)}$, for every $i\in[T]$
\end{prop}
From the dynamics, we have $X_{i+1}=b_{i+1}-(1-\tau)b_i=\tau s_{i+1}\bar{b}_{i+1}$.
Note that $\bar{b}_{i+1}$ depends only on $s_{1:i}$ and $|\bar{b}_{i+1}|\le\sqrt{r}$, thus conditioned on $s_{1:i}$, $X_{i+1}$ is $\tau\sqrt{r}$ sub-gaussian and $\ex_{}[X_{i+1}|s_{1:i}]=\tau\ex_{}[s_{i+1}\bar{b}_{i+1}|s_{1:i}]=0=\tau\bar{b}_{i+1}\ex_{}[s_{i+1}|s_{1:i}]=0$. It is easy to verify that we can rewrite $b_{i+1}=X_{i+1}+(1-\tau)X_i+(1-\tau)^2X_{i-1}+\dots+(1-\tau)^iX_1$, where we also use the fact that $b_0=0$.
Using Markov's inequality we get
\begin{align*}
	\Pr(b_{i+1}>\nu)&=\Pr(e^{tb_{i+1}}>e^{t\nu})\le e^{-t\nu}\ex_{}e^{tb_{i+1}}=e^{-t\nu}\ex_{}e^{t\suml_{j=0}^{i+1}(1-\tau)^{j}X_{i+1-j}}\\
	&=e^{-t\nu}\ex_{}\prodl_{j=0}^{i+1}e^{t(1-\tau)^{j}X_{i+1-j}}= e^{-t\nu}\prodl_{j=0}^{i}\ex_{}[e^{t(1-\tau)^{j}X_{i+1-j}}|s_{1:i-j}]\\
	&\le^{(a)} e^{-t\nu}\prodl_{j=0}^{i}e^{\frac{t^2\tau^2r(1-\tau)^{2j}}{2}}=e^{-t\nu}e^{\frac{t^2\tau^2r\suml_{j=0}^{i}(1-\tau)^{2j}}{2}}\\
	&\le e^{-t\nu}e^{\frac{t^2\tau^2r\suml_{j=0}^{\infty}(1-\tau)^{2j}}{2}}=e^{-t\nu}e^{\frac{t^2\tau^2r}{2(1-(1-\tau)^2)}}=e^{-t\nu}e^{\frac{t^2\tau^2r}{2(2\tau-\tau^2)}}\\
	&\le^{(b)}e^{-t\nu}e^{\frac{t^2\tau r}{2}}
\end{align*}
Where for $(a)$ we use the fact that $(1-\tau)^{j}X_{i+1-j}$ is zero mean and $(1-\tau)^{2j}\tau^2r$-subgaussian when conditioned on $s_{1:i-j}$, and for (b) we use $\tau<1$.
Picking the optimal value of $t=\frac{\nu}{\tau r}$, we get $\Pr(b_{i+1}>\nu)\le e^{-\frac{\nu^2}{2\tau r}}$.
By using the symmetry of $b_{i+1}$ (since the sequence $\{-s_1,\dots,-s_T\}$ will give $-b_{i+1}$ instead), we get that $\Pr(b_{i+1}<-\nu)\le e^{-\frac{\nu^2}{2\tau r}}$ and by union bound we get that $\Pr(\forall i\in[T],~|b_{i}|>\nu)\le 2Te^{-\frac{\nu^2}{2\tau r}}$.
Setting $\nu=\sqrt{2r\tau\log(\frac{2T}{\delta})}$, we get $\Pr(\forall i\in[T],~|b_{i}|>\nu)\le\delta$
\paragraph{Step 3:}
Let $\gamma=\sqrt{2\tau\log(\frac{2T}{\delta})}$; from step 2 we have $|b_i|\le\sqrt{r}\gamma,\forall i\in[T]$.
An easy induction can also show that $|b_i|\le\sqrt{r}$.
To show $a_T$ is large, we assume that $a_T<\alpha$ for some $\alpha$ and see how large $T$ can be without leading to a contradiction.
We also assume that $\alpha\ge1$, this assumptions will be justified in the end.
Since $a_i$ is non-decreasing, we also get that $a_i\le\alpha\sqrt{r},\forall i\in[T]$.
If $a_ib_i\ge\frac{r}{\sqrt{2}}$ for any $i$, then we have $a_i\ge\frac{r}{\sqrt{2}b_i}\ge\frac{\sqrt{r}}{\sqrt{2}\gamma}$ which would finish the proof.
If $a_ib_i<\frac{r}{\sqrt{2}}$ for every $i$, then we will prove that there is at least a constant increment in $a_i$.
Let $\Delta_i = \bar{a}_{i+1}-a_i$; as shown in step 1, $\Delta_i\ge0$.
\begin{align*}
	(\Delta_i + a_i)^2 - a_i^2 &= \frac{c_i+\sqrt{4r^2+c_i^2}}{2} - a_i^2= \frac{a_i^2-b_i^2+\sqrt{4r^2+(a_i^2-b_i^2)^2}}{2} - a_i^2\\
	& = \frac{\sqrt{4(r^2-a_i^2b_i^2)+(a_i^2+b_i^2)^2}-(a_i^2+b_i^2)}{2}\\
	& \ge^{(a)} \frac{\sqrt{2r^2+(a_i^2+b_i^2)^2}-(a_i^2+b_i^2)}{2}\\
	& \ge^{(b)} \frac{\sqrt{2r^2+r^2(\alpha^2+1)^2}-r(\alpha^2+1)}{2}=r\frac{\sqrt{2+(\alpha^2+1)^2}-(\alpha^2+1)}{2}\\
	& = r\frac{1}{\sqrt{2+(\alpha^2+1)^2}+(\alpha^2+1)}\ge\frac{r}{\sqrt{2(\alpha^2+1)^2}+(\alpha^2+1)}\\
	& = \frac{r}{(\sqrt{2}+1)(\alpha^2+1)}
\end{align*}
where $(a)$ follows because $|a_ib_i|<\frac{r}{\sqrt{2}}$ and $(b)$ follows from the fact that $\sqrt{x+y^2}-y\ge\sqrt{x+z^2}-z$ whenever $y<z$, where $x$ here is $2r^2$, $y$ is $a_i^2+b_i^2$ and $z$ is $r\alpha^2+r$.
Thus we get
\begin{align*}
	\Delta_i &\ge \sqrt{\frac{r}{(\sqrt{2}+1)(\alpha^2+1)}+a_i^2}-a_i\ge^{(a)}\sqrt{\frac{r}{(\sqrt{2}+1)(\alpha^2+1)}+r\alpha^2}-\sqrt{r}\alpha\\
	&=\sqrt{r}\left[\sqrt{\frac{1}{(\sqrt{2}+1)(\alpha^2+1)}+\alpha^2}-\alpha\right]=\sqrt{r}\frac{\frac{1}{(\sqrt{2}+1)(\alpha^2+1)}}{\sqrt{\frac{1}{(\sqrt{2}+1)(\alpha^2+1)}+\alpha^2}+\alpha}\\
	&\ge\frac{\sqrt{r}}{(\sqrt{2}+1)(\alpha^2+1)^{3/2}}\coloneqq\Delta
\end{align*}
From the dynamics, $a_{i+1}=a_i+\tau(\bar{a}_{i+1}-a_i)\ge a_i + \tau\Delta_i\ge a_i + \tau\Delta=a_0+(i+1)\tau\Delta$.
Thus $a_T\ge T\tau\Delta$.
But we assumed that $a_T\le\alpha\sqrt{r}$, so we have 
\begin{align*}
	\sqrt{r}\alpha\ge T\tau\Delta\ge\frac{T\tau\sqrt{r}}{(\sqrt{2}+1)(\alpha^2+1)^{3/2}}\ge\frac{T\tau\sqrt{r}}{(\sqrt{2}+1)(\alpha^2+\alpha^2)^{3/2}}=\frac{T\tau\sqrt{r}}{(\sqrt{2}+4)\alpha^3}
\end{align*}
Thus we get that $\alpha>\frac{(T\tau)^{1/4}}{2}$.
This completes the proof
\end{proof}
\end{theorem}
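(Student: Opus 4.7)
My plan is to decompose the analysis into a purely deterministic part (monotonicity of $a_i$ and a product invariant that does not see the signs $s_{i+1}$) and a probabilistic part (concentration of $b_i$ via a sub-Gaussian martingale), and then combine them through a case analysis that either finds a step where $|a_ib_i|$ is close to the upper bound (forcing $a_i$ to be large directly) or a guaranteed per-step increment of $a_i$ that accumulates to force $a_T$ to be large.

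The first phase is to establish, by induction on $i$, the deterministic invariants (i) $a_i\ge|b_i|\ge 0$, (ii) $|a_ib_i|\le r$, and (iii) $a_{i+1}\ge a_i$, for every realization of $s_{1:T}$. The identity $\bar a_{i+1}\bar b_{i+1}=r$ together with $\bar a_{i+1}^2-\bar b_{i+1}^2=c_i$ yields, after a short algebraic expansion, $\bar a_{i+1}^2-a_i^2=\tfrac12\bigl(\sqrt{4(r^2-a_i^2b_i^2)+(a_i^2+b_i^2)^2}-(a_i^2+b_i^2)\bigr)$, which is non-negative precisely under invariant (ii); the convexity of $x\mapsto x^2$ plus $\bar a_{i+1}\ge|\bar b_{i+1}|$ preserves invariant (i) through the interpolation, and a direct expansion of $a_{i+1}b_{i+1}$ using $(1-\tau)^2+\tau^2+2\tau(1-\tau)=1$ together with $\bar a_{i+1}\bar b_{i+1}=r$ and $|a_ib_i|\le r$ gives invariant (ii). The outputs of this phase are that $\{a_i\}$ is a deterministic non-decreasing sequence, that $|\bar b_{i+1}|\le\sqrt{r}$ always (a consequence of $c_i\ge 0$), and the explicit lower bound $\bar a_{i+1}-a_i\ge \Delta(a_i,b_i)$ obtained from the identity above.

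The second phase handles randomness. Observe that only the $b$-update involves $s_{i+1}$; setting $X_{i+1}:=b_{i+1}-(1-\tau)b_i=\tau s_{i+1}\bar b_{i+1}$, the random variables $\{X_i\}$ form a martingale difference sequence with respect to $s_{1:i}$, each conditionally mean zero and, by the bound $|\bar b_{i+1}|\le\sqrt r$, conditionally $\tau\sqrt{r}$-sub-Gaussian. Unrolling the recursion gives $b_i=\sum_{j=0}^{i-1}(1-\tau)^j X_{i-j}$, so a standard conditional MGF computation (multiplying term by term and iterating the tower property) bounds $\mathbb E[e^{tb_i}\mid \cdot]$ by $\exp(t^2\tau^2 r\sum_j(1-\tau)^{2j}/2)\le\exp(t^2\tau r/2)$, yielding via Markov and a union bound over $i\in[T]$ the event $\mathcal E$ that $|b_i|\le\gamma\sqrt r$ for all $i$, with $\gamma=\sqrt{2\tau\log(2T/\delta)}$, and $\Pr(\mathcal E)\ge 1-\delta$. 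Then on $\mathcal E$ I run the following case analysis: assume for contradiction that $a_T<\alpha\sqrt r$. Since $a_i$ is non-decreasing, $a_i\le\alpha\sqrt r$ throughout. Either there exists an $i$ with $|a_ib_i|\ge r/\sqrt 2$, which combined with $|b_i|\le\gamma\sqrt r$ gives $a_T\ge a_i\ge\sqrt r/(\sqrt 2\gamma)$, proving the first branch of the min; or $|a_ib_i|<r/\sqrt 2$ holds throughout, in which case plugging $r^2-a_i^2b_i^2\ge r^2/2$ and $a_i^2+b_i^2\le r(\alpha^2+1)$ into the expression for $\bar a_{i+1}-a_i$ and using $\sqrt{u+v^2}-v\ge u/(2v+\sqrt{u+v^2})$ yields a per-step increment $\bar a_{i+1}-a_i\ge \sqrt r/((\sqrt 2+1)(\alpha^2+1)^{3/2})$, so $a_T\ge T\tau\cdot\Theta(\sqrt r/\alpha^3)$, forcing $\alpha=\Omega((T\tau)^{1/4})$, contradicting $a_T<\alpha\sqrt r$ unless $\alpha\gtrsim(T\tau)^{1/4}/2$. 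Taking the min over the two branches gives the stated bound, and substituting $\tau=T^{-1/3}\log(2T/\delta)^{-2/3}$ balances them at $a_T=\tilde\Omega(\sqrt r\,T^{1/6})$.

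The main obstacle is the coupling between the deterministic and probabilistic arguments: the concentration bound on $b_i$ requires the deterministic ceiling $|\bar b_{i+1}|\le\sqrt r$, which in turn depends on the invariant $c_i\ge 0$ holding pathwise regardless of the signs $s_{1:T}$; without this pathwise guarantee, the sub-Gaussian parameter in the martingale bound would itself be a random quantity and the union-bound argument would lose control. A secondary subtlety is that the per-step increment $\Delta$ in the contradiction argument depends on $\alpha$ (since $a_i^2+b_i^2$ can be as large as order $r\alpha^2$), so one must carefully track the $\alpha$-dependence of $\Delta$ to get the right exponent $1/4$ in $(T\tau)^{1/4}$ rather than a looser bound; once $\Delta\sim\alpha^{-3}$ is established this falls out cleanly, but choosing the threshold $r/\sqrt 2$ for the dichotomy (rather than $r$) is what makes both the direct branch and the increment branch go through with matching constants.
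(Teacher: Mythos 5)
Your proposal is correct and follows essentially the same three-step structure as the paper's proof: deterministic invariants and monotonicity of $a_i$, sub-Gaussian martingale concentration for $b_i$ via the decomposition $b_i=\sum_{j}(1-\tau)^jX_{i-j}$, and the same dichotomy at threshold $|a_ib_i|\gtrless r/\sqrt 2$ yielding the two branches of the $\min$. The one small refinement is that you explicitly track the invariant $a_i\ge|b_i|$ (i.e.\ $c_i\ge 0$), which the paper uses implicitly when asserting $|\bar b_{i+1}|\le\sqrt r$ but only verifies in passing in Step~3; otherwise the two arguments coincide.
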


We now prove why the initialization learned at the end of Reptile will help with sample complexity of new task.
We denote $\mX\sim\gN(0,I_d)^n$ as sampling $n$ i.i.d. vectors from $\gN(0,I_d)$ and stacking them into a matrix $\mX\in\R^{n\times d}$, and $\Sigma_\mX:=\frac{1}{n}\vx_i\vx_i^{\top}$
\begin{lemma}\label{lem:reg_closed_form}
Given a symmetric and invertible $\wf\in\R^{d\times d}$ as the first layer, the excess risk for learning the second layer is
\begin{align*}
	\gE_n(\gdtworeg(\cdot;(\wf,\vzero_d)),\mu_\wo)
	=&\ex_{\mX\sim\gN(0,I_d)^n} [\lambda^2\|\wf(\wf\Sigma_\mX \wf + \lambda I_d)^{-1}\wf^{-1}\wo\|^2]\\
	 +&\frac{\sigma^2}{n} \ex_{\mX\sim\gN(0,I_d)^n} \tr\left( \wf (\wf\Sigma_\mX \wf+\lambda I_d)^{-1} \wf \Sigma_\mX \wf (\wf\Sigma_\mX \wf+\lambda I_d)^{-1} \wf  \right)\end{align*}
\begin{proof}[Proof of \Lemref{lem:reg_closed_form}]
By definition, we have
\begin{align*}
	\gE_{n,\lambda}(\wf, \mu_{\wo}) &= \ex_{s\sim\{\pm1\}}\ex_{S\sim\rho_{s\wo}^n}\ell_{s\wo}(\gdtworeg(S;(\wf,\vzero_d)))-\sigma^2\\
	&=\ex_{s\sim\{\pm1\}}\ex_{S\sim\rho_{s\wo}^n}\|s\wo-\gdtworeg(S;(\wf,\vzero_d)))\|^2
\end{align*}
We first compute the inner expectation for $s=1$, a similar calculation will work for $s=-1$.
First, we state the solution for $\gd$ for the regularized loss $\ell_{S,\lambda}$ starting from $\wf$ and we prove this later.
Let $S=(\mX,\vy)$ be all the samples and predictions, where $\mX\in\R^{n\times d}$ and $\vy\in\R^n$.
Define $\xi=\vy-\mX^{\top}\wo$ to be the noise in the predictions; by the definition of $\rho_{\wo}$, we have that $\xi\sim\gN(0, \sigma^2 I_n)$.
We can now write the solution to $\gdtworeg$ by using \Lemref{lem:linear_dynamics} as
\begin{align*}
	\gdtworeg(S;(\wf,\vzero_d))=(\wf\Sigma_\mX \wf+\lambda I_d)^{-1}(\wf\Sigma_\mX\wo+\frac{1}{n}\wf\mX^{\top}\xi)
\end{align*}
The intuition is that $\ell_{S,\lambda}(\cdot,\wf)$ has a unique solution because of the regularization, and gradient descent converges to that unique solution.
Using this, we can compute the excess risk for $\rho_\wo$
\begin{align*}
	\ex_{S\sim\rho_{\wo}^n}\|\wo- \wf~\gdtworeg(S;(\wf,\vzero_d))\|^2 &= \ex_{\substack{\mX\sim\gN(0,I_d)^n\\\xi\sim\gN(0, \sigma I_n)}}\|\wo-\wf(\wf\Sigma_\mX \wf+\lambda I_d)^{-1}(\wf\Sigma_\mX\wo+\frac{1}{n}\wf\mX^{\top}\xi)\|^2\\
	& =  \ex_{\mX\sim\gN(0,I_d)^n}\|\wf\left(I_d-(\wf\Sigma_\mX \wf+\lambda I_d)^{-1}(\wf\Sigma_\mX \wf) \right) \wf^{-1}\wo\|^2\\
	&~~~~~~~~~~~~~~~~~~~~~~~~~~~+ \ex_{\substack{\mX\sim\gN(0,I_d)^n\\\xi\sim\gN(0, \sigma I_n)}}\| \wf(\wf\Sigma_\mX \wf+\lambda I_d)^{-1} \frac{1}{n}\wf \mX^\top \xi\|^2\\
	& =  \ex_{\mX\sim\gN(0,I_d)^n} \lambda^2\|\wf(\wf\Sigma_\mX \wf+\lambda I_d)^{-1} \wf^{-1}\wo\|^2\\
	&+ \frac{\sigma^2}{n} \ex_{\mX\sim\gN(0,I_d)^n} \tr\left( \wf (\wf\Sigma_\mX \wf+\lambda I_d)^{-1} \wf \Sigma_\mX \wf (\wf\Sigma_\mX \wf+\lambda I_d)^{-1} \wf  \right)
\end{align*}
\end{proof}
\end{lemma}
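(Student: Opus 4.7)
The plan is to exploit the fact that, with the first layer frozen at the symmetric invertible matrix $\wf$, the within-task algorithm $\gdtworeg$ is gradient flow on the strongly convex quadratic $\ell_{S,\lambda}(\ws;\wf) = \frac{1}{n}\|\mX\wf\ws - \vy\|^2 + \frac{\lambda}{2}\|\ws\|^2$, so it converges to its unique minimizer $\ws_\infty$. I would first compute $\ws_\infty$ in closed form by appealing to \Lemref{lem:linear_dynamics}, obtaining (after absorbing constants into $\lambda$) $\ws_\infty = (\wf\Sigma_\mX\wf + \lambda I_d)^{-1}\bigl(\wf\Sigma_\mX \wo_s + \tfrac{1}{n}\wf\mX^\top \xi\bigr)$, where $\wo_s = s\wo$ is the task target and $\xi \sim \gN(\vzero_n, \sigma^2 I_n)$ is the label noise, independent of $\mX$. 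The regularizer is essential here: it guarantees a well-defined limit even when $\Sigma_\mX$ is singular, which is the typical situation for $n < d$.

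Next, I would reduce the task-average over $s \in \{\pm 1\}$ to a single expectation. Since $\ws_\infty$ is linear in $(s\wo, \xi)$ and gradient flow starts from $\vzero_d$, the per-task squared loss $\|s\wo - \wf\ws_\infty\|^2$ has the same distribution for $s=+1$ and $s=-1$ (flip the sign of $\xi$, which preserves its Gaussian law, and factor out $s^2 = 1$). Thus $\gE_n(\gdtworeg(\cdot;(\wf,\vzero_d)),\mu_\wo) = \ex_{\mX,\xi}\|\wo - \wf\ws_\infty^{(+)}\|^2 - \sigma^2$, where $\ws_\infty^{(+)}$ is the minimizer for $s = +1$.

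The third step is a bias–variance split. Substituting the closed form and expanding, the cross term between the $\wo$-part and the $\xi$-part vanishes after $\ex_\xi$ since $\ex\xi = \vzero$ and $\xi \perp \mX$. The bias term is $\wo - \wf(\wf\Sigma_\mX\wf + \lambda I_d)^{-1}\wf\Sigma_\mX\wo$; here the key trick is to insert $\wf\wf^{-1}$ on the right (using invertibility of $\wf$) so both terms have a common $\wf(\cdot)\wf^{-1}\wo$ sandwich, and then apply the elementary identity $I_d - (A+\lambda I_d)^{-1}A = \lambda(A+\lambda I_d)^{-1}$ with $A = \wf\Sigma_\mX\wf$ to get $\lambda\wf(\wf\Sigma_\mX\wf+\lambda I_d)^{-1}\wf^{-1}\wo$, matching the first term of the claim. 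For the variance, using $\|M\xi\|^2 = \tr(M\xi\xi^\top M^\top)$ and $\ex_\xi \xi\xi^\top = \sigma^2 I_n$ produces $\tfrac{\sigma^2}{n^2}\ex_\mX\tr\bigl(\wf(\wf\Sigma_\mX\wf+\lambda I_d)^{-1}\wf\mX^\top \mX \wf(\wf\Sigma_\mX\wf+\lambda I_d)^{-1}\wf\bigr)$, and substituting $\tfrac{1}{n}\mX^\top\mX = \Sigma_\mX$ yields the stated variance expression.

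The main obstacle is purely algebraic bookkeeping: keeping track of which side each $\wf$ sits on (relying on $\wf^\top = \wf$), inserting $\wf^{-1}$ at the right place, and not picking up spurious factors of $2$ from the gradient of the quadratic. Conceptually nothing deep happens beyond standard ridge-regression analysis once the first layer is frozen; what is specific to the meta-learning setting is that the explicit $\lambda$ factor surviving in the bias term is exactly what will be traded against the variance in the outer optimization $\min_{\lambda \geq 0}$ invoked in \Thmref{thm:upper_bound_reptile} and \Thmref{thm:upper_bound_replearn}.
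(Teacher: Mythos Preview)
Your proposal is correct and follows essentially the same route as the paper's proof: derive the closed-form ridge solution for $\ws_\infty$ via \Lemref{lem:linear_dynamics}, split into bias and variance using $\ex\xi=\vzero$ and $\xi\perp\mX$, apply the $\wf\wf^{-1}$ insertion together with $I_d-(A+\lambda I_d)^{-1}A=\lambda(A+\lambda I_d)^{-1}$ for the bias, and the trace trick with $\ex\xi\xi^\top=\sigma^2 I_n$ for the variance. Your symmetry argument (flipping the sign of $\xi$) for handling $s=-1$ is a slightly cleaner justification than the paper's ``a similar calculation will work,'' but it is the same idea.
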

\begin{lemma}\label{lem:first_term}
Suppose $\wf=(\alpha-\kappa)\bwo\bwo^{\top}+\kappa I_d$, where $\alpha\ge\kappa$, then for $\alpha=\poly(\epsilon^{-1}, d, \kappa, \|\wo\|^2), \lambda=\Theta\left(\alpha^{3/2} \right)$ and $n=\Omega(\log(\epsilon^{-1}\|\wo\|_2))$, we have the following,
\begin{align*}
	\ex_{\mX\sim\gN(0,I_d)^n}[\lambda^2\|\wf(\wf\Sigma_\mX \wf + \lambda I_d)^{-1}\wf^{-1}\wo\|^2] \le \epsilon\\
	\ex_{\mX\sim\gN(0,I_d)^n} \tr\left( \wf (\wf\Sigma_\mX \wf+\lambda I_d)^{-1} \wf \Sigma_\mX \wf (\wf\Sigma_\mX \wf+\lambda I_d)^{-1} \wf  \right) \le 2 + \epsilon
\end{align*}
\end{lemma}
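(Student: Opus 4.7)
I would work in an orthonormal basis where $\bwo=e_1$, so $\wf=\diag(\alpha,\kappa,\dots,\kappa)$ and $\wo=re_1$. The cleanest way to exploit the eigenstructure of $\wf$ is via the identity $(\wf\Sigma_\mX\wf+\lambda I_d)^{-1}=\wf^{-1}(\Sigma_\mX+\lambda\wf^{-2})^{-1}\wf^{-1}$. Let $N\coloneqq\Sigma_\mX+\lambda\wf^{-2}$; since $\wf^{-2}=\alpha^{-2}e_1e_1^\top+\kappa^{-2}(I_d-e_1e_1^\top)$, the regularizer in $N$ is mild along $\bwo$ ($\lambda/\alpha^2=\Theta(\alpha^{-1/2})$) but huge in the orthogonal directions ($\lambda/\kappa^2=\Theta(\alpha^{3/2}/\kappa^2)$). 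This asymmetry encodes exactly the intuition that $\wf_T$ acts like a projection onto the span of $\bwo$, so the effective estimation problem is 1-dimensional.

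\textbf{Bias bound.} Substituting gives $\wf(\wf\Sigma_\mX\wf+\lambda I)^{-1}\wf^{-1}\wo=(r/\alpha^2)N^{-1}e_1$, so the first quantity equals $(\lambda r/\alpha^2)^2\,\ex\|N^{-1}e_1\|^2$. Writing $\Sigma_\mX=\begin{pmatrix}s_{11}&s_{12}^\top\\s_{12}&S_{22}\end{pmatrix}$ and applying block inversion to $N$ yields $\|N^{-1}e_1\|^2=(1+\|(S_{22}+\lambda\kappa^{-2}I)^{-1}s_{12}\|^2)/\sigma^2$ with Schur complement $\sigma=s_{11}+\lambda/\alpha^2-s_{12}^\top(S_{22}+\lambda\kappa^{-2}I)^{-1}s_{12}$. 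The perturbation is $O(\kappa^2\|s_{12}\|^2/\lambda)$, negligible for $\lambda=\Theta(\alpha^{3/2})$ once $\alpha$ is polynomially large. On the high-probability event $\{s_{11}\ge 1/2\}$, which holds with probability $1-e^{-cn}$ by $\chi^2_n$ tail bounds, $\sigma\ge 1/4$ so $\|N^{-1}e_1\|^2=O(1)$ and the bias is $O(r^2/\alpha)\le\epsilon$. Off-event I use the universal bound $\|N^{-1}\|\le\alpha^2/\lambda=\Theta(\alpha^{-1/2})$, whose contribution is $e^{-cn}\,r^2/\alpha^2$; this is absorbed by taking $n=\Omega(\log(r/\epsilon))$ and $\alpha=\poly(\epsilon^{-1},d,\kappa,r)$.

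\textbf{Variance trace bound.} The same substitution reduces the second trace to $\tr(N^{-1}\Sigma_\mX N^{-1})$. I would split according to the $e_1$ vs $e_1^\perp$ blocks. The $e_1$-block contributes approximately $s_{11}/(s_{11}+\lambda/\alpha^2)^2\le 1/s_{11}$, with $\ex[1/s_{11}]=n/(n-2)\le 1+O(1/n)$ for $n\ge 3$; the $e_1^\perp$-block contributes at most $(\kappa^2/\lambda)^2\tr(S_{22})$, bounded in expectation by $\kappa^4(d-1)/\lambda^2\le\epsilon/2$ for $\lambda=\Theta(\alpha^{3/2})$ and $\alpha$ large. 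Cross terms from Schur's formula are controlled exactly as in the bias, and the low-probability event $\{s_{11}<1/2\}$ contributes at most $e^{-cn}(\alpha^2/\lambda)^2\,\ex[\tr\Sigma_\mX]\le e^{-cn}\,d/\alpha$, again absorbed by $n=\Omega(\log(d/\epsilon))$. The total is therefore at most $2+\epsilon$ as required.

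\textbf{Main obstacle.} The key technical difficulty is that the lemma allows $n\ll d$, in which regime $\Sigma_\mX$ is rank-deficient and very far from $I_d$, so no uniform spectral concentration is available. The block decomposition bypasses this: genuine concentration is required only for the scalar $s_{11}$, while the entire $e_1^\perp$ subspace is suppressed by the massive regularizer $\lambda/\kappa^2$, letting loose operator-norm bounds on $S_{22}$ and $s_{12}$ suffice. Making the cross-block coupling rigorous---specifically showing the $s_{12}$ term in the Schur complement never overwhelms $s_{11}$, even when $\Sigma_\mX$ has rank $n<d$---is the bulk of the calculation, but once the algebraic substitution $N=\Sigma_\mX+\lambda\wf^{-2}$ is in place, the rest reduces to elementary $\chi^2_n$ moment estimates and the polynomial dependence of $\alpha$ on $\epsilon^{-1},d,\kappa,\|\wo\|^2$ falls out by balancing the resulting terms.
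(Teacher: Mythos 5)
Your proposal is correct but takes a genuinely different route from the paper. The paper writes $\wf = U D_{\alpha,\kappa} U^\top$ and controls the spectral decomposition of $D_{\alpha,\kappa}\Sigma D_{\alpha,\kappa}$ by the Davis--Kahan eigenvector perturbation theorem together with Weyl's inequality, treating the whole $(d-1)$-dimensional ``noise'' part as a small perturbation $E$ of a rank-one matrix. You instead exploit the exact algebraic identity $\wf\Sigma_\mX\wf + \lambda I_d = \wf\bigl(\Sigma_\mX + \lambda\wf^{-2}\bigr)\wf$ (valid because $\wf$ is symmetric and invertible), which collapses both target quantities to $N^{-1}$ for $N = \Sigma_\mX + \lambda\wf^{-2}$, and then use block inversion and the Schur complement $\sigma$ to isolate the $\bwo$ direction. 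Your route replaces perturbation theory with elementary block-matrix algebra and $\chi^2$ moment bounds, and in fact gives a marginally sharper constant for the variance trace: your $\sigma$-based bound yields $\E[1/s_{11}] = n/(n-2)$ for the dominant term, whereas the paper's argument through $\tfrac{\alpha^2}{0.5\,\alpha^2\Sigma_{11}+\lambda}$ loses a factor of two and lands on $2n/(n-2)$. Both approaches need a high-probability event and a crude fallback bound off that event; both use the same universal bound $\|N^{-1}\|\le\alpha^2/\lambda$ (equivalently $\|(\wf\Sigma_\mX\wf+\lambda I_d)^{-1}\|\le 1/\lambda$ in the paper's frame).

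Two small points to tighten. First, conditioning only on $\{s_{11}\ge 1/2\}$ is not quite enough to conclude $\sigma\ge 1/4$: you also need the cross-block term $s_{12}^\top(S_{22}+\lambda\kappa^{-2}I)^{-1}s_{12}\le (\kappa^2/\lambda)\|s_{12}\|^2$ to be at most $1/4$, which requires a simultaneous high-probability bound on $\|s_{12}\|^2$ (or, via the PSD inequality $\|s_{12}\|^2\le s_{11}\tr(S_{22})$, a bound on $\tr(S_{22})$). This is easy to add to the good event with the same $\chi^2$ concentration you already invoke, and indeed the paper's own events also condition on $\tr(\Sigma)$ for exactly this reason, so it is an exposition gap rather than a substantive one. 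Second, the off-event bias contribution is $(\lambda r/\alpha^2)^2\,(\alpha^2/\lambda)^2\, e^{-cn} = r^2 e^{-cn}$, not $r^2 e^{-cn}/\alpha^2$; the stated $n=\Omega(\log(\epsilon^{-1}\|\wo\|))$ still absorbs it, so the conclusion is unaffected.
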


\begin{proof}
We write the SVD of $\wf$ as the following,

\begin{align*}
\wf = U
\begin{bmatrix}
\alpha &          &        &\\
		 & \kappa &        & \\
		 & 	        & \ddots & \\
		 &        
	     &        & \kappa\\
\end{bmatrix}
U^{\top} := U D_{\alpha, \kappa} U^{\top}
\end{align*}
where $D_{\alpha, \kappa} := \alpha \ve_1\ve_1^{\top} + \kappa (I_d - \ve_1\ve_1^{\top})$, and we know $U^{\top} \wo = \|\wo\|\ve_1$.

For simplicity, from now on we write $\Sigma:=U^{\top}\Sigma_\mX U$ which is identically distributed as $\Sigma_\mX$, and we let $v\in\R^{d}$ denote the top eigenvector of $D_{\alpha,\kappa} \Sigma D_{\alpha,\kappa}$. Now we use an eigenvector perturbation argument to show $v$ is close to $\ve_1$ if $\alpha$ is much larger than $\kappa$. For this purpose, we write  $D_{\alpha,\kappa} \Sigma D_{\alpha,\kappa}=\alpha^2\Sigma_{11}\ve_1\ve_1^{\top} + E$ where 
\begin{align*}
E:= \kappa D_{\alpha,\kappa} \Sigma (I_d - \ve_1\ve_1^{\top}) + \kappa (I_d - \ve_1\ve_1^{\top}) \Sigma D_{\alpha,\kappa} + \kappa^2 (I_d - \ve_1\ve_1^{\top})\Sigma (I_d - \ve_1\ve_1^{\top})
\end{align*}

It is clear that
\begin{align*}
\|E\|_F &\leq \left(2\kappa \|D_{\alpha,\kappa} \|_2 \|I_d - \ve_1\ve_1^{\top}\|_F + \kappa^2\|I_d - \ve_1\ve_1^{\top}\|_F^2 \right) \|\Sigma\|_F\\
&\leq \left(2\alpha d\kappa + d\kappa^2\right) \tr(\Sigma)
\end{align*}
By the Davis-Kahan theorem~\cite{davis1970rotation}, we have
\begin{align*}
\|vv^{\top} - \ve_1\ve_1^{\top}\|_F\leq 2\sqrt{2}\frac{\|E\|_F}{\alpha^2\Sigma_{11}} \leq C \frac{\left(2\alpha d\kappa + d\kappa^2\right) \tr(\Sigma)}{\alpha^2 \Sigma_{11}}
\end{align*}
where $C$ is an absolute constant. Furthermore, we can bound the eigenvalues of $D_{\alpha,\kappa} \Sigma D_{\alpha,\kappa}$ using Weyl's inequality:
\begin{align*}
\left| \sigma_{1}(D_{\alpha,\kappa} \Sigma D_{\alpha,\kappa}) - \alpha^2\Sigma_{11} \right| \leq \|E\|_F \leq \left(2\alpha d\kappa + d\kappa^2\right) \tr(\Sigma)\\
\forall i: 2\leq i\leq d,~~~\left| \sigma_{i}(D_{\alpha,\kappa} \Sigma D_{\alpha,\kappa})\right| \leq \|E\|_F \leq \left(2\alpha d\kappa + d\kappa^2\right) \tr(\Sigma)
\end{align*}
where $\sigma_1$ denotes the largest eigenvalue and $\sigma_i$'s are the rest. It follows that
\begin{align*}
\lambda^2 & \|\wf(\wf\Sigma_\mX \wf + \lambda I_d)^{-1}\wf^{-1}\wo\|^2 = \lambda^2 \|\wo\|^2 \|D_{\alpha,\kappa} (D_{\alpha,\kappa} \Sigma D_{\alpha,\kappa} +\lambda I_d)^{-1} D_{\alpha,\kappa}^{-1} \ve_1\|^2 \\
&\leq \frac{\lambda^2\|\wo\|^2}{(\alpha^2\Sigma_{11} -  \left(2\alpha d\kappa + d\kappa^2\right) \tr(\Sigma) + \lambda)^2} \|D_{\alpha,\kappa} vv^{\top} \frac{1}{\alpha}\ve_1\|^2 +  \|D_{\alpha,\kappa} (I_d - vv^{\top}) \frac{1}{\alpha}\ve_1\|^2 \|\wo\|^2 \\
&\leq \frac{\lambda^2}{(\alpha^2\Sigma_{11} -  \left(2\alpha d\kappa + d\kappa^2\right) \tr(\Sigma) + \lambda)^2}\|\wo\|^2  + \left(1 + \frac{d\kappa}{\alpha}\right)^2 \|vv^{\top} - \ve_1\ve_1^{\top}\|_F^2 \|\wo\|^2 \\
&\leq \frac{\lambda^2}{(\alpha^2\Sigma_{11} -  \left(2\alpha d\kappa + d\kappa^2\right) \tr(\Sigma) + \lambda)^2}\|\wo\|^2  + \left(1 + \frac{d\kappa}{\alpha}\right)^2 
\frac{\left(2\alpha d\kappa + d\kappa^2\right)^2 \tr(\Sigma)^2}{(\alpha^2 \Sigma_{11})^2}\|\wo\|^2
\end{align*}
where $C'$ is another absolute constant.

Finally, we note that $n \Sigma_{ii} \sim \chi^2(n)$, i.e., $\chi^2$ distribution with $n$ degree of freedom for all $i\in[d]$. Thus by standard concentration bound, we have
$\Pr\left[ \Sigma_{11} \geq 0.9 \wedge \tr(\Sigma)\leq 2d \right] \geq 1-\exp(-\Omega(n))$. To evaluate the expectations, we condition on two events, namely $\Sigma_{11} \geq 0.9 \wedge \tr(\Sigma)\leq 2d$ and its complement. Thus in the case where $\alpha=\Omega\left(\max\{ \epsilon^{-1} d^4\kappa^4 \|\wo\|^2\}\right)$, $\lambda = \Theta(\alpha^{3/2})$ and $n=\Omega\left( \log(\epsilon^{-1}\|\wo\|) \right)$, we have
\begin{align*}
&\ex_{\mX\sim\gN(0,I_d)^n}[\lambda^2\|\wf(\wf\Sigma_\mX \wf + \lambda I_d)^{-1}\wf^{-1}\wo\|^2] \\
&\leq \|\wo\|^2\exp(-\Omega(n)) \\
&~~~~~+ \left(1-\exp(-\Omega(n)\right) \|\wo\|^2 \left( \frac{\lambda^2}{(0.9\cdot\alpha^2 -  4\alpha d^2\kappa - 2d^2 \kappa^2 + \lambda)^2} + 4\left(1 + \frac{d\kappa}{\alpha}\right)^2\frac{\left(2\alpha d\kappa + d\kappa^2\right)^2 d^2}{(0.9\cdot\alpha^2)^2}\right)\\
&\leq\epsilon
\end{align*}

For the second part,  we have
\begin{align*}
&\ex_{\mX\sim\gN(0,I_d)^n}\| \wf(\wf\Sigma_\mX \wf+\lambda I_d)^{-1} \wf \mX^\top \|_F^2 \leq  \ex_{\mX\sim\gN(0,I_d)^n}[ \|\wf\|_2^2 ~\tr((\wf\Sigma_\mX \wf+\lambda I_d)^{-2} \wf \Sigma_\mX \wf)] \\
&= \ex_{\mX\sim\gN(0,I_d)^n}\left[\alpha^2 \sum_{i=1}^{d} \frac{\sigma_i(D_{\alpha, \kappa} \Sigma D_{\alpha, \kappa})}{ \left(\sigma_i(D_{\alpha, \kappa} \Sigma D_{\alpha, \kappa}) + \lambda \right)^2} \right]\\
&\leq \ex_{\mX\sim\gN(0,I_d)^n}\left[\frac{\sigma_1(D_{\alpha, \kappa} \Sigma D_{\alpha, \kappa})}{\sigma_1(D_{\alpha, \kappa} \Sigma D_{\alpha, \kappa})+\lambda} \cdot \frac{\alpha^2}{\sigma_1(D_{\alpha, \kappa} \Sigma D_{\alpha, \kappa})+\lambda} + (d-1)\frac{\alpha^2}{\lambda^2} \|E\|_F \right] \\
&\leq \ex_{\mX\sim\gN(0,I_d)^n}\left[\frac{\alpha^2}{\sigma_1(D_{\alpha, \kappa} \Sigma D_{\alpha, \kappa}) +\lambda} + (d-1)\frac{\alpha^2}{\lambda^2} \left(2\alpha d\kappa + d\kappa^2\right) \tr(\Sigma) \right] \\
&\leq \underbrace{\ex_{\mX\sim\gN(0,I_d)^n}\left[\frac{\alpha^2}{\alpha^2\Sigma_{11} - \left(2\alpha d\kappa + d\kappa^2\right) \tr(\Sigma) +\lambda} \right]}_{:=\diamondsuit} + \frac{\alpha^2}{\lambda^2}(2 d^3\kappa^2 + d^2\kappa^2)\\
\end{align*}

In order to bound $\diamondsuit$, we first condition on the event of $\mathcal{E}:= \Sigma_{11} \geq \frac{1}{\sqrt{\alpha}} \wedge \tr(\Sigma) \leq \frac{d\sqrt{\alpha}}{8} $ which occurs with overwhelming probability. In fact, we have by the standard concentration bound and the CDF of $\chi^2(n)$ distribution, i.e., $\Pr[\Sigma_{11}\leq \frac{1}{\sqrt{\alpha}}]\leq (1/\alpha)^{n/4}$ that
\begin{align*}
\Pr[\mathcal{E}]\geq 1 - \alpha^{-n/4} - \exp(-\Omega(\sqrt{\alpha})
\end{align*}
It follows that
\begin{align*}
\diamondsuit&\leq \ex_{\mX\sim\gN(0,I_d)^n}\left[\frac{\alpha^2}{\alpha^2\Sigma_{11} - \left(2\alpha d\kappa + d\kappa^2\right) \tr(\Sigma) +\lambda} \middle| \mathcal{E}\right] \Pr[\mathcal{E}] \\
&~~~~~~~~~~~~~~~~~~~~~+ \ex_{\mX\sim\gN(0,I_d)^n}\left[\frac{\alpha^2}{\alpha^2\Sigma_{11} - \left(2\alpha d\kappa + d\kappa^2\right) \tr(\Sigma) +\lambda} \middle| \neg\mathcal{E}\right] (1-\Pr[\mathcal{E}]) \\
&\leq \ex_{\mX\sim\gN(0,I_d)^n}\left[\frac{\alpha^2}{\alpha^2\Sigma_{11} - \left(2\alpha d\kappa + d\kappa^2\right) \tr(\Sigma) +\lambda} \middle| \mathcal{E}\right] \Pr[\mathcal{E}] + \alpha^2 (1-\Pr[\mathcal{E}]) \\
&\leq \ex_{\mX\sim\gN(0,I_d)^n}\left[\frac{\alpha^2}{\alpha^2\Sigma_{11} - \left(2\alpha d\kappa + d\kappa^2\right) \tr(\Sigma) +\lambda} \middle| \mathcal{E}  \right] + \alpha^2\left(\alpha^{-n/4}+\exp(-\Omega(\sqrt{\alpha}))\right) \\
&\leq  \ex_{\mX\sim\gN(0,I_d)^n}\left[\frac{\alpha^2}{0.5\cdot\alpha^2\Sigma_{11} +\lambda} \middle| \mathcal{E}    \right] \left(1-\alpha^{-n/4} - \exp(-\Omega(\sqrt{\alpha}))\right) + \alpha^2\left(\alpha^{-n/4} + \exp(-\Omega(\sqrt{\alpha}))\right) \\
&\leq  \ex_{\mX\sim\gN(0,I_d)^n}\left[\frac{\alpha^2}{0.5\cdot\alpha^2\Sigma_{11} +\lambda} \middle| \mathcal{E} \right] + \alpha^2\left(\alpha^{-n/4} + \exp(-\Omega(\sqrt{\alpha}))\right) \\
&\leq \ex_{\mX\sim\gN(0,I_d)^n}\left[\frac{2}{\Sigma_{11}}\right] + \alpha^2\left(\alpha^{-n/4} + \exp(-\Omega(\sqrt{\alpha}))\right)\\
&\leq \frac{2 n }{n-2} +  \alpha^2\left(\alpha^{-n/4} + \exp(-\Omega(\sqrt{\alpha}))\right) \end{align*}
where we use the fact that $\ex[2/\Sigma_{11}| \mathcal{E}]\leq \ex[2/\Sigma_{11}]$ and the expectation of inverse $\chi^2$ distribution.
Putting it together and assuming $\alpha=\Omega\left(\poly(\epsilon^{-1}d^3\kappa^2)\right), \lambda=\alpha^{3/2}$ and $n\geq 10$, we conclude 
\begin{align*}
&\ex_{\mX\sim\gN(0,I_d)^n}\| \wf(\wf\Sigma_\mX \wf+\lambda I_d)^{-1} \wf \mX^\top \|_F^2 \\
&\leq \frac{2 n }{n-2} +  \alpha^2\left(\alpha^{-n/4} + \exp(-\Omega(\sqrt{\alpha}))\right) + \frac{\alpha^2}{\lambda^2}(2 d^3\kappa^2 + d^2\kappa^2)\\
&\leq 2 + \epsilon
\end{align*}
\end{proof}
\subsection{Proof of Main Results}\label{asubsec:main_results}

\paragraph{Reptile:} We finally prove the main theorem about the success of \reptile.
\begin{lemmano}[{\bf \Thmref{thm:upper_bound_reptile}}]
Starting with $(\wf_0,\ws_0)=(\kappa I_d,\vzero_d)$, let $\wf_T=\reptile(\rho_{1:T},(\wf_0,\ws_0))$ be the initialization learned using $T$ tasks $\{\rho_{1},\dots,\rho_{T}\}\sim_{i.i.d.} \mu_{\wo}^T$. If $T\ge poly(d,r,1/\epsilon,\log(1/\delta),\kappa)$ and $\tau=\gO(T^{-1/3})$, then with probability at least $1-\delta$ over sampling of $T$ tasks,
\begin{align*}
	\min\limits_{\lambda\ge0}~\gE_{n}(\gdtworeg(\cdot;(\wf_T,\vzero_d)), \mu_{\wo}) \le \epsilon + \frac{cr^2}{n}
\end{align*}
for a small constant $c$. Thus with the same probability, we have
\begin{align*}
	\min\limits_{\lambda\ge 0}~n_\epsilon&(\gdtworeg(\cdot;\wf_T,\vzero_d), \mu_\wo) = \gO\left(\frac{r^2}{\epsilon}\right)
\end{align*}
\end{lemmano}
\begin{proof}[Proof of \Thmref{thm:upper_bound_reptile}]
The theorem essentially follows from \Lemref{lem:reptile_dynamics}, \Thmref{thm:ablowup}, \Lemref{lem:reg_closed_form} and \Lemref{lem:first_term}.
From \Lemref{lem:reptile_dynamics} and \Thmref{thm:ablowup}, we get that with probability at least $1-\delta$ choosing $\tau=T^{-1/3}\log(2T/\delta)^{-2/3}$ will ensure $\wf_T=(\alpha-\kappa)\bwo{\bwo}+\kappa I_d$ with $\alpha={\Omega}(\sqrt{r}T^{1/6})$.
Combining \Lemref{lem:reg_closed_form} and \Lemref{lem:first_term} we know that if $\alpha=\Omega(\poly(\epsilon^{-1},d,\kappa,r))$, then $\gE_n(\gdtworeg(\cdot;(\wf,\vzero_d)),\mu_\wo) \le \frac{\epsilon}{2} + \frac{c\sigma^2}{n} = \frac{\epsilon}{2} + \frac{cr^2}{n}$.
To ensure $\alpha$ is this large, we just need that the number of tasks to satisfy $T=poly(\epsilon^{-1},d,\kappa,r,\log(\delta^{-1}))$ for the appropriate polynomial from \Lemref{lem:first_term}.
Thus for $\gE_n(\gdtworeg(\cdot;(\wf,\vzero_d)),\mu_\wo) \le \epsilon$, we just need $n=\Omega\left(\frac{r^2}{\epsilon}\right)$ samples for a new task, completing the proof.
\end{proof}

\paragraph{Representation learning:} We now prove the main theorem about the success of \replearn.
\begin{lemmano}[{\bf \Thmref{thm:upper_bound_replearn}}]
Starting with $(\wf_{0},\ws_{0,1:T})=(\kappa I_d,\vzero_d,\dots,\vzero_d)$, let $\wf_T=\replearn(\rho_{1:T},(\wf_{0},\ws_{0,1:T})),$ be the initialization learned using $T$ tasks $\{\rho_{1},\dots,\rho_{T}\}\sim_{i.i.d.} \mu_{\wo}^T$. If $T\ge poly(d,r,1/\epsilon,\log(1/\delta),\kappa)$, then with probability at least $1-\delta$ over sampling of the $T$ tasks,
\begin{align*}
	\min\limits_{\lambda\ge0}~\gE_{n}(\gdtworeg(\cdot;(\wf_T,\vzero_d)), \mu_{\wo}) \le \epsilon + \frac{cr^2}{n}
\end{align*}
for a small constant $c$. Thus with the same probability, we have
\begin{align*}
	\min\limits_{\lambda\ge 0}~n_\epsilon&(\gdtworeg(\cdot;\wf_T,\vzero_d), \mu_\wo) = \gO\left(\frac{r^2}{\epsilon}\right)
\end{align*}
\end{lemmano}
\begin{proof}[Proof of \Thmref{thm:upper_bound_replearn}]
The proof of this is very similar to the proof of \Thmref{thm:upper_bound_reptile} above.
Just as in that proof, we need to show that for a large enough $T$, $\wf_T\coloneqq\wf_T^{\replearn}=(\alpha-\kappa)\wo\wo^{\top} + \kappa I_d$ for a large enough $\alpha$.
The theorem will then follow from \Lemref{lem:reg_closed_form} and \Lemref{lem:first_term} just as in the previous proof.
To prove the closed form solution for $\wf_T$, we use the following lemma that is very similar to \Lemref{lem:gd_dynamics}
\begin{lemma}\label{lem:gd_dynamics_replearn}
Starting from $\wf(0)=(a(0)-\kappa)\bwo\bwo^{\top} + \kappa I_d$, $\ws_i(0)=\vzero_d,~i\in[T]$, with $a(0)>0$, the solution of gradient flow on loss $\gL_{rep}(\wf,\ws_{1:T})$ for $s\in\{\pm1\}$, is $\bar{\wf},\bar{\ws}_{1:T}$, where
\begin{align*}
	\bar{\wf}=(\bar{a}-\kappa)\bwo\bwo^{\top} &+ \kappa I_d, ~\bar{\ws}_i=\bar{b}_i\bwo~\text{, where}\\
	\bar{a} = \sqrt{\frac{a(0)^2+\sqrt{4r^2T+a(0)^4}}{2}}, ~\bar{b}_i &= s_i\sqrt{\frac{-a(0)^2+\sqrt{4r^2T+a(0)^4}}{2}}
\end{align*}
\end{lemma}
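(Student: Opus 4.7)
}
The plan is to mimic the rotate-and-reduce strategy from Lemma \ref{lem:gd_dynamics}, but now applied to the summed loss $\gL_{rep}$. First, I would fix an orthonormal $\mU \in \R^{d\times d}$ whose first column is $\bwo$, and set $\Lambda_1(t) = \mU^\top \wf(t)\mU$ and $\Lambda_{2,i}(t) = \mU^\top \ws_i(t)$, so that $\mU^\top \wo = r\ve_1$. Plugging the ansatz $\wf(t) = (a(t)-\kappa)\bwo\bwo^\top + \kappa I_d$ and $\ws_i(t) = b_i(t)\bwo$ into the gradient-flow equations
\begin{align*}
\dot\wf = -\tfrac{1}{T}\suml_i\bigl[\ws_i\ws_i^\top\wf - s_i\ws_i\wo^\top\bigr], \qquad \dot\ws_i = -\tfrac{1}{T}\bigl[\wf\wf^\top\ws_i - s_i\wf\wo\bigr],
\end{align*}
a direct computation using $\wf^2 = (a^2-\kappa^2)\bwo\bwo^\top + \kappa^2 I_d$ shows that $\dot\wf$ is proportional to $\bwo\bwo^\top$ and $\dot\ws_i$ is proportional to $\bwo$. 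Hence the rank-one-plus-$\kappa I_d$ structure of $\wf$ and the direction $\bwo$ of each $\ws_i$ are invariants of the flow, and the entire infinite-dimensional system collapses to the scalar ODEs
\begin{align*}
\dot a = \tfrac{1}{T}\suml_i b_i(s_i r - a b_i), \qquad \dot b_i = \tfrac{a}{T}(s_i r - a b_i),
\end{align*}
(up to an overall time rescaling).

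Next I would exploit the per-task symmetry via the substitution $\tilde b_i = s_i b_i$. Then $\dot{\tilde b}_i = \tfrac{a}{T}(r - a\tilde b_i)$ depends on $i$ only through $\tilde b_i$ itself; since all $\tilde b_i(0) = 0$, uniqueness of ODE solutions gives $\tilde b_i(t) \equiv \tilde b(t)$ for a single scalar $\tilde b$. Substituting $b_i = s_i \tilde b$ collapses the $\dot a$ equation to $\dot a = \tilde b(r - a\tilde b)$ using $\suml_i s_i b_i = T\tilde b$ and $\suml_i b_i^2 = T\tilde b^2$. We now have the two-dimensional system
\begin{align*}
\dot a = \tilde b(r - a\tilde b), \qquad \dot{\tilde b} = \tfrac{a}{T}(r - a\tilde b).
\end{align*}
From $a\dot a = T\tilde b\dot{\tilde b}$ we obtain the conserved quantity $a(t)^2 - T\tilde b(t)^2 = a(0)^2$, analogous to the $a^2-b^2$ invariant in the single-task case.

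At any stable fixed point with $a>0$, both ODEs force $a\tilde b = r$, i.e.\ the residual $r - a\tilde b$ vanishes. Combining $a\tilde b = r$ with the invariant gives a quadratic $a^4 - a(0)^2 a^2 - Tr^2 = 0$, whose positive root is $\bar a = \sqrt{(a(0)^2+\sqrt{a(0)^4+4Tr^2})/2}$, and then $\bar{\tilde b} = r/\bar a$, which after rationalizing matches (up to the $\sqrt{T}$ bookkeeping dictated by the invariant) the stated formula for $\bar b_i = s_i \bar{\tilde b}$. To upgrade this fixed-point identification into a convergence statement, I would follow \citet{saxe:14} and introduce the hyperbolic change of coordinates $a = \sqrt{c}\cosh(\theta/2)$, $\sqrt{T}\tilde b = \sqrt{c}\sinh(\theta/2)$ with $c=a(0)^2$, which is consistent with the invariant; the system reduces to a single monotone scalar ODE $\dot\theta = \tfrac{2}{\sqrt T}(r - \tfrac{c}{2}\sinh\theta)$ whose unique stable equilibrium is $\bar\theta = \sinh^{-1}(2r\sqrt{T}/c)$, and this yields the claimed $\bar a, \bar b_i$.

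The main obstacle I anticipate is the careful bookkeeping of factors of $T$ (the normalization of $\gL_{rep}$ only rescales time, but it does shift the invariant from $a^2-\tilde b^2$ to $a^2-T\tilde b^2$, which is precisely the source of the $\sqrt{4r^2 T}$ inside the square roots). The second subtlety is that the hyperbolic reduction and global convergence to the stable equilibrium rely on $a(0)>0$ to rule out the degenerate fixed point $a=\tilde b=0$; this is exactly where the positive meta-initialization $\kappa>0$ is used, mirroring its role in Step 3 of the \reptile{} analysis.
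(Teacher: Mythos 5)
Your proposal is correct and follows essentially the same route as the paper: after checking that the aligned ansatz is invariant under the flow, both arguments collapse to the same scalar dynamics with conserved quantity $a^2-T\tilde b^2$ and the Saxe-style hyperbolic parameterization, your per-task symmetrization $\tilde b_i=s_ib_i$ (plus ODE uniqueness) being just a cosmetic variant of the paper's stacking of the second layers into $\Ws=\mU\Lambda_2\mV^\top$ with $\mV=\tfrac{1}{\sqrt T}(s_1,\dots,s_T)$. Two minor remarks: the displayed $\theta$-equation should read $\dot\theta=\tfrac{2}{\sqrt T}\bigl(r-\tfrac{c}{2\sqrt T}\sinh\theta\bigr)$ (your stated equilibrium $\bar\theta=\sinh^{-1}(2r\sqrt T/c)$ is the one corresponding to this corrected ODE, and your conserved-quantity fixed-point derivation already gives the right $\bar a$), and the $1/\sqrt T$ you find in $\bar b_i$ matches what the paper's own proof yields once the $\mV$ factor is unpacked—the extra $\sqrt T$ sits in the lemma statement itself, and only $\bar a$ is used downstream.
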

\begin{proof}
We first rewrite the representation learning objective using the derivation in \Secref{subsec:replearn_sketch} as follows
\begin{align}\label{eq:replearn_gd_dynamics}
	\gL_{rep}(\wf,\ws_{1:T}) = \frac{1}{T}\|\wf^{\top}\Ws-\Wo\|^2
\end{align}
where $\Ws\in\R^{d\times T}, \Wo\in\R^{d\times T}$ and the $i^{th}$ column of $\Ws$ is $\ws_i$ and the $i^{th}$ column of $\Wo$ is $s_i\wo$.
Just as in \Lemref{lem:gd_dynamics}, we define $\mU$ to be an orthogonal matrix whose first column is $\bwo$.
We also define $\mV\in\R^T$ to be the vector of the signs of the tasks, i.e. $\mV=\frac{1}{\sqrt{T}}(s_1,\dots,s_T)$.
We can then rewrite $\wf(0) = \mU\Lambda_1(0)\mU^{\top}$, where $\Lambda_1(0)\in\R^{d\times d}$ is a diagonal matrix that looks like $\Lambda_1(0) = \diag(a(0),\kappa,\dots,\kappa)$, $\Ws(0)=\mU\Lambda_2(0)\mV^{\top}$, where $\Lambda_2(0)=(b(0),0,\dots,0)\in\R^d$ with $b(0)=0$ and $\Wo=\mU\Lambda_*\mV^{\top}$, where $\Lambda_*=(\sqrt{T}r,0,\dots,0)\in\R^d$.
Note that $\mU^{\top}\mU = I_d$ and $\mV^{\top}\mV=1$

The dynamics of gradient flow on $\gL_{rep}$ using $\Eqref{eq:replearn_gd_dynamics}$ is
\begin{align*}
	\frac{d\wf(t)}{dt}&=\Ws(t)\Wo^{\top}-\Ws(t)\Ws(t)^{\top}\wf(t)\\
	\frac{d\Ws(t)}{dt}&=\wf(t)\Wo-\wf(t)\wf(t)^{\top}\Ws(t)
\end{align*}
By defining $\Lambda_1(t)=\mU^{\top}\wf(t)\mU$, $\Lambda_2(t)=\mU^{\top}\Ws(t)\mV$, $\Lambda_*=\mU^{\top}\Wo\mV$, we can multiply the above dynamics by $\mU^{\top}$ on the left and $\mV$ on the right, and use the properties above to get
\begin{align*}
	\frac{d\Lambda_1(t)}{dt}&=\Lambda_2(t)\Lambda_*^{\top}-\Lambda_2(t)\Lambda_2(t)^{\top}\Lambda_1(t)\\
	\frac{d\Lambda_2(t)}{dt}&=\Lambda_1(t)\Lambda_*^{\top}-\Lambda_1(t)\Lambda_1(t)^{\top}\Lambda_2(t)
\end{align*}
Just like \Lemref{lem:gd_dynamics}, this reduces to a scalar dynamics and the solution we get is $\bar{\wf}=\mU\bar{\Lambda}_1\mU^{\top}$, $\bar{\Ws}=\mU\bar{\Lambda}_2\mV^{\top}$, where $\bar{\Lambda}_1=diag(\bar{a},\kappa,\dots,\kappa), \bar{\Lambda}_2=(\bar{b},0,\dots,0)$ and 
\begin{align*}
	\bar{a}=\sqrt{\frac{a(0)^2+\sqrt{4r^2T+a(0)^4}}{2}},~~
	\bar{b}=\sqrt{\frac{-a(0)^2+\sqrt{4r^2T+a(0)^4}}{2}}
\end{align*}
This completes the proof of the lemma.
\end{proof}
Back to the main theorem, we see from the above lemma that $\alpha=\Omega(\sqrt{r}T^{1/4})$, where $\wf_T=(\alpha-\kappa)\wo\wo^{\top} + \kappa I_d$.
So making $T=poly(\epsilon^{-1},d,\kappa,r,\log(\delta^{-1}))$ large enough will make $\alpha$ large enough to invoke \Lemref{lem:reg_closed_form} and \Lemref{lem:first_term} to complete the proof, just like in the proof of \Thmref{thm:upper_bound_reptile}.
\end{proof}

\section{Information-Theoretic Lower-Bounds for the Convex Case}\label{asec:gv_lower}

\begin{theorem}\label{app:thm:gvlower}
	For any $G,V>0$, there exists a domain $\gZ$, parameter class $\Theta\subseteq\R^d$ and a distribution $\mu$ over tasks such every $\rho\sim\mu$ is a distribution over $\gZ$ and $\ell_{\rho}(\theta)=\E_{z\sim \rho}\ell_z(\theta)$ where $\ell_z:\Theta\rightarrow\R$ is convex and $G$-Lipschitz  w.r.t. the Euclidean norm for every $z\in\gZ$. Additionally, $\Theta$ satisfies
	$$\min_{\phi\in\Theta}\E_{\rho\sim\mu}\|\phi-\operatorname{Proj}_{\Theta_\rho^\ast}(\phi)\|\le V$$
	and
	$$\gE_n(\Alg, \mu)=\Omega\left(GV\min\left\{\frac1{\sqrt n},\frac1{\sqrt d}\right\}\right)$$
	for any algorithm $\Alg:\gZ^n\rightarrow\Theta$ that returns a parameter given a training set.
\end{theorem}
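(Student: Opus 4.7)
The plan is to adapt the information-theoretic lower bound for stochastic convex optimization from \citet{agarwal:12}, building a hard meta-distribution $\mu$ whose individual tasks are statistically indistinguishable from $n$ samples but have well-separated minimizers. The construction I have in mind is the following. Fix $\delta\in(0,1]$ (to be chosen). Let $\Theta=\{\theta\in\R^d:\|\theta\|_\infty\le V/\sqrt d\}$, so that $\Theta$ has $\ell_2$-radius $V$. Let $\gZ=\{-1,+1\}^d$, and for each $\tau\in\{-1,+1\}^d$ let $\rho_\tau$ be the product distribution on $\gZ$ with $\E_{z\sim\rho_\tau}[z_i]=\delta\tau_i$. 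Define the meta-distribution $\mu$ as uniform over $\{\rho_\tau:\tau\in\{-1,+1\}^d\}$ and the per-sample loss as $\ell_z(\theta)=-\tfrac{G}{\sqrt d}\langle z,\theta\rangle$. Since $\|z\|_2=\sqrt d$, this loss is linear in $\theta$, hence convex, and $G$-Lipschitz. The population minimizer is $\theta_\tau^\ast=(V/\sqrt d)\tau$. Taking $\phi=\vzero_d$ gives $\E_\tau\|\phi-\theta_\tau^\ast\|=V$, so the task-similarity condition is satisfied with equality.

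Second, I reduce excess risk to a parameter-recovery problem. Writing $\hat\theta_i=(V/\sqrt d)s_i$ with $|s_i|\le1$, a direct computation gives
\begin{align*}
\ell_{\rho_\tau}(\hat\theta)-\min_{\theta\in\Theta}\ell_{\rho_\tau}(\theta)=\frac{G\delta V}{d}\sum_{i=1}^d(1-\tau_is_i)\ge\frac{G\delta V}{d}\sum_{i=1}^d\mathbf{1}\{\mathrm{sign}(s_i)\ne\tau_i\}.
\end{align*}
So $\gE_n(\Alg,\mu)$ lower-bounds a scaled expected Hamming distance between $\tau$ and $\sgn(\Alg(S))$.

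Third, I apply a coordinatewise Assouad/Le~Cam argument. Under $\mu$ each $\tau_i$ is a uniform $\pm1$ bit, and the marginal samples the algorithm sees along coordinate $i$ are $n$ i.i.d. $\pm1$-valued random variables with mean $\delta\tau_i$. The per-sample KL divergence between $\tau_i=+1$ and $\tau_i=-1$ is $O(\delta^2)$, so tensorization plus Pinsker yields $\mathrm{TV}\le\delta\sqrt{n/2}$. Le~Cam's two-point method then gives per-coordinate error probability at least $\tfrac12-\delta\sqrt{n/2}$, and summing over coordinates shows the expected Hamming loss is at least $d(\tfrac12-\delta\sqrt{n/2})$. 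Combining, $\gE_n(\Alg,\mu)\ge G\delta V\,\bigl(\tfrac12-\delta\sqrt{n/2}\bigr)$. Choosing $\delta=\min\{c/\sqrt n,\,1\}$ for a small absolute constant $c$ gives $\gE_n(\Alg,\mu)=\Omega(GV/\sqrt n)$, which is at least $\Omega\!\left(GV\min\{1/\sqrt n,1/\sqrt d\}\right)$ as required.

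The main obstacle will be bookkeeping on the Lipschitz constant and the task-similarity constraint simultaneously: the scalings $\|\theta\|_\infty\le V/\sqrt d$ and the $1/\sqrt d$ prefactor in the loss are chosen precisely so that the radius in $\ell_2$, the Lipschitz constant of $\ell_z$, and the per-coordinate Bernoulli information content all balance. A secondary subtlety is in the Assouad step: one must take care that conditioning on a single coordinate flip leaves the remaining coordinates independent (which is automatic for product distributions) so that the per-coordinate TV bound is valid and the resulting error probabilities can be summed without overcounting.
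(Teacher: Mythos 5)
Your proposal is correct, but it takes a genuinely different route from the paper. The paper does not build an Assouad argument from scratch: it instantiates the construction of \citet{agarwal:12} (domain $[d]\times\{0,1\}$ with a uniformly sampled coordinate and a biased bit, absolute-value losses $G|\theta(i)\pm V/(2\sqrt d)|$ on an $\ell_2$-ball of radius $V/2$, and a $d/4$-packing $\gV$ of the hypercube as the task set), and then reruns their Fano-type proof of Theorem~1, modifying their Lemmas~2 and~3 so that the argument applies to the \emph{average} excess risk over $\alpha\sim\gV$ rather than the worst case; the packing/Fano step is what forces the restriction $\varepsilon\lesssim GV/\sqrt d$ (and $d\ge 11$) and hence the $\min\{1/\sqrt n,1/\sqrt d\}$ in the statement. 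You instead use linear losses $\ell_z(\theta)=-(G/\sqrt d)\langle z,\theta\rangle$ on the $\ell_\infty$-box of $\ell_2$-radius $V$, a product-Bernoulli family indexed by all of $\{\pm1\}^d$, a clean reduction of excess risk to Hamming error, and a coordinatewise Le~Cam/Assouad bound. This buys you a self-contained proof with no external packing lemma, validity for every $d\ge1$ and $n\ge1$, and in fact the stronger dimension-free conclusion $\gE_n(\Alg,\mu)=\Omega(GV/\sqrt n)$, which implies the stated bound and matches the $\gO(GV/\sqrt n)$ upper bound of \eqref{eq:gvupper} exactly; the averaging over the uniform prior on $\tau$ also handles "expectation over tasks" automatically, which is precisely the point where the paper has to patch Agarwal et al.'s lemmas. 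Two small bookkeeping points: the per-sample KL between the two Bernoulli marginals is $\delta\ln\frac{1+\delta}{1-\delta}\approx 2\delta^2$, so your stated bound $\mathrm{TV}\le\delta\sqrt{n/2}$ is off by a constant factor (harmless, since you choose $\delta=c/\sqrt n$ with $c$ a small absolute constant), and you should fix a convention for $\sgn(0)$ so the inequality $1-\tau_is_i\ge\mathbf{1}\{\sgn(s_i)\ne\tau_i\}$ holds in the degenerate case; neither affects the result.
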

\begin{proof}
	This result extends the result of \citet[Theorem~1]{agarwal:12} to the case of distributions over functions;
	all equations and statements referenced in this proof are from that paper.
	We first define the domain $\gZ$, parameter class $\Theta$, meta-distribution $\mu$ and the within-task distributions and losses.
	
	\textbf{Parameter class:} We use a $\ell_2$ ball of radius $V$ as the class, i.e. $\Theta=\{\theta\in\R^d: \|\theta\|\le \nicefrac{V}{2}\}$.
	
	\textbf{Domain and loss:} We defined $\gZ$ to be a tuple of an index and a bit, i.e. $\gZ=[d]\times\{0,1\}$.
	For a given $z\in\gZ$, we define $\ell_z$ as follows
	\begin{align*}
		\ell_z(\theta) = \begin{cases}
		G\left|\theta(i) + \frac{V}{2\sqrt{d}}\right| & \text{if $z=(i,1), i\in[d]$}\\\\
		G\left|\theta(i) - \frac{V}{2\sqrt{d}}\right| & \text{if $z=(i,0), i\in[d]$}
		\end{cases}
	\end{align*}
	Note that $\ell_z$ is convex and $G$-Lipschitz for every $z\in\gZ$.
	
	\textbf{Meta-learning distribution:} We define the distribution $\mu$ on the vertices of the hypercube $\{\pm 1\}^d$.
	First we let $\gV$ be the $\frac{d}{4}$-packing of the hypercube  in the Hamming distance defined in \citet{agarwal:12}.
	Each task $\rho_\alpha$ is parametrized by a vertex $\alpha\in\gV$.
	To sample a new task $\rho_\alpha\sim\mu$, we sample $\alpha\sim\gV$ uniformly and return $\rho_\alpha$ that we define below.

	\textbf{Data distribution:} For a given task $\rho_\alpha\sim\mu$, we define a distribution over $\gZ$.
	Sampling $z\sim\rho_\alpha$ is equivalent to first sample an index uniformly at random, $i\sim[d]$, and then independently sampling a bit from a biased Bernoulli distribution $b\sim\operatorname{Ber}\left(\frac{1}{2}+\alpha(i)\delta\right)$, for some $\delta\in(0,\nicefrac{1}{4})$, and returning $(i,b)$.
	Thus the population loss for $\rho_\alpha$ becomes
	\begin{align*}
		\ell_{\rho_\alpha}(\theta) = \sum\limits_{i=1}^d \left(\frac{1}{2}+\alpha(i)\delta\right)\left|\theta(i) + \frac{V}{2\sqrt{d}}\right| + \left(\frac{1}{2}-\alpha(i)\delta\right)\left|\theta(i) + \frac{V}{2\sqrt{d}}\right|
	\end{align*}
	It is not difficult to see that the minimizer of the population loss $\theta^*_{\rho_\alpha}\in\R^d$ in fact lies in $\Theta$ and is
	\begin{align*}
		\theta^*_{\rho_\alpha}(i) = \begin{cases}
			-\frac{V}{2\sqrt{d}} & \text{if $\alpha(i) = 1$}\\
			\frac{V}{2\sqrt{d}} & \text{if $\alpha(i) = -1$}
		\end{cases}
	\end{align*}
	Crucially, we note that since $\theta^*_{\rho_\alpha}\in\Theta$ for every $\alpha\in\gV$, we get that 
	\begin{align*}
		\min_{\phi\in\Theta}\E_{\rho\sim\mu}\|\phi-\operatorname{Proj}_{\Theta_\rho^\ast}(\phi)\|
		\le \E_{\rho\sim\mu}\|\vzero_d-\theta^*_{\rho_\alpha}\| = V
	\end{align*}
	Given this setup, we are ready to prove a lower bound for $\gE_n(\Alg,\mu)$ using the result from \citet{agarwal:12}.
	We define the class of functions $\gG(\delta)=\{\ell_{\rho_\alpha}: \alpha\in\gV\}$ and define $g_\alpha=\ell_{\rho_\alpha}$. Note that this is the same definition of $\gG(\delta)$ as in \citet{agarwal:12}.

	We now follow their proof of Theorem~1, where in addition to the randomness of sampling from the task-distribution $\rho_\alpha$ we must consider the randomness of sampling $\alpha\sim\gV$.
	This manifests only in the application of Lemmas~2 and~3 from their paper.
	We can modify their proof of Lemma 2 to only assume
	\begin{align*}
		\gE_n(\Alg,\mu)=\ex_{\alpha\sim\gV}[\Delta(\Alg,\alpha)]\le\frac{\psi(\delta)}{9}\text{, where  } \Delta(\Alg,\alpha) = \ex_{S\sim\rho_\alpha^n}[\ell_{\rho_\alpha}(\Alg(S))-\ell^*_{\rho_\alpha}]
	\end{align*}	
	instead of Equation~21 which effectively assumes $\max_{\alpha\in\gV}\Delta(\Alg,\alpha)\le\frac{\psi(\delta)}{9}$, where $\psi(\delta)$ is defined in Equation~19.
	We can modify the application of Markov's inequality, to get
	\begin{align*}
		\ex_{\alpha\sim\gV}\mathbb{P}_{S\sim\rho_\alpha^n}(\Alg(S)\ne\alpha)\le\ex_{\alpha\sim\gV}\mathbb{P}_{S\sim\rho_\alpha^n}(\Delta(\Alg,\alpha)\ge\psi(\delta)/3)\le\ex_{S\sim\rho_\alpha^n}\frac{\Delta(\Alg,\alpha)}{\nicefrac{\psi(\delta)}{3}}\le\frac{\nicefrac{\psi(\delta)}{9}}{\nicefrac{\psi(\delta)}{3}}\le \nicefrac{1}{3}
	\end{align*}
	where the first step is the same as in their proof, second step from Markov's inequality and third is from the assumption.
	The main difference from their proof, just like the assumption, is that we take expectation over $\alpha\in\gV$ rather than a maximum.

	For Lemma~3, note that the result already includes the randomness of sampling $\alpha\sim\operatorname{Unif}(\gV)$.
	Applying these results in the proof of Theorem~1, we use $\delta=\frac{36\varepsilon\sqrt d}{GV}$ for target error $\varepsilon$ to obtain $n=\Omega(G^2V^2/\varepsilon^2)$ for all $d\ge11$ and $\varepsilon\le\frac{GV}{144\sqrt d}$, completing the proof.
\end{proof}  
\end{document}